\def\HiLi{\leavevmode\rlap{\hbox to \hsize{\color{gray!30}\leaders\hrule height .6\baselineskip depth .5ex\hfill}}}
\newcolumntype{L}[1]{>{\raggedright\let\newline\\\arraybackslash\hspace{0pt}}m{#1}}
\newcolumntype{C}[1]{>{\centering\let\newline\\\arraybackslash\hspace{0pt}}m{#1}}
\newcolumntype{R}[1]{>{\raggedleft\let\newline\\\arraybackslash\hspace{0pt}}m{#1}}
\definecolor{light-gray}{gray}{0.8}
\newcommand{\BBReach}{\textsf{BBReach}}
\newcommand{\nobi}[1]{{\color{red} #1}}
\newif\ifdraft
\begin{document}
\SetWatermarkText{\raisebox{12.5cm}{%
  \hspace{0.02cm}
\href{}{\includegraphics[scale=0.09]{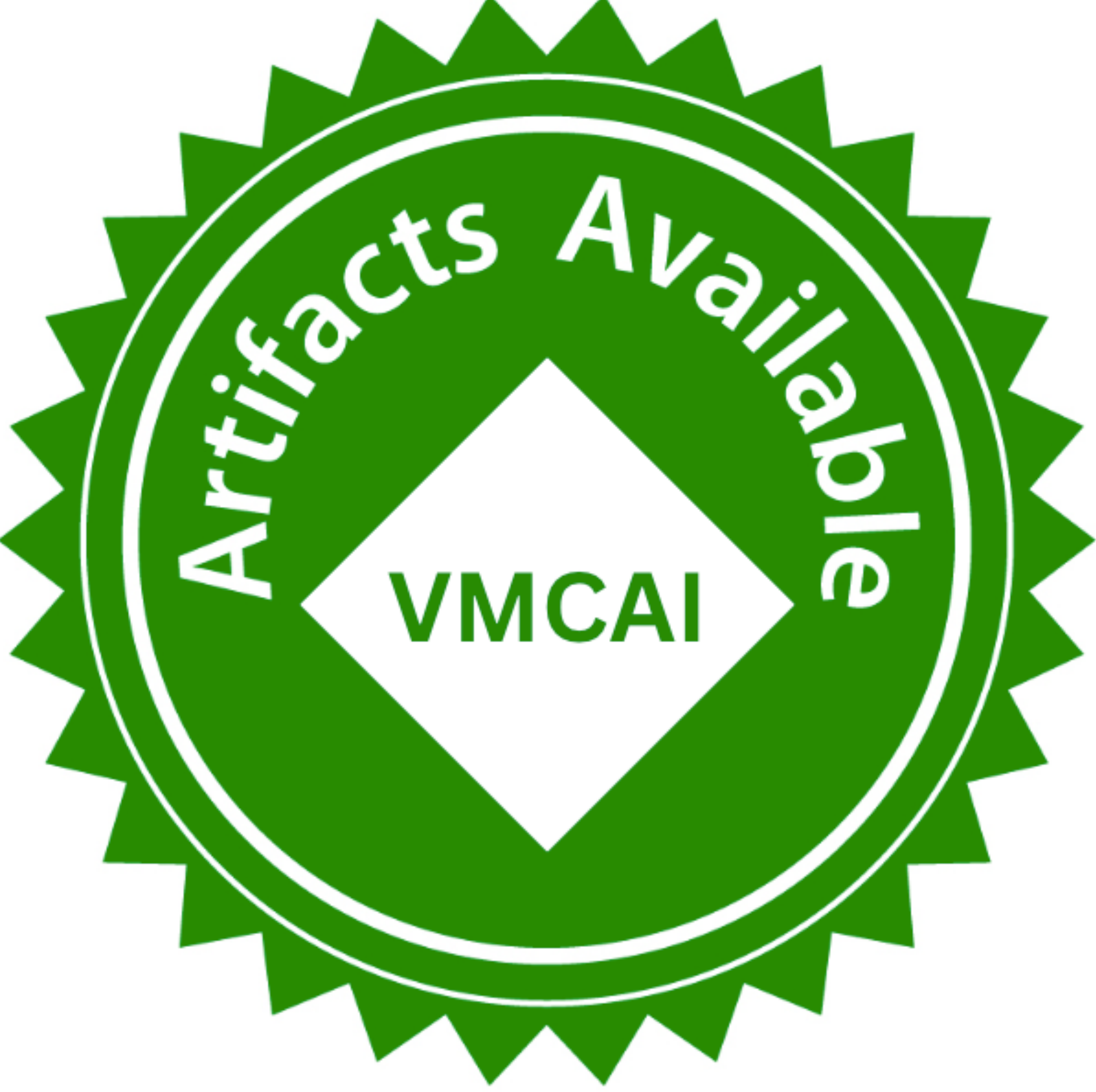}}
  \hspace{8cm}%
  \includegraphics[scale=0.09]{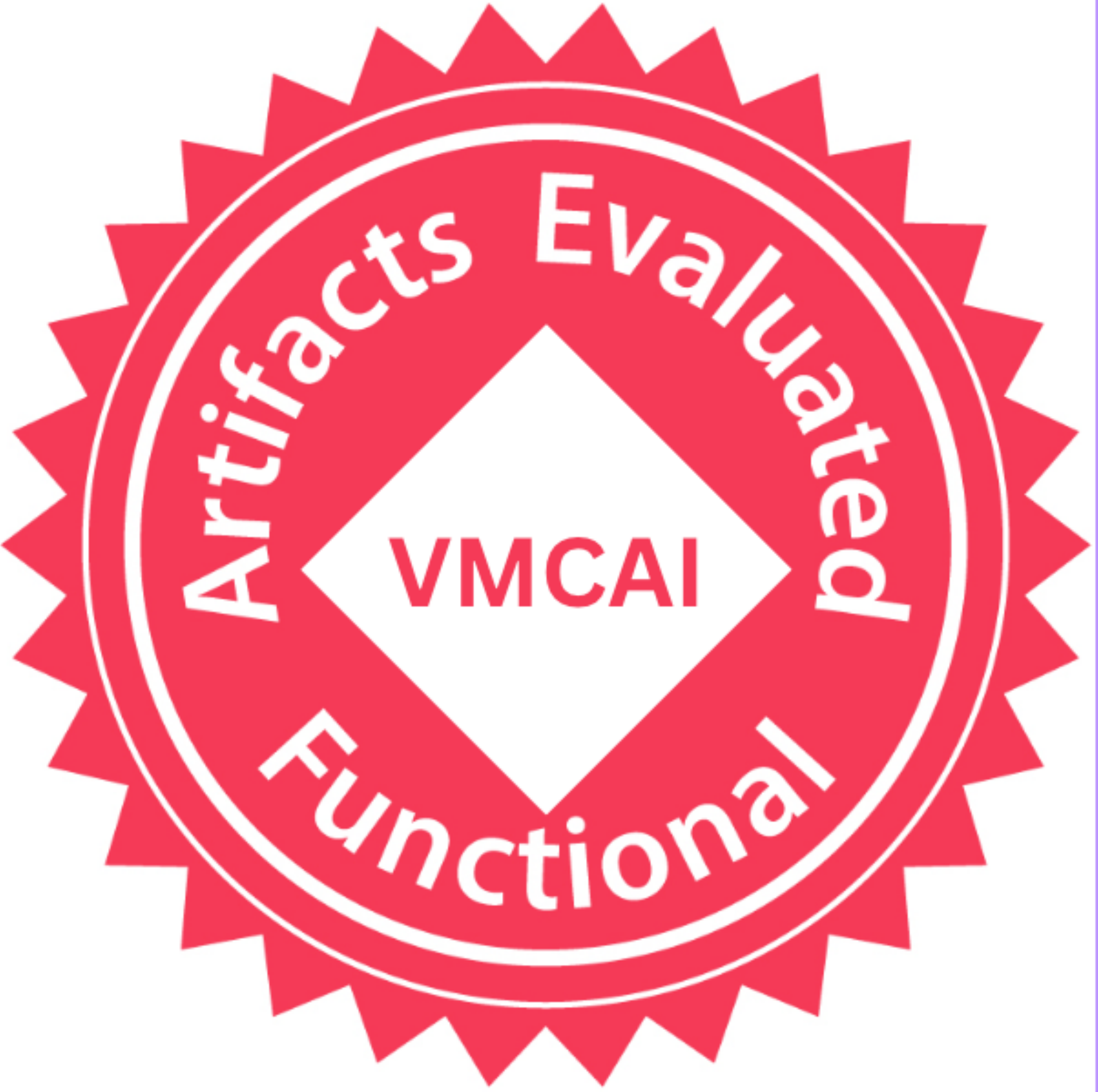}%
}}
\title{Taming Reachability Analysis of DNN-Controlled Systems via  Abstraction-Based Training}

\author{
 Jiaxu Tian\inst{1} \and Dapeng Zhi\inst{1} 
	\and Si Liu\inst{2} \and Peixin Wang\inst{3} \and Guy Katz\inst{4} \and Min Zhang\inst{1}
}

\institute{
Shanghai Key Laboratory of Trustworthy Computing,\\ East China Normal University, Shanghai, China
	\and ETH Zurich, Zurich, Switzerland \\   
	\and University of Oxford, Oxford, UK\\
	\and The Hebrew University of Jerusalem, Jerusalem, Israel\\
}

\maketitle

\begin{abstract}
	The intrinsic complexity of deep neural networks (DNNs) makes it challenging to verify not only the networks themselves but also the hosting DNN-controlled systems. Reachability analysis of these systems faces the same challenge. Existing approaches rely on over-approximating  DNNs using simpler polynomial models. However, they    
	suffer from low efficiency and large overestimation, and are restricted to specific types of DNNs. This paper presents a novel abstraction-based approach to bypass the crux of over-approximating DNNs in reachability analysis. Specifically, we 
	extend conventional DNNs by 
	inserting an additional abstraction layer, which abstracts a real number to an interval for training. The inserted abstraction layer ensures that the values represented by an interval are indistinguishable to the network for both training and decision-making. Leveraging this, we devise the first black-box reachability analysis approach for DNN-controlled systems, where trained DNNs are only queried as black-box oracles for the actions on abstract states. 
 Our approach is sound, tight, efficient, and agnostic to any DNN type and size. The experimental results on a wide range of benchmarks show that the DNNs trained by using our approach exhibit comparable performance, while the reachability analysis of the corresponding systems becomes more amenable with significant tightness and efficiency improvement over the state-of-the-art white-box approaches. 

\end{abstract}

\section{Introduction}
Deep neural networks (DNNs) have demonstrated their remarkable capability of driving systems to perform specific tasks intelligently in open environments. They determine optimal actions during interactions between the hosting systems and their surroundings.
Formally verifying DNNs can 
 provide safety guarantees~\cite{szegedy2014intriguing,gomes2016will,schmidt2021can}, which is,  however, 
difficult in practice due to their black-box nature and
lack of interpretability \cite{zhang2022qvip,baluta2021scalable}.
Furthermore, their  hosting systems aggravate the difficulty 
since determining system actions requires computations over nonlinear system dynamics \cite{dreossi2019verifai,sun2019formal}.



Reachability analysis, one of the powerful formal  methods, has been widely applied to the verification of continuous and hybrid   systems~\cite{alur1995algorithmic,bertsekas1971minimax,christakis2021automated}.
Its successful applications include 
invariant checking~\cite{hildebrandt2020blending,fang2021fast}, robust control~\cite{limon2005robust,schurmann2018reachset}, fault detection~\cite{scott2016constrained,su2017model},  set-based predication~\cite{althoff2016set,pereira2017overapproximative}, etc.
The essence of reachability analysis is to compute all reachable system states from given initial state(s), which can be used in various verification tasks such as model checking~\cite{baier2008principles}.
As an emerging approach to verifying DNN-controlled systems,
reachability analysis has already been shown to be promisingly effective~\cite{dutta2019reachability,fan2020reachnn,ivanov2021verisig}. 

\vspace{1ex}
\noindent \textbf{The Problem.} Compared to continuous hybrid systems, it is significantly more challenging to compute reachable states for DNN-controlled systems due to the embedded complex and inexplicable DNNs.
In addition to over-approximating nonlinear system dynamics~\cite{chen2013flow,frehse2011spaceex,lygeros1999controllers}, one also has to over-approximate the embedded DNNs for computing overestimated action sets~\cite{huang2022polar,ivanov2021verisig} of such systems. Specifically, given a set $S_i$ of continuous system states at time step $i$,\footnote{Continuous time is uniformly discretized into time steps.} one first
overestimates a set $\tilde{A}_i$ of actions that will be applied to 
$S_i$ by over-approximating the neural network on $S_i$, and then 
 overestimates a set $\tilde{S}_{i+1}$ of successors by applying $\tilde{A}_i$ to $S_i$ using over-approximated system dynamics. 
We consider such dual 
 over-approximations as \textit{white-box} approaches since all the information of DNNs, such as architectures, activation functions, and weights, shall be known before defining appropriate over-approximated models \cite{dutta2019reachability,fan2020reachnn,ivanov2021verisig}. Consequently, these approaches are restricted to certain types of DNNs.   
For instance, 
 Verisig 2.0~\cite{ivanov2021verisig} does not support neural networks with the ReLU activation functions; Sherlock~\cite{dutta2019reachability} is only applicable to ReLU-based networks; ReachNN*~\cite{fan2020reachnn} is not scalable against the network size and introduces more overestimation; Polar~\cite{huang2022polar} also suffers from the efficiency problem when dealing with networks with differentiable activation functions (e.g., Tanh). 
 Moreover, dual over-approximations introduce large overestimation accumulatively, which results in a considerable number of unreachable states in the overestimated sets.


\vspace{1ex}
\noindent \textbf{Our Approach.}
We present a novel abstraction-based approach for bypassing the over-approximation of  DNNs in computing the reachable states of DNN-controlled systems. Our approach introduces an \textit{ abstraction layer} into the neural network before training, which abstracts concrete system states into abstract ones. This abstraction ensures that concrete states that are abstracted into the same state share the same action determined by the trained DNN. 
Leveraging this property, we can therefore compute the actions of a set of concrete states by mapping them to the corresponding abstract states and by feeding the abstract states into the trained DNNs to query for the output action. As DNNs are used as \textit{black-box} oracles during the entire process, it suffices to know how system states are abstracted and to query the trained DNNs with the abstract states for the actions. Hence, the over-approximation of DNNs for computing actions is decently bypassed. Consequently, the overestimation due to the embedded network is avoided and no assumption is made on a network including its size, weight, architecture, and activation function.

The abstraction-based training also allows us to avoid state explosion during the computation of reachable states. This is because adjacent abstract states, e.g., the two intervals $[0,1]$ and $[1,2]$, can be efficiently aggregated, e.g., to $[0,2]$, which substantially restrains the exponential growth in the number of computed reachable states. Additionally, we propose a parallel optimization via initial-set partitioning, which
 further accelerates the process of computing reachable states.

We have implemented our proposed approach into a tool called \BBReach~and extensively evaluated over a wide range of benchmarks. The experimental results show that DNNs trained by using our abstraction-based approach achieve competitive performance in terms of system cumulative reward. Our approach provides a black-box alternative to the reachability analysis of DNN-controlled systems, which bypasses the crux of DNN over-approximation and significantly improves  the 
state-of-the-art white-box counterparts with respect to 
 the tightness and efficiency in reachable state computation.

\vspace{1ex}
\noindent\textbf{Contributions.} Overall, we provide: 
\begin{enumerate}
	
\item a novel abstraction-based training approach of DNNs, which 
mitigates the limitation of DNN over-approximation in the reachability analysis of DNN-controlled systems, without sacrificing the performance of trained DNNs  (Section~\ref{sec:abs-train});

\item the first, sound black-box approach for the reachability analysis of trained DNN-controlled systems, which not only enhances computational tightness and efficiency, but also are compatible with various DNNs  (Section~\ref{sec:reach_analysis_approach}); and 

\item a prototype \BBReach\ and an extensive assessment, which shows that \BBReach\ improves existing white-box tools with respect to both the precision of results and the computational efficiency (Section~\ref{sec:exp}).
\end{enumerate}

\section{Preliminaries}



\subsection{DNN-Controlled Systems}
\label{subsec:drl}
\vspace{-1mm}
A DNN-controlled system is typically a cyber-physical system where a DNN is planted and trained as a decision-making controller. 
It can be modeled as a 6-tuple ${\cal D}=\langle S,S^0,A,\pi,f,\delta\rangle$, where $S$ is the set of $n$-dimensional system states on $n$ continuous variables,  $S^0\subseteq S$ is the set of initial states, $A$ is the set of system actions,  $\pi:S\rightarrow A$ is a policy function realized by the DNN in the system, $f:S\times A\rightarrow \dot{S}$ is a non-linear continuous environment dynamics represented by an
ordinary differential equation (ODE)~\cite{2019nonlinear}
that maps the current state and control input (i.e., action) into the derivative of states with respect to time $t_c$, and $\delta$ is the time step size.

In a DNN-controlled system, an agent reacts to the environment over time. The time is usually discretized by a time scale $\delta$ called the time step size, assuming that actions during each time scale $\delta$ are constants~\cite{park2021time}.
At each time step $i\in \mathbb{N}$, the agent first observes a state $s_i$ from the environment and feeds the state into the network to compute a constant action $a_i$. 
The agent then transits to the successor state $s_{i+1}$ by performing $a_i$ on $s_i$ according to some environment dynamics $f$. 
\begin{wrapfigure}{R}{0.7\textwidth}
	\vspace{-8mm}
	\centering
	\includegraphics[width=0.7\textwidth]{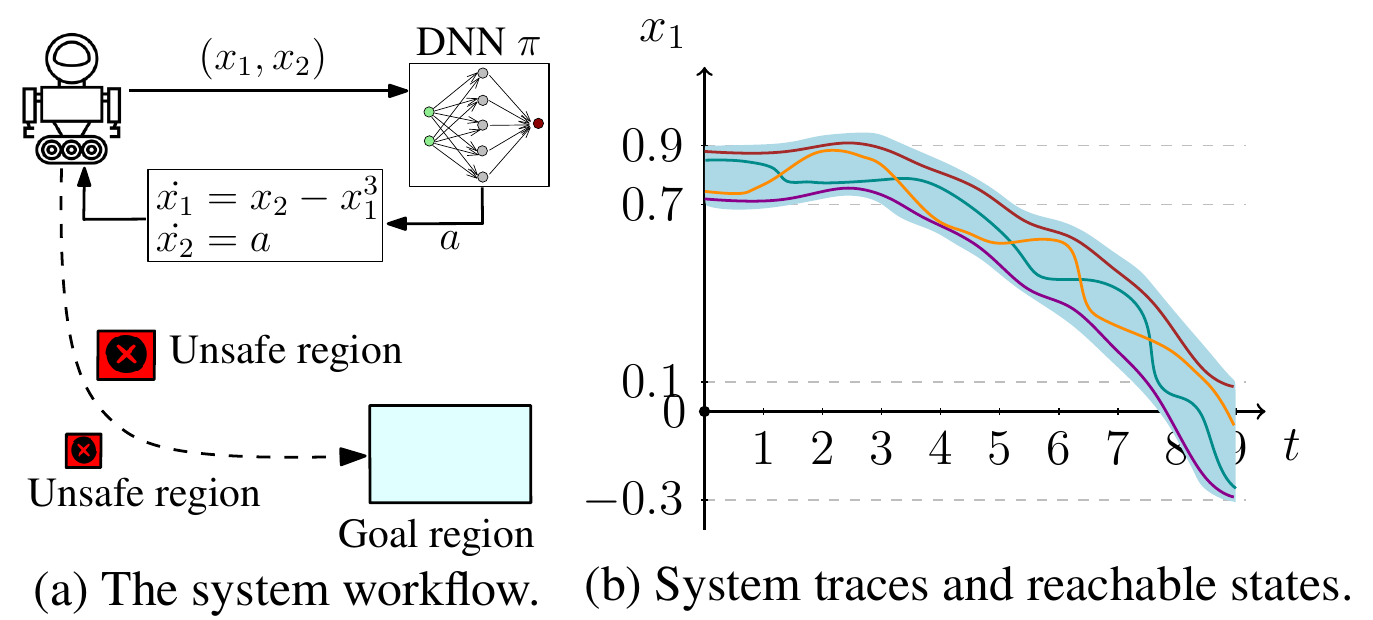}
	\caption{The workflow of the DNN-controlled system in Example~\ref{exa:DRLexample}, the execution traces (colored lines) and an over-estimated set of reachable states (blue region) with respect to the dimension of $x_1$.}
	\label{fig:traj}
	\vspace{-7mm}
\end{wrapfigure}
During the training phase of DNN-controlled systems, the agent also receives a reward $r_i$ which is determined by a reward function $r_i = R(s_i,a,s_{i+1})$ from the environment after each state transition. Once the task is finished, e.g., the agent reaches the goal region at time step $T$, we obtain the sequence of traversed system states from an initial state, called a \textit{trace}, and the cumulative reward $\sum_{i=0}^{T}r_i$ which quantitatively measures the system performance.



\vspace{-1mm}
\begin{example}[A DNN-Controlled System]
\label{exa:DRLexample}
Figure \ref{fig:traj}(a) shows a  DNN-controlled system where a two-dimensional agent moves from the region  $x_1 \in [0.7, 0.9]$, $x_2 \in [0.7, 0.9]$ to the goal region $x'_1 \in [-0.3, 0.1]$, $ x'_2 \in [-0.35, 0.05]$, trying to avoid the red unsafe regions. The environment dynamics  $f$ is defined by the following ODEs:
\begin{align}
\dot{x_1} =  x_2 - x_1^{3}\quad \quad 
\dot{x_2} =  a
\end{align}
\noindent The action $a=\pi(x_1,x_2)$ is computed by applying the DNN $\pi$ to the values of $x_1$ and $x_2$. Based on $a$ and $f$, the successor state $s'$ can be computed for the agent to move. Figure \ref{fig:traj}(b) shows 
the traces (colored lines) of the system from some selected concrete initial states and an over-approximated set of reachable states (blue area) on the $x_1$ dimension from all the initial states. 

\end{example}
 

The DNN planted in a system must be trained first so that it can determine optimal actions to complete a task. 
After making a decision, a loss is computed by a predefined loss function based on the reward that the agent receives for the decision. The parameters in the neural network are updated based on the loss by backpropagation \cite{lecun2015deep}. 
The objective of the training phase is to maximize the cumulative reward. Once the training is completed, the network implements a state-action policy function that maps each system state to its optimal action. It drives the system to run and to interact with the environment.

\vspace{-1mm}
\subsection{Reachability Problem of DNN-Controlled Systems}

Given a DNN-controlled system $\cal D$,  whether or not a state is reachable is known as the \textit{reachability problem}. 
The verification of safety properties can be reduced to the reachability problem. For instance, one can verify whether a system never moves to unsafe states or not, such as those in Example~\ref{exa:DRLexample}. Unfortunately, the  problem is \textit{undecidable} even for conventional cyber-physical systems that are controlled by explicit programmable rules, let alone uninterpretable neural networks. This is because such 
systems are more expressive than two-counter state machines whose reachability problem is proved to be undecidable~\cite{minsky1967computation}.

When the set of initial states is a singleton, it is straightforward to compute 
the reachable state at any given time $t_c$. 
Let $\delta$ be a time scale during which system actions can be considered constant. Given an initial state $s_0$, the state at time $t_c=k\delta+t'_c$ 
for some integer $k\geq 0$ and $0\leq t'_c\leq \delta$ is defined as follows: 
\vspace{-2mm}
\begin{equation}\label{for:single}
\nonumber
\setlength{\abovedisplayskip}{4pt}
\setlength{\belowdisplayskip}{4pt}
	\varphi_f(s_0,\pi,t_c)  = s_k + \int_{0}^{t'_c} f(s,\pi(s_{k})) dx, 
\end{equation}
where $s_{i+1}=s_{i}+\int_{0}^{\delta} f(s,\pi(s_{i})) dx$ for all $i\in \{0,\ldots,k-1\}$. Intuitively, we can compute the state $s_{i+1}$ at the $(i+1)$-th time step based on the state $s_{i}$ at its preceding time step $i$ and the corresponding action $\pi(s_{i})$. The state at $t_c$ can be computed based on $s_k$, plus the offset caused by performing action $\pi(s_k)$ on state $s_k$ with $t'_c$ time scale.


\begin{definition}[Reachable States of DNN-controlled Systems]\label{def:reach}
Given a DNN-controlled system ${\cal D}=\langle S,S^0,A,\pi,f,\delta\rangle$, the sets of all the reachable states of the system at and during time $t_c$ are denoted as  
$\mathit{Reach}^{t_c}_f(S_0)$ and 
$\mathit{Reach}^{[0,t_c]}_f(S_0)$, respectively. We have $\mathit{Reach}^{t_c}_f(S_0)=\{\varphi_f(s,\pi,t_c)|s\in S_0\}$ and   $\mathit{Reach}^{[0,t_c]}_f(S_0)=\{\varphi_f(s,\pi,t)|s\in S_0,t\in [0,t_c]\}$. 
\end{definition}

	\begin{wrapfigure}[8]{r}{0.65\textwidth}
	\centering
		\vspace{-9mm}
	\includegraphics[width=.64\textwidth]{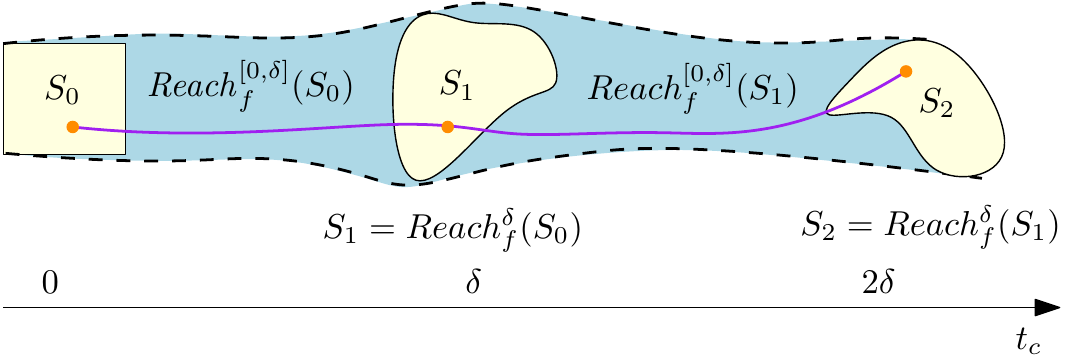}
\vspace{-3mm}
	\caption{Reachable states of DNN-controlled systems.}
	\label{fig:reachable_states_dnn_control}
\end{wrapfigure} 
Figure \ref{fig:reachable_states_dnn_control} depicts an example of the reachable states from $S_0$. 
For each time step $i$, we compute the set 
$\mathit{Reach}^{[0,\delta]}_f(S_i)$ of all the reachable states during the time period from $i$ to $i+1$. The actions used for computing $\mathit{Reach}^{[0,\delta]}_f(S_i)$ are the constants determined by the DNN $\pi$ on the states in $S_i$. 
In particular, we 
compute the set $S_{i+1}=\mathit{Reach}^{\delta}_f(S_i)$ of the reachable states at step $i+1$. Note that  $S_{i+1}$ is a subset of $\mathit{Reach}^{[0,\delta]}_f(S_i)$. We need to compute $S_{i+1}$ independently from $\mathit{Reach}^{[0,\delta]}_f(S_i)$ because 
it is the basis of computing the reachable states in next step. 

The procedure depicted in Figure \ref{fig:reachable_states_dnn_control} indicates that the problem of computing $\mathit{Reach}^{[0,t_c]}_f(S_0)$ can be reduced to the problem of computing one-time-step reachable states, i.e.,   $\mathit{Reach}^{[0,\delta]}_f(S_0)$ and 
 $\mathit{Reach}^{\delta}_f(S_0)$. 
 However, the reduced problem is still intractable. This is because $S_0$ is usually an infinite set, meaning that it is impractical to enumerate each  state in $S_0$, feed it into the DNN to compute the corresponding action, and then compute the state by Formula $\ref{for:single}$ for the set $\mathit{Reach}^{\delta}_f(S_0)$.  
Computing the states in 
$\mathit{Reach}^{[0,\delta]}_f(S_0)$ 
is even more challenging due to the continuous time in $[0,\delta]$.

\section{Motivation}
The combination of nonlinear dynamics and neural network controllers makes the calculation of $Reach_f^{[0,\delta]}(S_0)$  intractable. This is because the function $\varphi_f$ (Formula~\ref{for:single}) can not be expressed in a known closed form for most nonlinear dynamics $f$~\cite{chen2012taylor}.
Additionally, a DNN $\pi$ neither can be replaced by a known form equivalent function. A pragmatic solution is to compute tight over-approximation for $\varphi_f$ and $\pi$.
Most of the state-of-the-art approaches, such as Verisig~\cite{ivanov2020verifying}, Polar~\cite{huang2022polar}, and ReachNN~\cite{huang2019reachnn}, adopt this strategy.


\begin{wrapfigure}[11]{r}{0.55\textwidth}
\vspace{-9mm}
		\centering
	\includegraphics[width=0.55\textwidth]{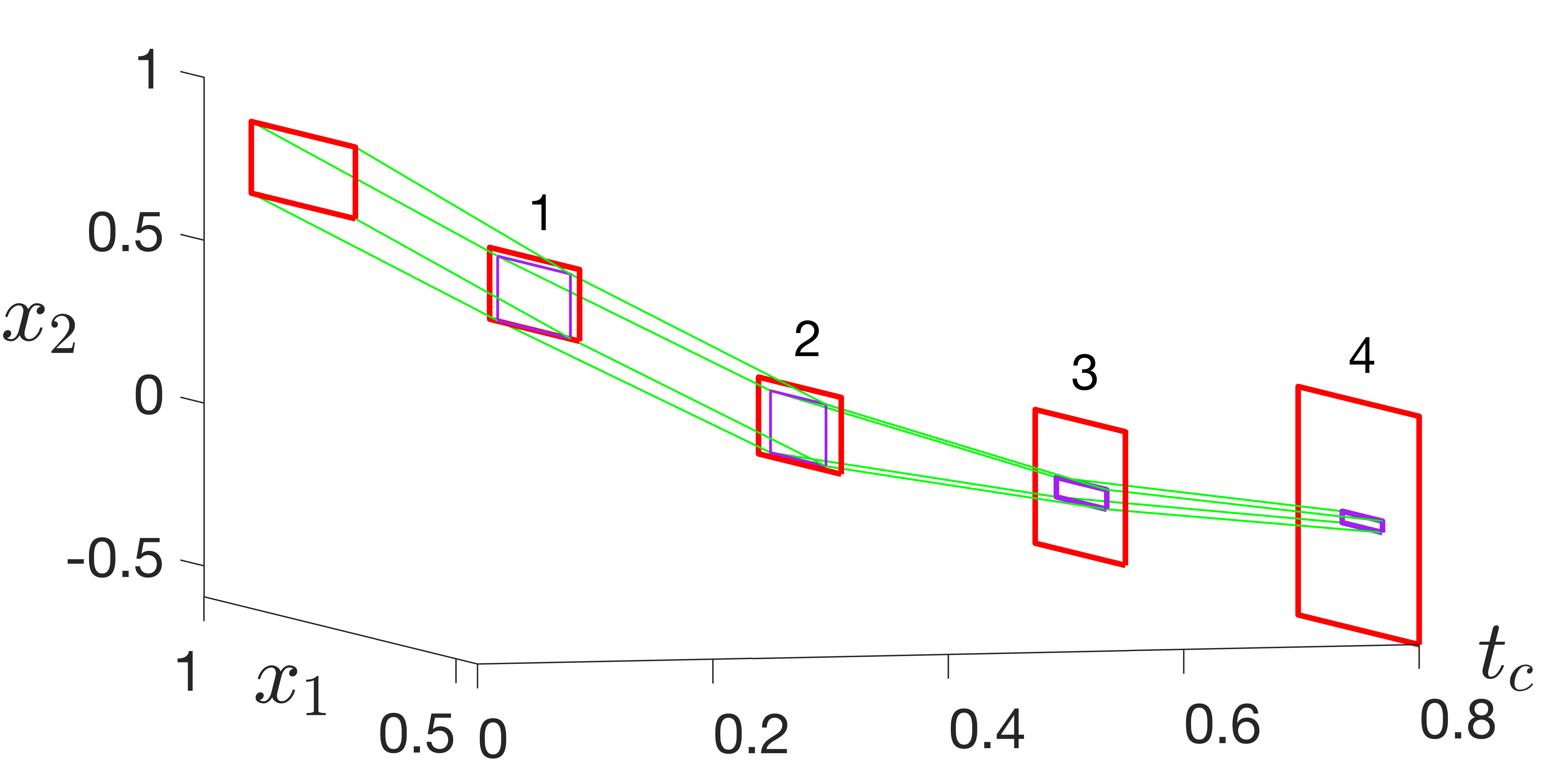}
		\caption{An example of overestimation blowup of computed reachable states.}
		\label{fig:range_explosion}
 \vspace{-7mm}
\end{wrapfigure}

Without loss of generality, we show the process of over-approximating $\mathit{Reach}^{\delta}_f(S_0)$ in Example~\ref{exa:DRLexample} using Polar. 
Given a set $S_0$ of states, Polar first over-approximates the neural network using a Taylor model $(p,I_r)$~\cite{makino2003taylor} on domain $S_0$ such that $\forall s \in S_0, \pi(s) \in p(s) + [-\epsilon,\epsilon]$, where $p$ is a polynomial over the set of state variables $x_1,\dots,x_n$ such as $p(x_1,x_2) = 0.5+ 0.1x_1+ 0.6x_1x_2+0.3x_1^2x_2$ and $I_r = [-\epsilon,\epsilon]$ is called the remainder interval. The range of $\pi(s)$ can be  overestimated based on the Taylor model. Next, Polar over-approximates the solution of environment dynamics $\varphi_f$  using another Taylor model  over domains $s_0 \in S_0, \pi(s_0) \in p(s_0) + [-\epsilon,\epsilon], t_c \in [0,\delta]$ and obtains $x_1'\in p_1(x_1,x_2,t_c)+[-\epsilon_1,\epsilon_1], \ x_2' \in p_2(x_1,x_2,t_c)+[-\epsilon_2,\epsilon_2]$. Finally, Polar produces an overestimated set of  $S_1$ at time $\delta$ based on $x_1'$ and $x_2'$. A smaller range of $I_r$ means less over-approximation error.



Suppose the initial region in Example~\ref{exa:DRLexample} is $x_1 \in [0.7,0.9], x_2\in [0.7,0.9]$. The overestimated reachable states can be calculated over 4 time steps according to the aforementioned method,  which are depicted as red boxes (\textcolor{red}{$\Box$}) in Figure~\ref{fig:range_explosion}. 
For comparison, Figure~\ref{fig:range_explosion} also shows the reachable states by simulation with 1000 samples, which are shown as the small violet boxes (\textcolor{violet}{$\Box$}). 
We observe that the overestimation is amplified at the third and the fourth time step. At the third time step, the calculated remainder interval of the Taylor model for network is $[-0.98,0.98]$ while the one at the fourth time step is $[-4.47,4.47]$. Correspondingly, the remainder intervals of the Taylor model for dynamics are $[-0.17,0.17]$ and $[-0.46,0.46]$ at the third and the fourth time step. The overestimation is accumulated and amplified step by step.

The above example shows that overestimation is mainly introduced by the over-approximation of the DNN. We further observe that if we could group the states in $S_0$ into several  subsets such that all the states in the same subset have the same action according to $\pi$, we do not need to over-approximate $\pi$  but, instead, replace $\pi(s)$ with its corresponding action. That is, if we know that all the states in a set $S'_0$ share the same action, e.g., $a$, according to $\pi$, the problem of computing $\mathit{Reach}^{\delta}_f(S'_0)$ can be simplified to solving the following problem:
\begin{equation}
\setlength{\abovedisplayskip}{1pt}
\setlength{\belowdisplayskip}{2pt}
\bigcup_{s'_0\in S'_0} \{s'_0+\int_{0}^{\delta}f(s'_0,a)dx\}.\label{for:post}
\end{equation}

\newpage 
Naturally, we only need to over-approximate $\varphi_f$ to solve the above problem. Therefore, we identify a condition of bypassing over-approximating $\pi$: $S_0$ can be divided into a finite number of subsets such that the states in the same subset have the same action according to $\pi$. This will be elaborated in our following abstraction-based approach.

\section{Abstraction-Based Training}
\label{sec:abs-train}
Given a DNN $\pi$ and a set $S_0$ of system states, it is almost intractable to group the states in $S_0$ that have the same action according to $\pi$. 
It becomes even worse when actions are continuous, where each state in $S_0$ may have a different action from others. Instead of calculating these states \emph{ex post facto}, we propose an \emph{ex ante} approach by abstraction-based training,  in which  system states are first grouped by abstraction before training, and a trained DNN provably yields a unique action for the states in the same group. 

\subsection{Approach Overview}

The process of grouping a set of system states and making them indistinguishable to neural networks is called \emph{abstraction}. A group is considered as an abstract state. 
The indistinguishability of the states in the same group guarantees that a DNN computes a unique action for those states. 
This idea is inspired by the abstraction approaches in formal methods, by which system states are abstracted to reduce state space and improve verification scalability without losing the soundness of verification results~\cite{cousot1977abstract}. 
The same idea is also studied in the AI communities. State abstraction has been proved useful for conventional Reinforcement Learning (RL) \cite{singh1995reinforcement,akrour2018regularizing,abel2019theory} and recently applied to Deep RL for training DNN controllers~\cite{jin2022cegar}. 
Studies show that one can train nearly optimal system policies via approximate state abstraction, while the trained policies are more concise and amenable for reasoning and verification than those trained on concrete states \cite{jin2022cegar,abel2019theory}.


To implement state abstraction into deep learning, we extend ordinary DNN architectures by introducing an \emph{abstraction layer}  between the input layer and the first hidden layer. This layer is used to map concrete system states in a group to the same abstract representation, which is propagated throughout the hidden layers 
\begin{wrapfigure}[11]{r}{0.45\textwidth}
	\vspace{-7mm}
	\centering
	\includegraphics[width=0.38\textwidth]{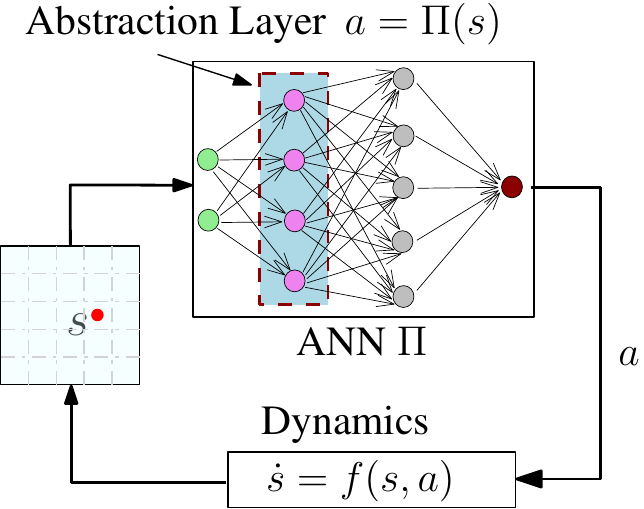}
	\vspace{-2mm}
	\caption{Abstraction-based training.}
	\label{fig:asdrl}
	\vspace{-5mm}
\end{wrapfigure}
for training. 
We call a neural network that contains such an abstraction layer an \textit{abstract neural network} (ANN). Note that an ANN is a special model of DNN. In what follows, we call the systems with ANN controllers \emph{ANN-controlled systems} to differ from those  controlled by conventional DNNs. 


%

The training of ANNs is almost the same as for conventional DNNs. Figure~\ref{fig:asdrl} shows the training workflows with  ANNs.  
We can simply replace DNNs with ANNs in existing training algorithms, such as Deep Q-Network (DQN) \cite{DBLP:journals/nature/MnihKSRVBGRFOPB15} and Deep Deterministic Policy Gradient (DDPG)~\cite{DBLP:journals/corr/LillicrapHPHETS15}, as 
the inserted abstraction layers in ANNs are invisible to these algorithms.

Therefore, an algorithm that supports training DNN-controlled systems can be seamlessly adapted to train ANN-controlled systems. 
When applying these algorithms to ANNs, the only difference is that we need to freeze the parameters on the edges between the input layer and the abstraction layer because they are determined and fixed according to the way in which system states are abstracted. 
Parameter freezing is a common operation in deep learning and is supported by most of the training platforms such as TensorFlow \cite{abadi2016tensorflow} and PyTorch \cite{paszke2019pytorch}. After encoding the abstraction layer and freezing the parameters, a network can be trained just like conventional DNNs by these training algorithms.





\vspace{-1mm}
\subsection{Interval-Based State Abstraction}
\label{subsec:interval_based_abs}
\vspace{-1mm}
We propose a general approach for encoding interval-based abstractions into equivalent abstraction layers. Interval-based state abstraction is a very primitive, yet effective abstraction approach. In the domain of abstract interpretation~\cite{cousot1977abstract}, it is known as \emph{interval abstract domain} and has been well studied for system \cite{afzal2019veriabs} and program verification \cite{heo2019resource}, as well as neural network approximation \cite{wang2022interval}. By interval-based abstraction, the domain of each dimension is evenly divided into several intervals. The Cartesian product of the intervals in all the dimensions constitutes a finite and discrete set, with each element representing an infinite set of concrete states. 

\begin{definition}[Interval-Based State Abstraction]
	\label{def:abs_fun}
	Given an $n$-dimensional continuous state space $S$ and an abstract state space $S_\phi$ obtained by discretizing $S$ based on an abstraction granularity $\gamma$, for every concrete state $s=(x_1, \dots, x_n) \in S$  and abstract state $s_\phi = (l_1,u_1,\dots,l_n,u_n) \in S_\phi$, the interval-based abstraction function $\phi:S\rightarrow S_\phi$ is defined as $\phi(s) = s_\phi$ if and only if for each dimension $1 \le i \le n: l_i \le x_i < u_i$.
\end{definition} 
\looseness=-1
Specifically, the abstract state space $S_\phi$ is obtained by dividing each dimension in the original $n$-dimensional state space $S$ into a set of intervals, which means that each abstract state can be represented as a $2n$-dimensional vector $(l_1,u_1,\dots,l_n,u_n)$.
We also call the $2n$-dimensional vector as interval box.
In what follows, an interval box is used to represent a set of concrete states that fall into it. That is, for a $2n$-dimensional vector $(l_1,u_1,\dots,l_n,u_n)$, we use it to represent the set of $n$-dimensional concrete states $\{(x_1, \dots, x_n) \mid l_i \le x_i < u_i, \forall 1 \le i \le n\}$. In this work, we divide the state space uniformly for better scalability so that we do not need extra data structure to store the mapping between $S$ and $S_\phi$. More specifically, let $L_i$ and $U_i$ be the lower and upper bounds for the $i$-th dimension of $S$. We  define the abstraction granularity as an $n$-dimensional vector $\gamma=(d_1,d_2,\ldots,d_n)$, and  then   evenly divide the $i$-th dimension into $(U_i-L_i)/d_i$ intervals.

\begin{wrapfigure}[7]{r}{0.6\textwidth}
	\centering
	\includegraphics[width = 0.6\textwidth]{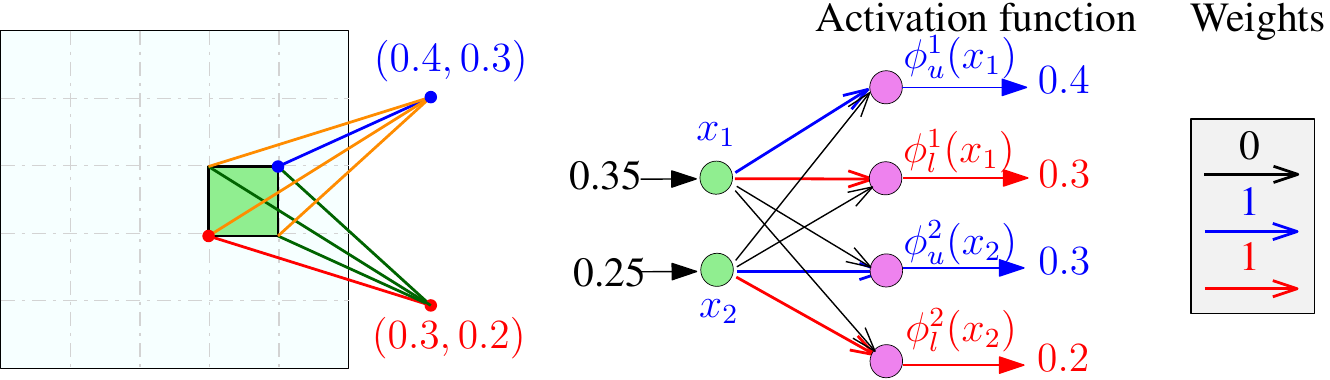}
	\vspace{-6mm}
	\caption{An example of defining abstraction layers.}
	\label{fig:abstraction_layer_encode}
	\vspace{-7mm}
\end{wrapfigure}

An interval-based abstraction can be naturally encoded as an abstraction layer. The layer consists of $2n$ neurons, each of which represents an element in the $2n$-dimensional vector $(l_1,u_1,\dots,l_n,u_n)$. Each neuron has an activation function in either of the following two forms:  
\begin{align}
\nonumber
\phi_l^i(x_i)= L_i + \lfloor \frac{(x_i-L_i)}{d_i}\rfloor d_i, \quad  
\phi_u^i(x_i)= L_i + \lfloor \frac{(x_i-L_i+d_i)}{d_i}\rfloor d_i
\end{align}
for converting the value $x_i$ in a concrete state to its lower and upper bounds, respectively. The sign $\lfloor \cdot \rfloor$ is the floor function.
The weights of the edges connecting the $i$-th neuron in the input layer to the $(2i-1)$-th and $2i$-th neurons in the abstraction layer are assigned a value of 1, whereas the weights of all other edges are set with 0.


\begin{example}
	Suppose that the ranges of both $x_1$ and $x_2$ in Example \ref{exa:DRLexample} are $[0,0.5]$, and they are evenly partitioned into $5$ intervals. 
	The state space $[0,0.5]\times [0,0.5]$ is then uniformly partitioned into $25$ interval boxes, as shown in Figure~\ref{fig:abstraction_layer_encode}. 
	A concrete state such as $(0.35, 0.25)$ is mapped to an \textit{interval box}  represented by the corresponding lower bounds $(0.3,0.2)$ of the first dimension and upper bounds $(0.4,0.3)$ of the second dimension. 
	
	This abstraction can be realized by an abstraction layer, where there are four  neurons and their activation functions are $\phi_u^1(x) = \phi_u^2(x)=\lfloor \frac{x+0.1}{0.1}\rfloor\times 0.1$ and $\phi_l^1(x) = \phi_l^2(x) =\lfloor \frac{x}{0.1}\rfloor\times 0.1$, respectively. 
 \vspace{-2mm}
\end{example}

\section{Abstraction-Based Reachability Analysis}
\label{sec:reach_analysis_approach}

\vspace{-2mm}
\subsection{Approach Overview} \label{sec:overview}
With the abstraction layer, we propose our abstraction-based black-box reachability analysis approach for ANN-controlled systems. 
Given an ANN-controlled system, a set $S_0$ of initial states and a maximal time step $T$, our task is to calculate a sequence of over-approximation sets consisting of interval boxes, denoted by  $X_0,X_1,\dots,X_T$,
which are over-approximations of the actually reachable state sets $S_0,S_1,\dots,S_T$ with $S_{t+1} = Reach_f^{\delta}(S_t), 0 \le t < T$. The overall process is presented in Algorithm~\ref{alg:reachsets_cal}. It is an iterative process of calculating 
 an over-approximated set $X_{t}$
 of states that are reachable from a set $X_{t-1}$ of states after time $\delta$. After we determine the range of $\pi(s)$ over $s\in X_{t-1}$, the reachable states during the time slot $(t\delta,(t+1)\delta]$ can be over-approximated as a continuous system without a neural network. In what follows, we focus on the computation of the over-approximation sets $X_0, X_1,...X_T$.

	\begin{figure}[t]
	\centering
	\includegraphics[width=1\textwidth]{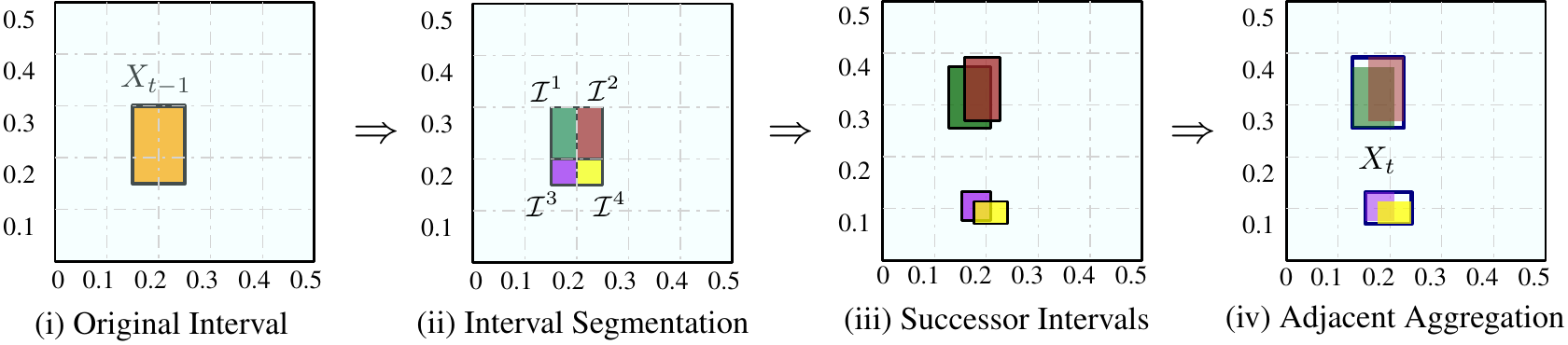}
	\vspace{-5mm}
	\caption{An example of over-approximating one-step reachable states.}
	\vspace{-4ex}
	\label{fig:onestep}
\end{figure}


Figure~\ref{fig:onestep} depicts an example of one time-step iteration. 
Without loss of generality, we suppose that $X_{t-1}$ is a singleton, e.g., $X_{t-1} = \{I\}$, where $I$ is an interval box. 
We segment $I$ into four smaller interval boxes (Figure~\ref{fig:onestep}(ii) and Line 5 of Algorithm~\ref{alg:reachsets_cal}) based on the abstraction function $\phi$ that is used for training the network. 
\begin{wrapfigure}[18]{r}{0.6\textwidth}
	\begin{algorithm}[H]
		\setlength{\floatsep}{1mm}
		\caption{Overall process.}
		\label{alg:reachsets_cal}
		\SetKwInOut{Input}{Input}
		\SetKwInOut{Output}{Output}
		\Input{Initial set $S_0$, ANN $\pi$, step size $\delta$, dynamics $f$,
			abstraction function $\phi$, maximal time step $T$}
		\Output{Over-approximation sets $\bigcup_{t=1}^{T}X_t$}
		
		Compute $I_0$ satisfying $S_0 \subseteq I_0$, $X_0 \leftarrow [I_0]$\\ 
		\ForEach{$t$ in $\{1,..., T\}$}{
			interval\_arr $\leftarrow$ \{\}\\
			\ForEach{$I$ in $X_{t-1}$}{
				$\mathbf{B}_{I}\leftarrow $  \textit{segment}($I$, $\phi$)\\
				\ForEach{$\mathcal{I}$ in $\mathbf{B}_{I}$}{
					$a\leftarrow\pi(\hat{s})$ for some $\hat{s}\in {\cal I}$\\
					${\cal I'}\leftarrow  post(\mathcal{I},a,f)$\\
					interval\_arr $\leftarrow$ interval\_arr $\cup \{\cal I'\}$ 
				}
			}
			$X_t$ = \textit{aggregate}(interval\_arr)
		}
            \Return $\bigcup_{t=1}^{T}X_t$
	\end{algorithm}
\end{wrapfigure}
\looseness=-1
We then compute the action for the states in each segmented interval box by arbitrarily selecting a state $\hat{s}$ in the box and then feeding $\hat{s}$ into $\pi$ to get the output (Line 7), e.g., $a$. Next, we compute a set ${\cal I'}$ of successor states of the states in $\cal I$ by over-approximating the environment dynamic $f$ in Formula \ref{for:post} (Figure~\ref{fig:onestep}(iii) and Line 8 of Algorithm \ref{alg:reachsets_cal}).
 Finally, we aggregate those adjacent successor interval boxes (Figure~\ref{fig:onestep}(iv) and Line 10 of Algorithm \ref{alg:reachsets_cal}) and obtain an over-estimated set $X_t$ of reachable states at time step $t$.

\vspace{-2mm}
\subsection{Key Operations in Algorithm \ref{alg:reachsets_cal}}
\vspace{-1mm}
We  now describe in detail three key operations  in Algorithm \ref{alg:reachsets_cal}, namely \textit{interval segmentation}, \textit{post operation}, and \textit{adjacent interval aggregation}. We fulfill the interval set propagation at each time step $t\in \mathbb{N}$ for the ANN-controlled systems based on these interval operations.

\vspace{1ex}
\noindent \textbf{Interval Segmentation.}
Given an interval box $I$ and an abstraction function $\phi$, $segment(I,\phi)$ returns 
a set $B_{I}$ of interval boxes which satisfy the following three \textit{segmentation conditions}: 
\begin{enumerate}
	\item All the interval boxes constitute $I$;
	\item Interval boxes do not overlap each other;
	\item All the states in the same interval box have a unique action according to the trained ANN. 
\end{enumerate}
\looseness=-1
For conventional DNNs, one has to resort to brute-force interval splitting to find consistent regions that satisfy the above three conditions;  this approach is only applicable to discrete action space  \cite{bacci2020probabilistic}. We can easily partition $I$ into such a set $B_I$, thanks to the specialized design of ANN. First, we determine the set of abstract states that intersect with $I$ and denote the set by  $\mathbf{S}_{I} = \{s_\phi \mid s_\phi \cap I \neq \emptyset \}$. We then calculate the intersection part between $I$  and the abstract states in $\mathbf{S}_{I}$ individually. Each intersection part is a segmented interval box. 
In this way, we	obtain a set of segmented interval boxes that satisfy the aforementioned three conditions and denote it by $\mathbf{B}_{I}= \{\mathcal{I} \mid \mathcal{I} = s_\phi \cap I \wedge s_\phi \in \mathbf{S}_{I} \}$. With the interval segmentation, through feeding an arbitrary state in the segmented interval box $\mathcal{I} \in B_I$ into ANN, we can obtain the corresponding unique action performed on $\mathcal{I}$. Since $B_I$ is a finite set, the decisions of the network controller on $I$ can be directly obtained without the layer-by-layer analysis process as in the white-box approaches~\cite{ivanov2021verisig,huang2022polar}. This makes our reachability analysis approach  a black-box one.

Recall the example in Figure~\ref{fig:onestep}(i), where the black dotted lines denote the partition of the state space with abstraction granularity $\gamma = (0.1,0.1)$. There exists an interval box $I=(0.15,0.25,0.15,0.3)$ that intersects with four abstract states. 
The intersection of each abstract state with $I$ is a segmented interval box. We have four  interval boxes $B_I=\{\mathcal{I}^1,\mathcal{I}^2,\mathcal{I}^2,\mathcal{I}^4\}$. Apparently, the segmented interval boxes in $B_I$ satisfy the three segmentation conditions.


\noindent \textbf{Post Operation.}
\looseness=-1
Given an interval box $\cal I$, the action $a$ applied to $\cal I$ and environment dynamics $f$, 
$post({\cal I},a,f)$ returns an interval box $\cal I'$, which is an over-approximation set of all the successor states by applying $a$ to the states in $\cal I$ after $\delta$. 


We can solve $post({\cal I},a,f)$ as an ordinary continuous system without neural networks. Suppose that the environment dynamics is an ODE $\dot{s} = f(s,a)$. We use a Taylor model $p'(s,a,t_c)+I_r'$ to over-approximate the function $\varphi_f(s,a,t_c)$ over the domain $s\in \mathcal{I}, t_c\in[0,\delta]$. That is, 
\begin{equation}
\nonumber
\setlength{\abovedisplayskip}{4pt}
\setlength{\belowdisplayskip}{4pt}
Reach_f^{[0,\delta]}(\mathcal{I}) = \bigcup_{s\in {\cal I}, t_c\in[0,\delta]} \{\varphi_f(s,a,t_c)\} \subseteq  p'(s,a,t_c)+I_r', \notag 
\end{equation}
where $I_r'$ is a remainder interval. The successor interval box $\mathcal{I}'$ can be calculated through evaluating the range of $p'(s,a,\delta)+I_r'$. 


 Let us consider an example for the segmented interval box $\mathcal{I}^1 = (0.15,0.2,0.2,\\0.3)$ in Figure~\ref{fig:onestep}(ii). The dynamics is defined as in Example~\ref{exa:DRLexample}. Suppose the action for the states in the interval box is $a = 0.5$ and the time scale $\delta = 0.1$. We can compute an over-approximated Taylor model for the solution of dynamics $f: \dot{x_1} = x_2- x_1^3, \dot{x_2} = 0.5$ over $s\in \mathcal{I}^1, t_c\in [0,0.1]$. 
The Taylor models for state variable $x_1,x_2$ are as follows:
\vspace{-2mm}
\begin{align}
	\nonumber
	x'_1=&\ 1.75\times 10^{-1}+1.91\times  10^{-8}x_2+2.5\times 10^{-2}x_1+0.245t_c\\ 
 &-1.25\times 10^{-10}x^2_1+5\times 10^{-2}x_2t_c-2.3\times 10^{-3}x_1t_c +0.239t^2_c\nonumber\\
	&-5\times 10^{-10}x_1x_2t_c+\ldots+[-1.03\times 10^{-4},8.94\times 10^{-5}]\nonumber\\
	x'_2=&\ x_2 + 0.5t_c + [-0,0]\nonumber
\end{align}
%
%
%

Using these two expressions, we can over-approximate the set of reachable states at every moment during $[0,0.1]$. In particular, we have $(0.172,0.232,0.25,\\0.35)$ when $t_c=0.1$. 

\noindent\textbf{Adjacent Interval Aggregation.}
Interval segmentation may lead to the exponential  blowup in the number of intervals as the number of time steps increases. As exemplified in Figure~\ref{fig:onestep}(iii), four successor intervals are obtained after applying corresponding actions and environment dynamics to the states in ${\cal I}^1,\ldots,{\cal I}^4$. 
\begin{wrapfigure}[21]{r}{0.55\textwidth}
\vspace{-4mm}
	\begin{algorithm}[H]
		\footnotesize
		\SetKwInOut{Input}{Input}
		\SetKwInOut{Output}{Output}
		\Input{An interval array $IntArr$}
		\Output{The aggregation results $Arr$}
		\caption{Adjacent interval aggregation.}
		\label{alg:interval_agg}
		Initialize flag $\leftarrow$ [False, False,...], $Arr$ $\leftarrow$ [] \\
		Construct the adjacency matrix $M$
		\\
		\ForEach{$I_{p}$ in $IntArr$}{
			\If{not flag[$I_p$]}{
				Initialize queue $\leftarrow$ [$I_p$]\\
    flag[$I_p$] $\leftarrow$ True\\
				\While{queue is not empty}{
					$I$ $\leftarrow$ queue.pop()\\
					$I_{adjs}$ $\leftarrow$ getAdjacent($I$, $M$)\\
					\ForEach{item in $I_{adjs}$}{
						$I_p$ $\leftarrow$ aggInterval($I_p$, item)\\
						\If{not flag[item]}{
							queue.put(item)\\
                                flag[item] $\leftarrow$ True\\
						}
						
					}
				}
				$Arr$.add($I_p$)\\
			}
		}
		\Return $Arr$

	\end{algorithm}
\end{wrapfigure}

To cope with the explosion of successor intervals, we provide a dual operation of segmentation called \textit{adjacent interval aggregation}, which aggregates multiple intervals together at the price of introducing a little overestimation. This operation is based on the interval hull operation~\cite{moore2009introduction} except that we establish a criterion for determining which intervals can be aggregated into their interval hull.
For instance, the green and brown intervals in Figure~\ref{fig:onestep}(iii) can be aggregated, while the other small ones can be aggregated too.  
However, large overestimation would be introduced if the four interval boxes were aggregated to be one. 

To balance the number of intervals and the overestimation introduced by aggregation, we define three cases for the adjacency relation between interval boxes, i.e., \textit{inclusion}, \textit{intersection}, and \textit{separation}. Only the intervals in the three cases are aggregated. 
Given two interval boxes $A=(l_1,u_1,\dots,l_n,u_n)$ and  $B=(l'_1,u'_1,\dots,l'_n,u'_n)$, as well as 
a preset distance threshold $h = (h_1,\dots,h_n)$, the three cases are defined as follows: 
%
%
%
%

\vspace{-2mm}
\begin{enumerate}
    \item \textbf{Inclusion:} An interval box is completely included in the other, i.e.,     
     $\forall i: (l_i \le l'_i \wedge u_i \ge u'_i) \vee 
    (l_i \ge l'_i \wedge u_i \le u'_i$).
    
    \item \textbf{Intersection:} $A$ and $B$ have a partial overlap, i.e.,  
    $\exists ! d : l'_d \le l_d \le u'_d \le u_d \vee l_d \le l'_d \le u_d \le u'_d$ and 
    $\forall i, i \neq d: \lvert l_i - l'_i \rvert \le h_i \wedge \lvert u_i - u'_i \rvert \le h_i$.
    
    \item \textbf{Separation:} $A$ is isolated from $B$,
    i.e., 
         $\exists ! d : l_d - u'_d \le h_d \vee l'_d - u_d \le h_d$; and 
        $\forall i, i \neq d  : \lvert l_i - l'_i \rvert \le h_i \wedge \lvert u_i - u'_i \rvert \le h_i$. 
\end{enumerate}
\vspace{-1mm}
	





To accelerate interval aggregation, we 
devise an efficient algorithm to 
aggregate three or more interval boxes each time if they constitute a sequence of adjacent intervals. 
Algorithm~\ref{alg:interval_agg} shows the pseudo code.  
We first pre-construct an adjacency matrix (Line 2) to store the adjacent relations between the interval boxes in $IntArr$ firstly. 
Then, we implement this adjacent interval aggregation procedure using breadth-first search (Lines 5-14). Specifically, we consider each interval box in $IntArr$ as a node and each adjacent relation as an undirected edge. For each interval box $I_p$ that is not traversed, all the  interval boxes connected to $I_p$ will be aggregated into their minimum bounding rectangle.



In Algorithm \ref{alg:interval_agg}, 
the time complexity of building the adjacency matrix is $O(n^2)$. In the aggregation procedure, each interval box is traversed at most once, and the complexity of searching for the adjacent interval boxes for each interval box is $O(n)$. Therefore, Algorithm \ref{alg:interval_agg} is in $O(n^2)$. 


\begin{example}
Let us revisit the system in Example~\ref{exa:DRLexample} and  suppose that $IntArr$ consists of 4 interval boxes, i.e.,  $\widehat{I}_1 =(0.08, 0.16, 0.3,0.4)$, $\widehat{I}_2=(0.17,0.25, 0.32, 0.42)$, $\widehat{I}_3=(0.19,0.27,0.07,0.2)$, $\widehat{I}_4=(0.2,0.28,0.1,0.21)$, and the distance threshold is $h = (0.02,0.02)$. According to the definition of adjacent relations,  $\widehat{I}_1$ is adjacent to $\widehat{I}_2$ (Separation) and  $\widehat{I}_3$ is adjacent to $\widehat{I}_4$ (Intersection). Hence, $\widehat{I}_1$ is aggregated with $\widehat{I}_2$, and $\widehat{I}_3$ is aggregated with $\widehat{I}_4$. Finally, we obtain $Arr = \{I_{1,2} = (0.08,0.25,0.3,0.42), I_{3,4} = (0.19,0.28,0.07,0.21)\}$.
\end{example}

\vspace{-4mm}
\subsection{The Soundness}\label{sec:sound}
\vspace{-1mm}
We show a proof sketch for the soundness of Algorithm \ref{alg:reachsets_cal}. The soundness means that any state that is reachable at time $t_c$ from some initial state of an ANN-controlled system must be in the over-approximation set at $t_c$. 
\vspace{-2mm}

\begin{theorem}[Soundness of Algorithm \ref{alg:reachsets_cal}]
	\label{correctness}
	Given an ANN-controlled system with a set $S_0$ of initial states and an environment dynamic $f$, 
	if a state $s'$ is reached at time $t_c =k\delta+ t_c', k\in \mathbb{N}, t_c' \in[0,\delta)$ from some initial state $s_0\in S_0$, then we must have $s'= Reach_{f}^{t_c}({s_0}) \in Reach_f^{t_c'}(X_k)$. 
\end{theorem}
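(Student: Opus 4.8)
The plan is to prove a stronger step-wise \emph{invariant} and then read off the theorem as an immediate corollary. Throughout I treat each $X_t$ as the union of the concrete states contained in its interval boxes, so that statements like $S_t\subseteq X_t$ and $Reach_f^{t_c'}(X_k)$ make sense. I would establish by induction on $t$ that the computed set over-approximates the true reachable set at every step, i.e. $S_t\subseteq X_t$ where $S_t=Reach_f^{\delta}(S_{t-1})$. Granting this invariant, the theorem is immediate: the state reached at $t_c=k\delta+t_c'$ factors through the step-$k$ state $s_k=\varphi_f(s_0,\pi,k\delta)\in S_k$, since by Formula~\ref{for:single} the control is the constant $\pi(s_k)$ over the final sub-step, giving $s'=\varphi_f(s_k,\pi(s_k),t_c')$. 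The invariant yields $s_k\in X_k$, hence $s'\in\{\varphi_f(s,\pi,t_c')\mid s\in X_k\}=Reach_f^{t_c'}(X_k)$, as claimed.

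The base case is Line~1 of Algorithm~\ref{alg:reachsets_cal}, which sets $X_0=[I_0]$ with $S_0\subseteq I_0$, so $S_0\subseteq X_0$ trivially. For the inductive step I assume $S_{t-1}\subseteq X_{t-1}$ and trace an arbitrary true successor $s_t\in S_t$ back through the three operations. Such an $s_t$ equals $\varphi_f(s_{t-1},\pi(s_{t-1}),\delta)$ for some $s_{t-1}\in S_{t-1}$; by hypothesis $s_{t-1}$ lies in some interval box $I\in X_{t-1}$, and by segmentation condition~1 (that $\mathbf{B}_I$ covers $I$) it lies in some segmented box $\mathcal{I}\in\mathbf{B}_I$.

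The crux is segmentation condition~3: all concrete states in $\mathcal{I}$ share one action under the trained ANN. This is exactly what the inserted abstraction layer buys. Because $\mathcal{I}=s_\phi\cap I$ sits inside a single abstract state $s_\phi$, every point of $\mathcal{I}$ satisfies $\phi(s)=s_\phi$ and is therefore indistinguishable to $\pi$, so the action $a=\pi(\hat s)$ queried on an arbitrary representative $\hat s\in\mathcal{I}$ (Line~7) equals $\pi(s_{t-1})$. Hence $s_t=\varphi_f(s_{t-1},a,\delta)\in Reach_f^{\delta}(\mathcal{I})$. Soundness of the \emph{post} operation (the Taylor-model over-approximation of $\varphi_f$ over $s\in\mathcal{I},\,t_c\in[0,\delta]$) gives $Reach_f^{\delta}(\mathcal{I})\subseteq\mathcal{I}'=post(\mathcal{I},a,f)$, so $s_t\in\mathcal{I}'$. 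Finally $\mathcal{I}'$ enters \texttt{interval\_arr}, and \emph{aggregate} only replaces groups of boxes by their interval hull, which can merely enlarge the covered region; thus $s_t\in X_t$, closing the induction.

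The main obstacle is making the two soundness sub-claims rigorous rather than merely intuitive. First, segmentation condition~3 must be derived from Definition~\ref{def:abs_fun} and the abstraction-layer encoding: I would show that $\phi$ is constant on each $\mathcal{I}$ and that the frozen input-to-abstraction weights make the network factor as $\pi=g\circ\phi$ for some $g$ on abstract states, so one query genuinely fixes the action on all of $\mathcal{I}$. Second, the \emph{post} step must be a guaranteed enclosure; here I would invoke the standard soundness of Taylor-model flowpipe construction as a black box, being careful that the action is held at the \emph{constant} $a$ over $[0,\delta]$, matching the per-step semantics of $Reach_f^{\delta}$. Aggregation is comparatively easy, as the interval hull is monotone under inclusion regardless of which adjacency case triggers it. A minor technical point is the half-open convention $l_i\le x_i<u_i$ of Definition~\ref{def:abs_fun}, which guarantees that $\mathbf{B}_I$ partitions $I$ (conditions~1 and~2 hold simultaneously) and that no reachable point is dropped at a shared face.
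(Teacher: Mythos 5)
Your proposal is correct and follows essentially the same route as the paper's own proof: an induction over time steps whose inductive step traces a reachable state back to an interval box of $X_{t-1}$, into a segmented box where segmentation condition~3 (guaranteed by the abstraction layer) fixes the queried action, then applies the soundness of the $post$ operation (Taylor-model enclosure, the paper's Lemma~\ref{lem:seg}) and the enlarging property of aggregation (the paper's Lemma~\ref{lem:aggr}). Your phrasing of the invariant as $S_t\subseteq X_t$ and your handling of the final fractional step $t_c'$ as a corollary are only presentational variants of the paper's trajectory-based induction, which splits the cases $t_c\in[t\delta,(t+1)\delta)$ and $t_c=(t+1)\delta$ inside the induction itself.
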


To prove Theorem \ref{correctness}, we first show the soundness of the $post$ operation and interval aggregation.
The soundness of the two operations is formulated by the following two lemmas, respectively. 


\begin{lemma}[Soundness of $\mathit{post}$ Operation]
\label{lem:seg}
For each interval box $\mathcal{I} \in B_{I}$, there is $s_{t+1} \in post(\mathcal{I},\pi(s_t),f)$ for all $s_t \in \mathcal{I}$ where $s_{t+1}=\varphi_f(s_t,\pi(s_t),\delta)$.
\end{lemma}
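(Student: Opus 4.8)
The plan is to reduce the statement to two independent facts and then combine them. First I would fix an arbitrary segmented box $\mathcal{I} \in \mathbf{B}_I$ and an arbitrary concrete state $s_t \in \mathcal{I}$, and observe that the only subtlety is that $post$ is invoked with the argument $\pi(s_t)$, which \emph{a priori} depends on $s_t$. The first fact I would establish is that this dependence is vacuous: every state in $\mathcal{I}$ is mapped by $\pi$ to one and the same action $a$, so that $post(\mathcal{I}, \pi(s_t), f) = post(\mathcal{I}, a, f)$ is a single well-defined box independent of the choice of $s_t$. The second fact is the soundness of that box as an enclosure of all true $\delta$-successors of states in $\mathcal{I}$ under the constant action $a$. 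Lemma~\ref{lem:seg} then follows immediately by instantiating the enclosure at $s_t$.

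For the action-uniqueness step I would invoke the construction of $\mathbf{B}_I$ in the segmentation operation: by definition each $\mathcal{I} \in \mathbf{B}_I$ has the form $s_\phi \cap I$ for a single abstract state $s_\phi \in \mathbf{S}_I$, hence $\mathcal{I} \subseteq s_\phi$. By Definition~\ref{def:abs_fun} every concrete state in $s_\phi$ is abstracted to the same interval box, and by the defining property of the abstraction layer this interval box is the only information the network receives. Consequently $\pi$ is constant on $s_\phi$, and therefore constant on $\mathcal{I}$; this is exactly segmentation condition~3. Writing $a = \pi(\hat{s})$ for the representative $\hat{s} \in \mathcal{I}$ chosen in Line~7 of Algorithm~\ref{alg:reachsets_cal}, we obtain $\pi(s_t) = a$ for all $s_t \in \mathcal{I}$.

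For the enclosure step I would unfold the definition of $post$. With the constant action $a$, we have $s_{t+1} = \varphi_f(s_t, \pi(s_t), \delta) = \varphi_f(s_t, a, \delta)$, and since $s_t \in \mathcal{I}$ and $\delta \in [0,\delta]$ this successor lies in $Reach_f^{[0,\delta]}(\mathcal{I}) = \bigcup_{s \in \mathcal{I},\, t_c \in [0,\delta]} \{\varphi_f(s,a,t_c)\}$. The Taylor-model over-approximation used by $post$ guarantees $Reach_f^{[0,\delta]}(\mathcal{I}) \subseteq p'(s,a,t_c)+I_r'$ over the domain $s \in \mathcal{I}$, $t_c \in [0,\delta]$, and $\mathcal{I}' = post(\mathcal{I},a,f)$ is obtained by evaluating the range of $p'(s,a,\delta)+I_r'$ over $s \in \mathcal{I}$. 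Hence $s_{t+1} = \varphi_f(s_t,a,\delta) \in \mathcal{I}'$, which is the claim.

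The only nontrivial ingredient is the soundness of the Taylor-model enclosure $Reach_f^{[0,\delta]}(\mathcal{I}) \subseteq p'(s,a,t_c)+I_r'$; I would not reprove it but instead cite the standard guarantees of Taylor-model flowpipe construction~\cite{makino2003taylor,chen2012taylor}, which apply here precisely because, once the action is fixed to the constant $a$, the system is an ordinary ODE with no embedded network. The main obstacle is therefore conceptual rather than computational: one must make rigorous that segmentation condition~3 (one action per box) is not a hypothesis but a consequence of the abstraction layer, so that the per-state action $\pi(s_t)$ can be legitimately replaced by the single constant $a$ \emph{before} the continuous over-approximation is applied.
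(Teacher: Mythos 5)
Your proof is correct and takes essentially the same route as the paper's: first establish that $\pi$ is constant on the segmented box $\mathcal{I}$ (the paper asserts this directly as a consequence of segmentation, while you additionally unfold the construction $\mathcal{I} = s_\phi \cap I$ and the abstraction layer to justify it), then invoke the soundness of the Taylor-model enclosure of the ODE flow under that constant action to conclude $s_{t+1} = \varphi_f(s_t,a,\delta) \in post(\mathcal{I},a,f)$. The only detail the paper states explicitly that you leave implicit is the Lipschitz continuity of $f$ guaranteeing existence and uniqueness of the ODE solution, which is subsumed by your citation of the standard Taylor-model flowpipe guarantees.
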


\begin{proof}
	\vspace{-1ex}
	After the segmentation process, we have $\forall s \in \mathcal{I}: \pi(s) = \pi(s_t) = a$ where $a$ is a constant. With a constant action and the Lipschitz continuity of $f$, we can guarantee that there exists a unique solution of the ODE for a single initial state~\cite{meiss2007differential}. Then the solution of the ODE namely $\varphi_{f}(s,a,t_c)$ could be enclosed by a Taylor model~\cite{makino2003taylor} over $s(0)\in \mathcal{I}$ and $t_c \in [0,\delta]$. Thus, we could obtain the conservative result $s_{t+1} = \varphi_{f}(s_t,a,\delta)\in Reach_f^{\delta}(\mathcal{I}) \subset post(\mathcal{I},a,f)$. \qed 
\end{proof}



\begin{lemma}[Soundness of Interval Aggregation]
\label{lem:aggr}
Suppose $A$ is the aggregated set of successor intervals for a set $X$ of interval boxes. For all $I \in X$, there exists 
$\widehat{I} \in A$ such that $I \subseteq \widehat{I}$.
\end{lemma}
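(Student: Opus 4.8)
The plan is to reduce the lemma to a single enclosure property of the interval-hull operation together with a traversal invariant of Algorithm~\ref{alg:interval_agg}. First I would record the key property of aggregation: for two interval boxes $A=(l_1,u_1,\dots,l_n,u_n)$ and $B=(l'_1,u'_1,\dots,l'_n,u'_n)$, the box produced by $\mathit{aggInterval}(A,B)=(\min(l_1,l'_1),\max(u_1,u'_1),\dots,\min(l_n,l'_n),\max(u_n,u'_n))$ satisfies $A\subseteq \mathit{aggInterval}(A,B)$ and $B\subseteq \mathit{aggInterval}(A,B)$, directly from the componentwise $\min/\max$. Because this operation is commutative and associative, the iterated hull of any finite collection of boxes contains every member of the collection, independently of the order in which the boxes are combined.

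Next I would analyze the control flow of Algorithm~\ref{alg:interval_agg}. Viewing each box in $X=\mathit{IntArr}$ as a node and each admissible adjacency relation as an undirected edge, the \texttt{flag} array guarantees that each node is marked exactly once, so the outer loop initiates exactly one breadth-first search per connected component and the components partition $X$. Fixing a component $C$ with start node $I_p^{0}$, I would prove the invariant that when $I_p$ is appended to $Arr$, it equals the iterated hull of all nodes in $C$. The start node is the initial value of $I_p$ and hence included; any other node $v\in C$ is reached along an edge from some node $u$ that is popped from the queue, and while processing $u$ the loop calls $\mathit{aggInterval}(I_p,v)$, folding $v$ into $I_p$. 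Since $C$ is connected and finite, every node is eventually folded in, so the appended box is exactly $\mathrm{hull}(C)$.

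Finally I would combine the two ingredients: given any $I\in X$ it lies in a unique component $C$, the corresponding aggregated box is $\widehat{I}=\mathrm{hull}(C)\in A$, and by the enclosure property $I\subseteq \widehat{I}$, which is the claim.

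The main obstacle I anticipate is the bookkeeping in the traversal invariant rather than any deep mathematics: one must argue that \emph{every} node of a component is actually passed to $\mathit{aggInterval}$ (not merely visited), which hinges on the fact that a node is folded in when its neighbor is popped rather than when it is itself popped, and that the reuse of the mutable variable $I_p$ together with the commutativity and associativity of the hull makes the accumulated result insensitive to traversal order. It is worth remarking that soundness here is completely independent of the adjacency criteria (inclusion, intersection, and separation): those criteria control only the tightness of the over-approximation, whereas the enclosure $I\subseteq\mathrm{hull}(C)$ holds for \emph{any} grouping whatsoever.
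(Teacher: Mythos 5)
Your proposal is correct and follows essentially the same route as the paper's proof: both reduce the claim to the enclosure property of the interval hull (minimum bounding rectangle) computed by the aggregation, combined with the observation that every box in $X$ is either left untouched by Algorithm~\ref{alg:interval_agg} or folded into some aggregated box that is added to $A$. The only difference is one of rigor --- where the paper asserts in a single line that the aggregate operation ``produces the minimum bounding rectangle which encloses all interval boxes involved,'' you actually establish this via the per-component BFS invariant (each appended box equals the hull of its connected component), which is a worthwhile elaboration of the same argument rather than a different one.
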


\begin{proof}
	In Algorithm~\ref{alg:interval_agg}, every interval box in $X$ needs to be traversed. For each interval box $I \in X$, there exist two cases: (i) $I$ is not involved in the adjacent interval aggregation process. In this case, $I$ will be directly added to $A$, thus $\exists \widehat{I} = I: I \subseteq \widehat{I}$. (ii) $I$ is aggregated into another interval box $I'$. Since the $aggregate$ operation produces the minimum bounding rectangle which encloses all interval boxes involved, we have $\exists \widehat{I} = I': I \subseteq \widehat{I}$. Consequently, we conclude that $\forall I \in X,\ \exists \widehat{I}:\ I \subseteq \widehat{I} \wedge \widehat{I} \in A$. \qed 
	\vspace{-1ex}
\end{proof}


According to  Algorithm \ref{alg:reachsets_cal}, Theorem \ref{correctness} can be proved by induction on the steps $t_c$ based on Lemmas~\ref{lem:seg} and \ref{lem:aggr}. The base case is straightforward when $t_c=0$. 
In the induction case, we can prove 
that Theorem~\ref{correctness} holds on $[t\delta,(t+1)\delta]$ according to the two lemmas and the hypothesis that it holds on an arbitrary  $t_c=t\delta$. 

\begin{proof}[Theorem \ref{correctness}]
	Starting from $s_0$, we can obtain the trajectory as $s_0, a_0, s_1,$ $a_1, ...$ in which $a_t = \pi(s_t)$ and $s_{t+1} = \varphi(s_t,a_t,\delta)$.
	Then by induction on the
	time step $t$, the induction schema is as follows: 
	
	\vspace{1ex}
	\noindent \textbf{Base Case:} $t_c = 0$. Since $s_0 \in S_0 \wedge S_0 \subseteq X_0$, we have $s_0 \in X_0 = Reach_f^0(X_0)$.
	
	\vspace{1ex}
	\noindent \textbf{Induction Step:} $t_c = t\delta$. Assume $s' = s_t \in X_t = Reach_f^{0}(X_t)$ holds. 
	Since $X_t$ consists of a set of interval boxes, there exists an interval box $I_{X_t}^{n_1}$ satisfying $s_t \in I_{X_t}^{n_1} \wedge I_{X_t}^{n_1} \in X_t$. 
	Then, let us consider the segmentation process for $I_{X_t}^{n_1}$, we divide $I_{X_t}^{n_1}$ into a set of interval boxes $\mathbf{B}_{I_{X_t}^{n_1}} = \{ \mathcal{I}_{X_t}^{1}, \mathcal{I}_{X_t}^{2}, \dots, \mathcal{I}_{X_t}^{max}\}$ with $I_{X_t}^{n_1} = \bigcup\limits_{n=1}^{max} \mathcal{I}_{X_t}^{n}$. Thus, there exists some $n_2 \in \mathbb{Z^+}$ such that $s_t \in \mathcal{I}_{X_t}^{n_2}$. 
	
	For $t_c \in [t\delta,(t+1)\delta)$, we have $s' = Reach_f^{t_c'}(s_t)$. Since $s_t \in \mathcal{I}_{X_t}^{n_2}$, we have $s' =  Reach_f^{t_c'}(s_t) \in  Reach_f^{t_c'}(\mathcal{I}_{X_t}^{n_2}) \subseteq  Reach_f^{t_c'}(X_t) $.
	
	For $t_c = (t+1)\delta$, we have $s' = s_{t+1}$,
	Based on Lemma~\ref{lem:seg}, we have $s_{t+1} \in post(\mathcal{I}_{X_t}^{n_2},\pi(s_t),f)$. 
	After the adjacent interval aggregation process, $X_{t+1}$ consists of the aggregation result. According to Lemma~\ref{lem:aggr}, we have $\exists \widehat{I}: post(\mathcal{I}_{X_t}^{n_2},\pi(s_t),f) \subseteq \widehat{\mathcal{I}} \wedge \widehat{\mathcal{I}} \in X_{t+1}$.
	Therefore, we have $s_{t+1} \in \widehat{\mathcal{I}} \wedge \widehat{\mathcal{I}} \in X_{t+1}$ and we can conclude that $s' = s_{t+1} \in X_{t+1} = Reach_f^0(X_{t+1})$.

Theorem \ref{correctness} is proved. \qed 	
\vspace{-3mm}
\end{proof}

\iftoggle{conf-ver}{
Due to space limit, the complete proofs of the theorem and lemmas are omitted in the paper but available in Appendix A of our technical report~\cite{ase2023tian-tr}. }{
}

\vspace{-2mm}
\section{Implementation and Experiments}
\vspace{-1mm}
\label{sec:exp}
We conduct a comprehensive
assessment of our approach and compare it with the state-of-the-art white-box tools.
Our goal is to demonstrate the advances of the proposed abstraction-based training and black-box reachability analysis approaches. These include (i) comparable performance of trained systems and negligible time overhead in the training~(Section \ref{sub:perf}),  		
(ii) tighter over-approximated sets of reachable states, as well as higher scalability and efficiency   (Section~\ref{subsec:tightness}), and (iii) the effectiveness of the adjacent interval aggregation algorithm in reducing state explosion  (Section~\ref{subsec:discussion}).
We also explore how our approach performs under different abstraction granularity levels (Section~\ref{subsec:discussion}). 




\vspace{-1mm}
\subsection{Implementation and Benchmarks}
\vspace{-1mm}

\noindent \textbf{Implementation.}
We implement our approach in a tool called \BBReach~in Python. We use Ariadne~\cite{collins2012computing} to solve the reachability problems defined on segmented interval boxes (i.e., $post(\mathcal{I},a,f), \mathcal{I} \in B_{I}$). Additionally, we employ the parallelized computing by initial-set partition~\cite{chen2012taylor},  a standard approach used in the reachability analysis of hybrid systems to obtain tighter bounds of reachable states. 
With  the initial set partitioned into $k$ subsets, 
the $k$ sub-problems can be solved in parallel, which accelerates our approach with multiple cores. 
\vspace{1ex}
\noindent \textbf{Benchmarks. 
}
The benchmarks, as commonly adopted by most of the existing reachability analysis approaches such as Verisig 2.0~\cite{ivanov2021verisig} and Polar \cite{huang2022polar}, consist of seven reinforcement learning tasks with the dimensions ranging from 2 to 6. A reach-avoid property is defined for each task by specifying the goal region and unsafe region of the agent in the task. A trained DNN must guarantee that the reach-avoid property is satisfied when the agent is driven by the DNN. 

For each task, we train four neural networks (two smaller networks chosen from~\cite{ivanov2021verisig} and two larger networks), thus 28 instances in total,  with different activation functions and sizes of neurons. We also train the networks with different abstraction granularity levels to evaluate how abstraction granularity affects the efficiency.  We use the well-known DDPG  algorithm to train neural networks. Note that our approach makes no assumption on training algorithms and is applicable to other DRL algorithms. 
\iftoggle{conf-ver}{
The detailed settings are provided in Appendix C of technical report~\cite{ase2023tian-tr}.
}{
The detailed settings are provided in  Appendix~\ref{sec:benchmarks}. }

\begin{figure*}[t]
	\hspace{-2mm}
	\setlength{\tabcolsep}{.8mm}{
		\begin{tabular}{cccc}
			\begin{subfigure}[b]{0.24\textwidth}
				\includegraphics[width=\textwidth]{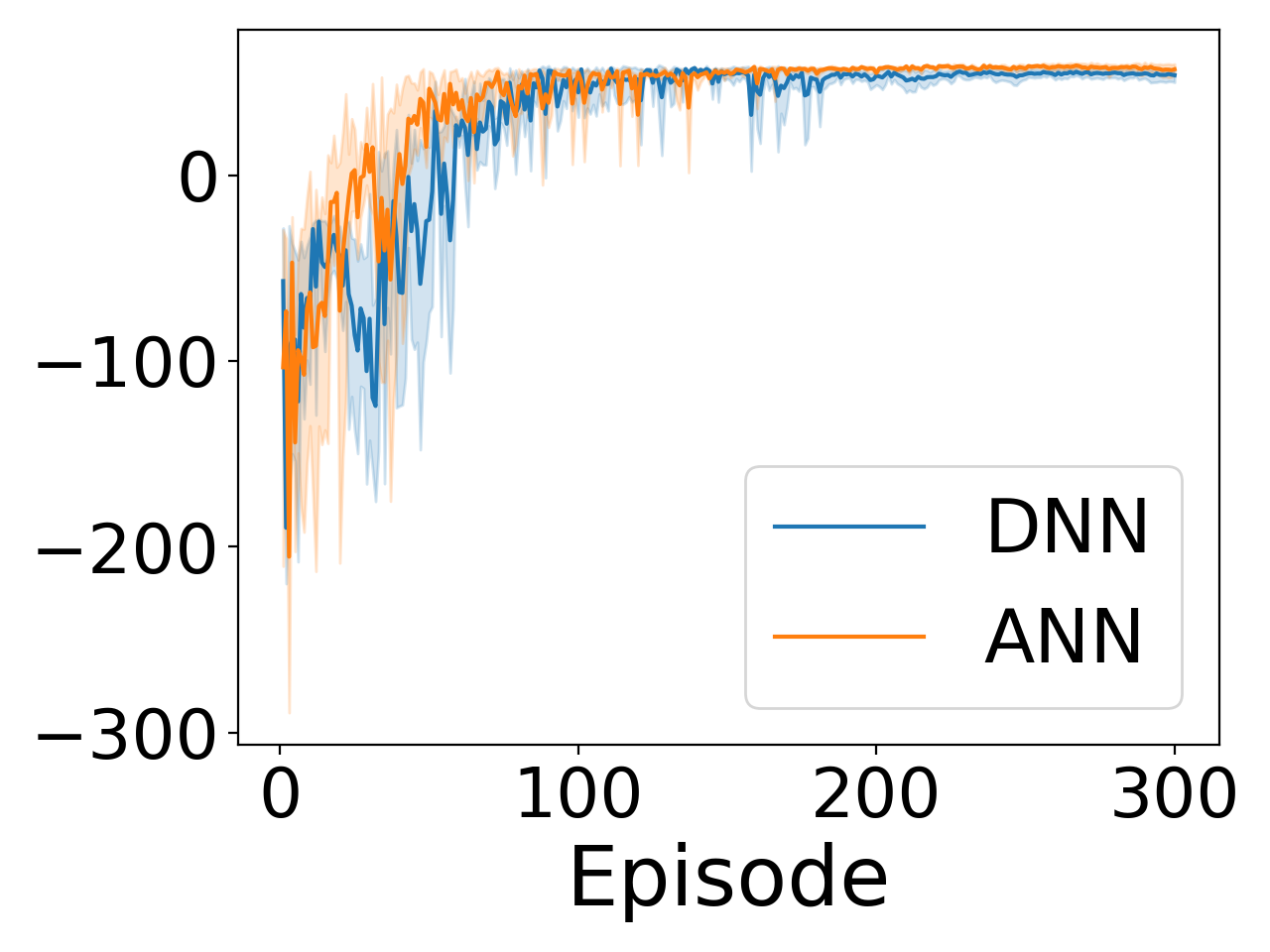}
				\caption{B1}
				\label{fig:reward_com_b1}
			\end{subfigure}
			&
			\begin{subfigure}[b]{0.24\textwidth}
				\includegraphics[width=\textwidth]{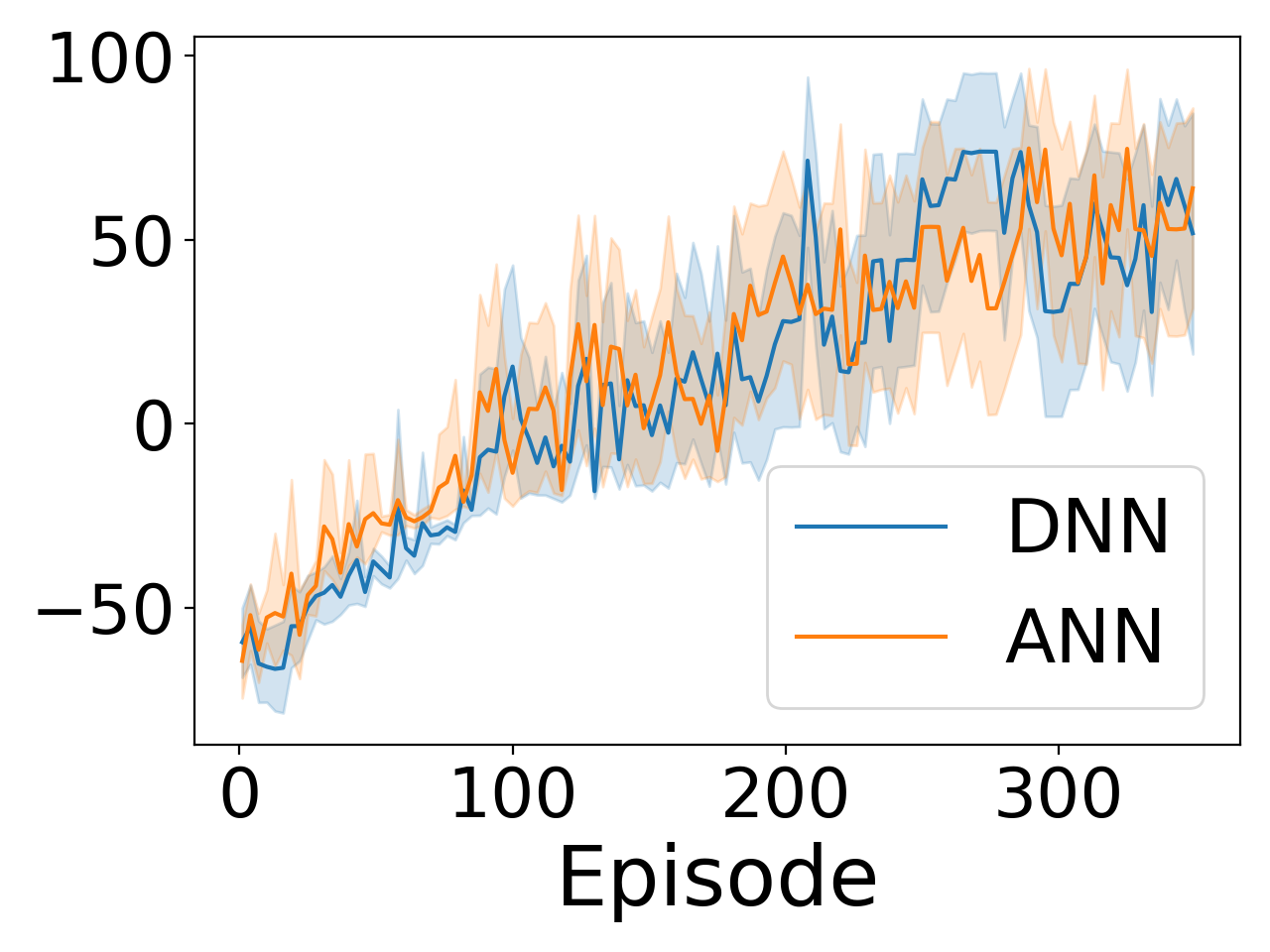}
				\caption{B2}
				\label{fig:reward_com_b2}
			\end{subfigure}&
			\begin{subfigure}[b]{0.24\textwidth}
				\includegraphics[width=1\textwidth]{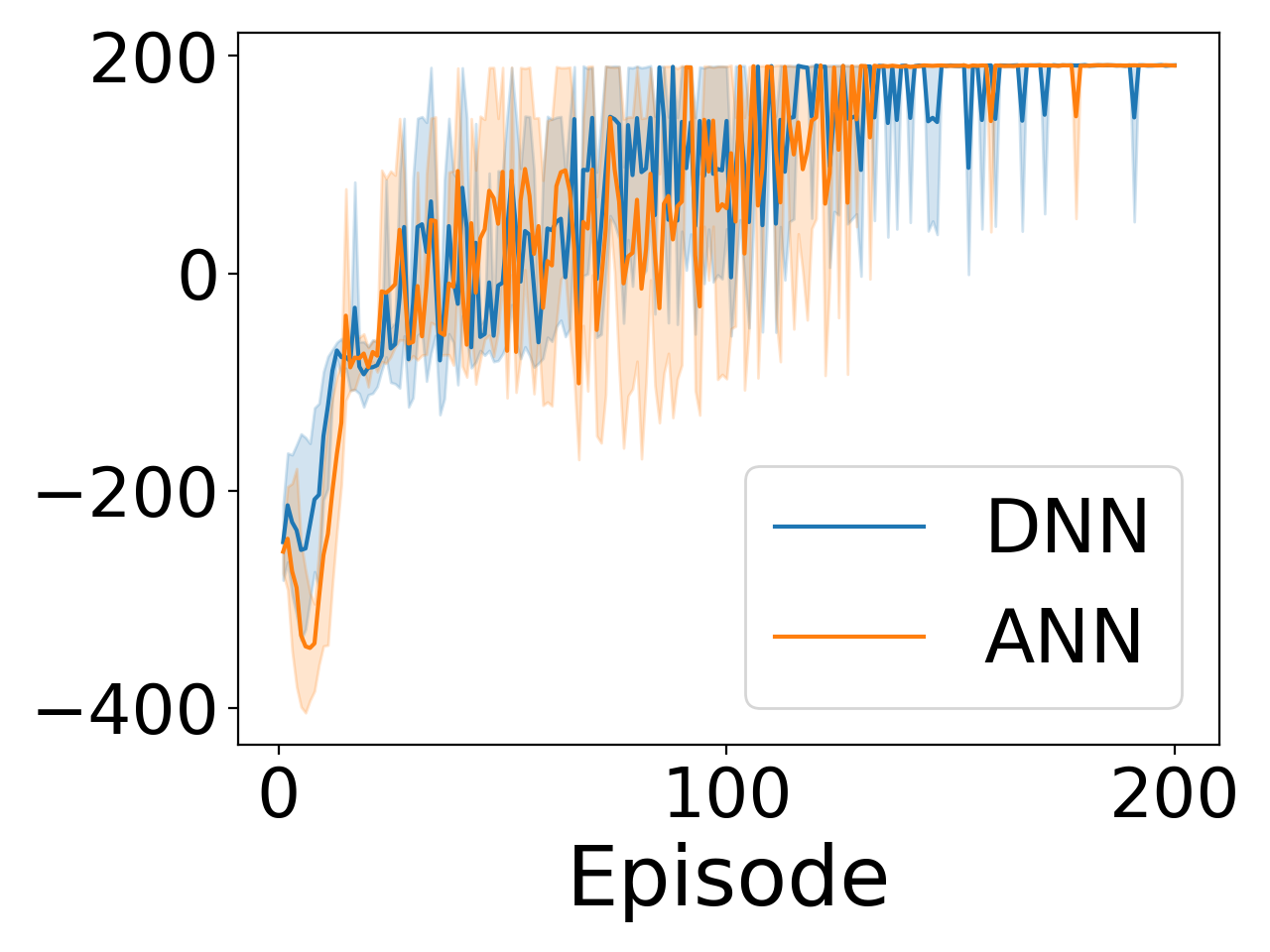}
				\caption{B3}
				\label{fig:reward_com_b3}
			\end{subfigure}
			&
			\begin{subfigure}[b]{0.24\textwidth}
				\includegraphics[width=\textwidth]{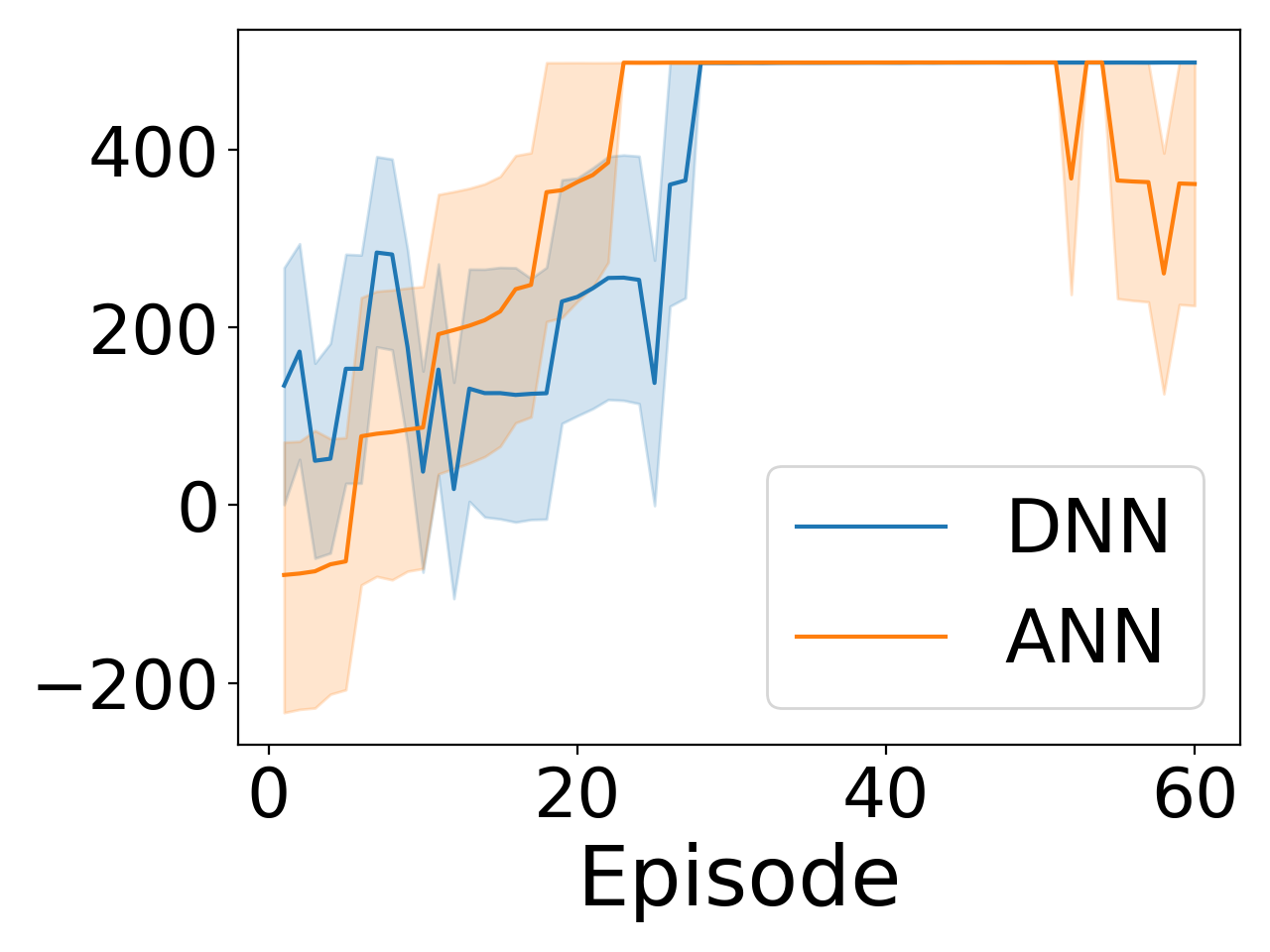}
				\caption{B4}
				\label{fig:reward_com_b4}
			\end{subfigure}
	\end{tabular}}
	\vspace{-2mm}
	\caption{Trend  of cumulative rewards (y-axis) of the systems controlled by ANNs (orange) and DNNs (blue) trained by  DDPG.}
	\label{fig:cumulative_reward}
	\vspace{-5mm}
\end{figure*}


\vspace{1ex}
\noindent \textbf{Experimental Setup.}
All experiments are conducted on a workstation equipped with a 32-core AMD Ryzen Threadripper CPU @ 3.6GHz and 256GB RAM, running Ubuntu 22.04.

\vspace{-1mm}
\subsection{Performance of Trained Neural Networks}
\label{sub:perf}
\begin{wrapfigure}[10]{r}{0.34\textwidth}
	\vspace{-2cm}
	\begin{minipage}{0.34\textwidth}
		\begin{table}[H]
			\centering
			\caption{Training time (s).}
			\setlength{\tabcolsep}{8pt}
			\label{tab:train_time_comparison}	
			\begin{tabular}{c|rr}
				\toprule
				\textbf{Task}   &   ANN        	&	DNN         \\\midrule
				\textbf{B1}    	& 13.7       	& 11.0       	  \\   
				\textbf{B2} 	& 7.4     		& 6.6     		  \\   
				\textbf{B3} 	& 6.4     		& 5.1     		  \\   
				\textbf{B4} 	& 5.8     		& 3.2     		  \\  
				\textbf{B5} 	& 57.4     		& 49.8     		    \\ 
				\textbf{Tora} 	& 	47.2      	& 44.3      	 \\    
				\textbf{ACC} 	&	23.4     	& 21.6     	\\ \bottomrule
				
			\end{tabular}
			\vspace{-4mm}
		\end{table}
	\end{minipage}
\end{wrapfigure}
We show that the extended abstract neural networks can be trained to achieve comparable performance against those conventional ones that have the same architectures and activation functions and are trained in the same approach. For each case, we train 5 times and record the cumulative reward during the training process with and without the abstraction layer. Figure~\ref{fig:cumulative_reward} unfolds a comparison of the trend of cumulative rewards during training between these two training approaches in B1-B4 (the other three are given in Appendix~\ref{app:train_reward_com}). The solid lines and the shadows indicate the average reward and 95\% confidence interval, respectively. The results show that an extended abstract neural network can make near-optimal decisions even under the constraint that it must yield the same action on each partitioned interval box. 
Importantly, the abstraction-based training incurs little and negligible time overhead only in several seconds, as shown in  Table~\ref{tab:train_time_comparison}. 


\subsection{Tightness and Efficiency}
\label{subsec:tightness}

We compare the tightness of the over-approximated reachable states by plotting the over-approximation sets computed by our approach and the state-of-the-art white-box tools including Polar \cite{huang2022polar} and Verisig 2.0 \cite{ivanov2021verisig}.
Because \BBReach~ is designed for ANNs-controlled systems, while the white-box tools are for DNNs-controlled ones, the policy models for each task are different. To make the comparison as fair as possible, 
we use the same network architecture to train the ANN and DNN for the same task except that the ANN includes an additional abstraction layer. 
We also guarantee that all the trained systems can achieve the best cumulative reward for the same task. For instance, we initialize the neural networks with smaller weights as otherwise Verisig 2.0 would introduce larger over-approximation error
(\iftoggle{conf-ver}{see Appendix B in~\cite{ase2023tian-tr}}{see our observations  in Appendix~\ref{subsec:big_weights}}). 
In particular, we also simulated the trained systems and recorded trajectories as the baseline.



\begin{figure}[t]
	\begin{minipage}[b]{0.08\linewidth}
\begin{tikzpicture}
\tikz \node [draw] at (0.3,-1)  {B1};
\end{tikzpicture}
	\end{minipage}
	\begin{minipage}[b]{0.29\linewidth}
		\centering
		{ \textbf{\BBReach}}\\
		 \vspace{1ex}
		\includegraphics[scale=0.20]{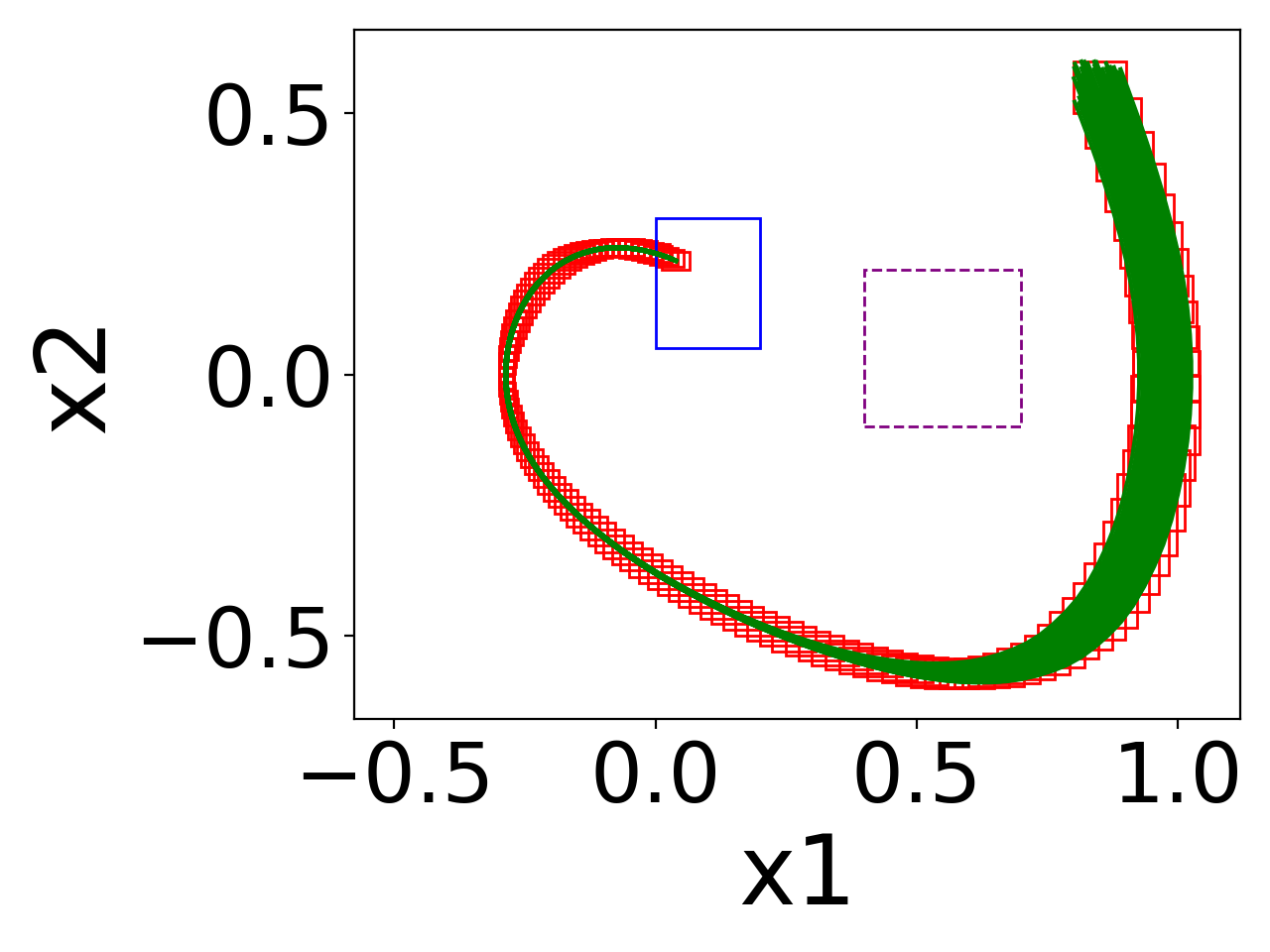}
	\end{minipage}
		\begin{minipage}[b]{0.28\linewidth}
		\centering
		{ \textbf{Verisig 2.0}}\\
		\vspace{1ex}
		\includegraphics[scale=0.20]{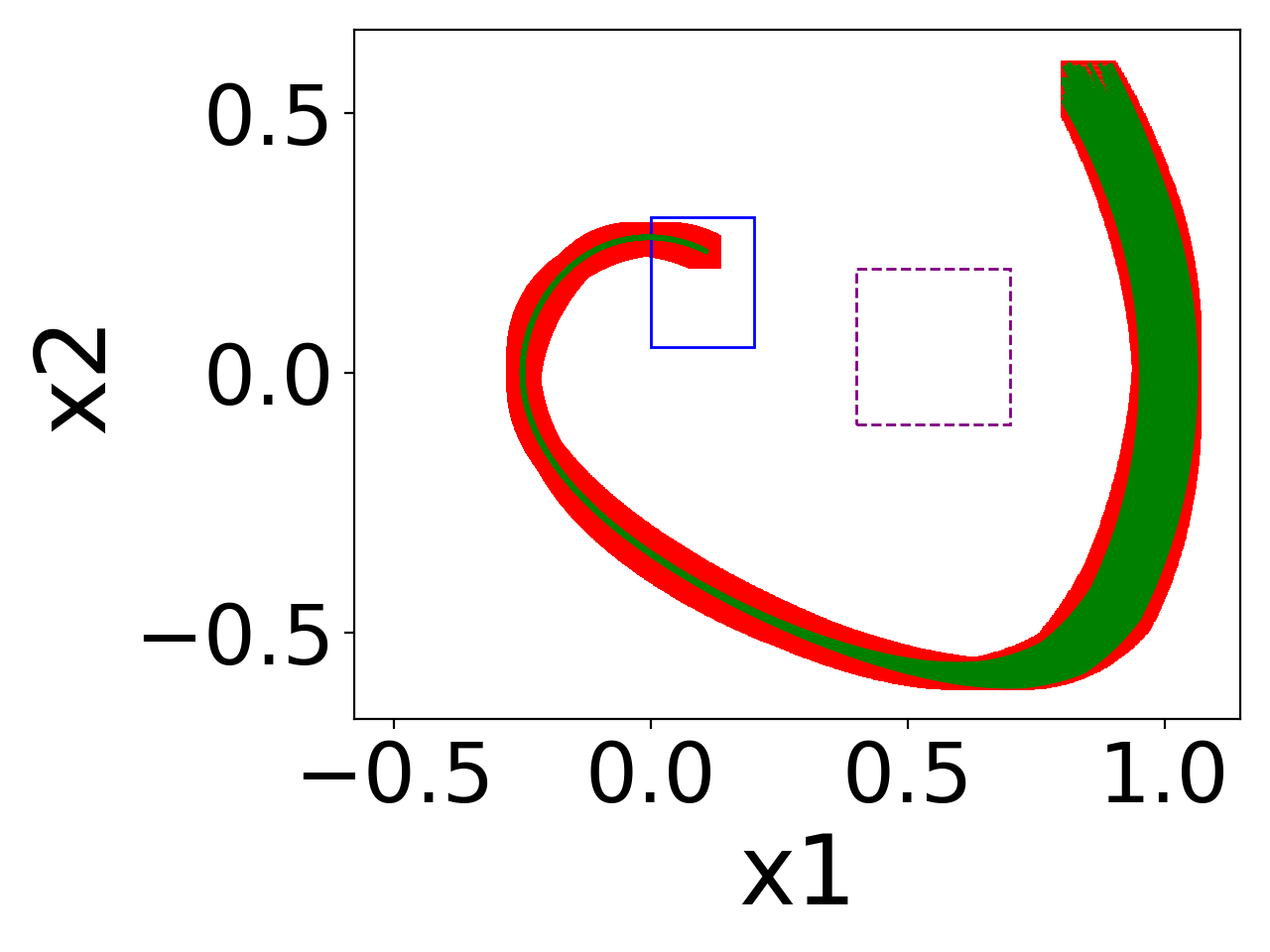}
	\end{minipage}
	\begin{minipage}[b]{0.29\linewidth}
		\centering
	{ \textbf{Polar}}\\
	\vspace{1ex}
	\includegraphics[scale=0.20]{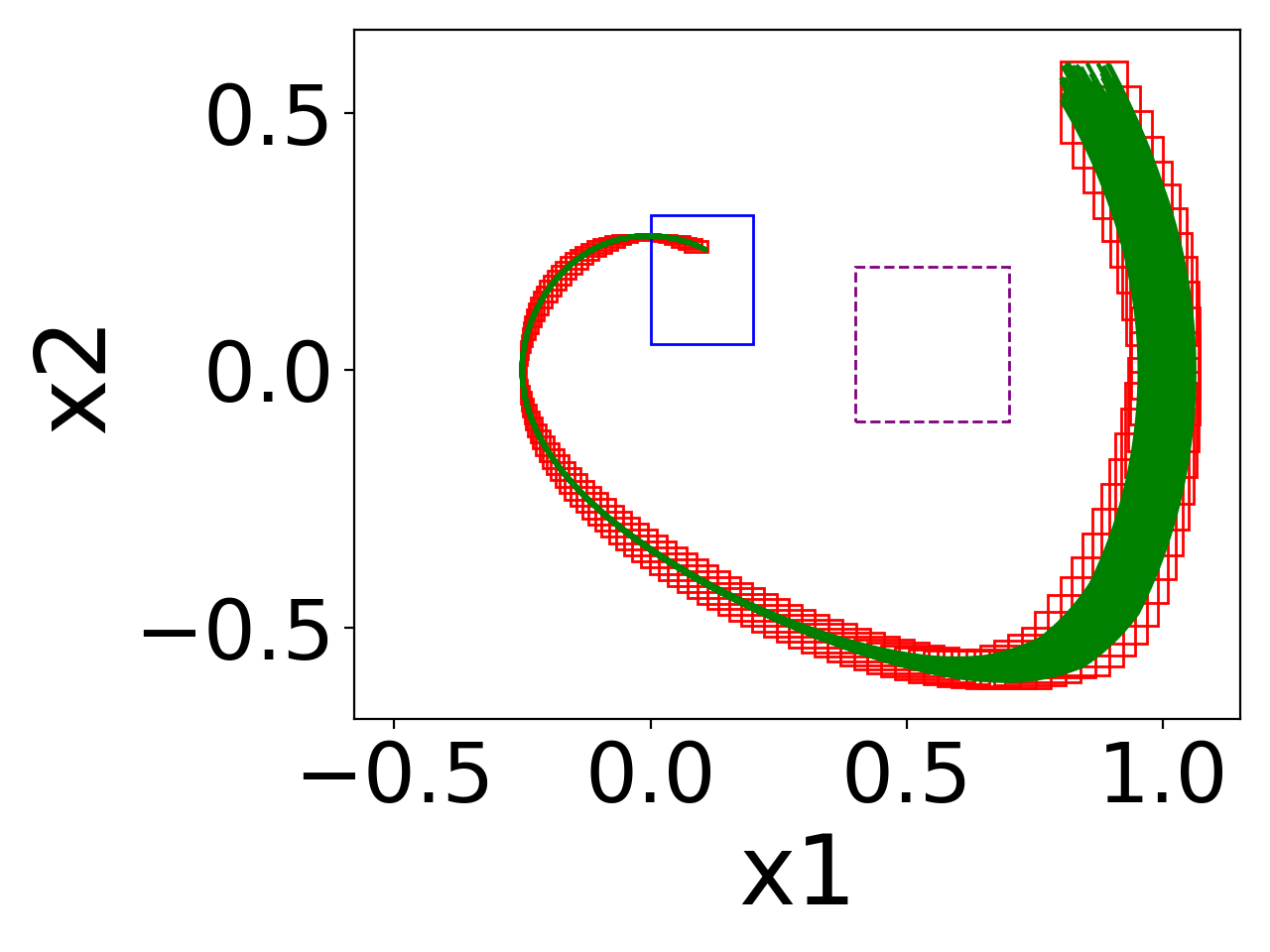}
	\end{minipage}
\\
\vspace{1ex}
	\begin{minipage}[b]{0.07\linewidth}
\begin{tikzpicture}
\tikz \node [draw] at (0.4,-1)  {B2};	
\end{tikzpicture}
	\end{minipage}
	\begin{minipage}[b]{0.29\linewidth}
		\centering
		\includegraphics[scale=0.20]{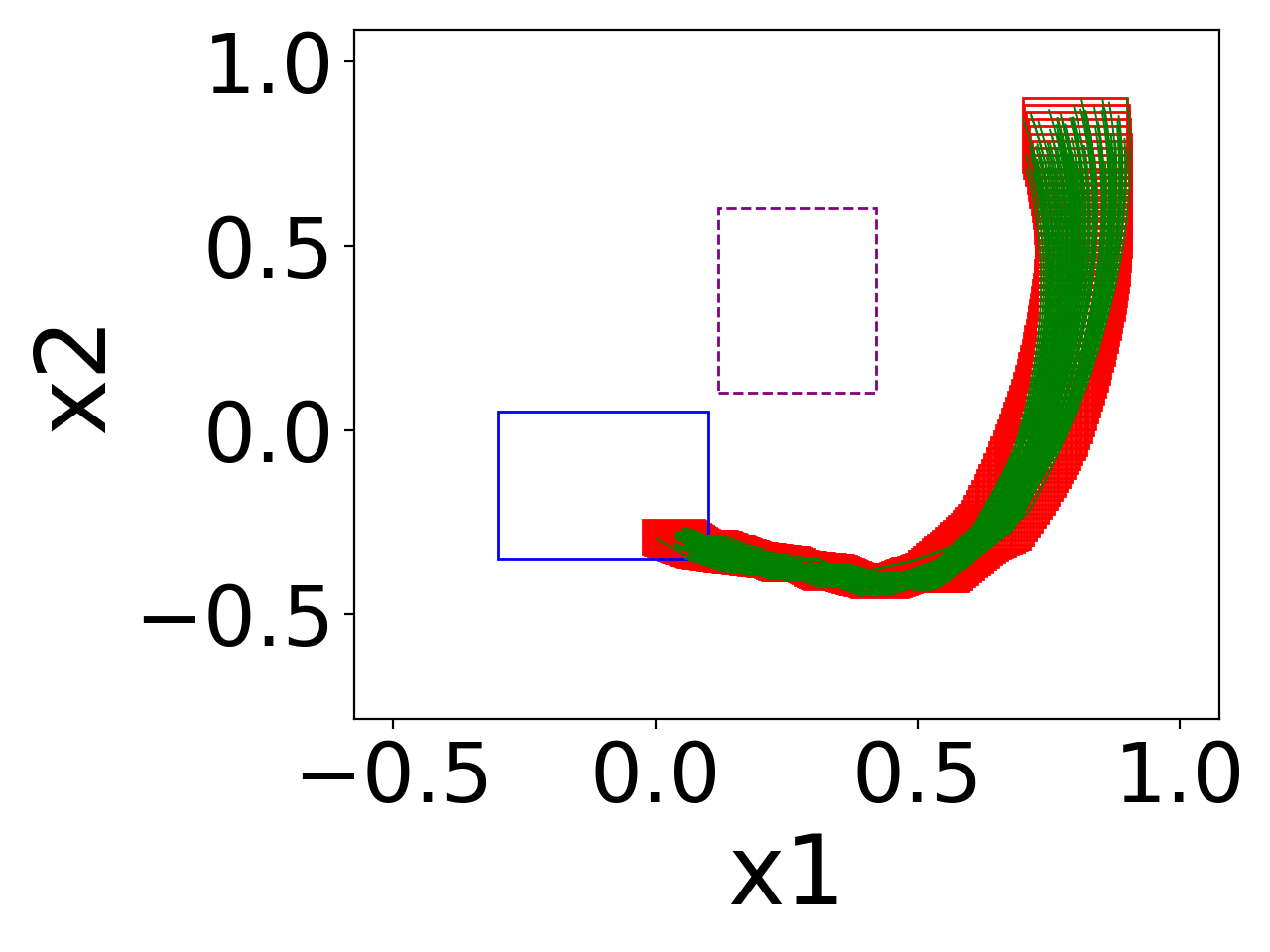}
	\end{minipage}
		\begin{minipage}[b]{0.29\linewidth}
		\centering
		\includegraphics[scale=0.20]{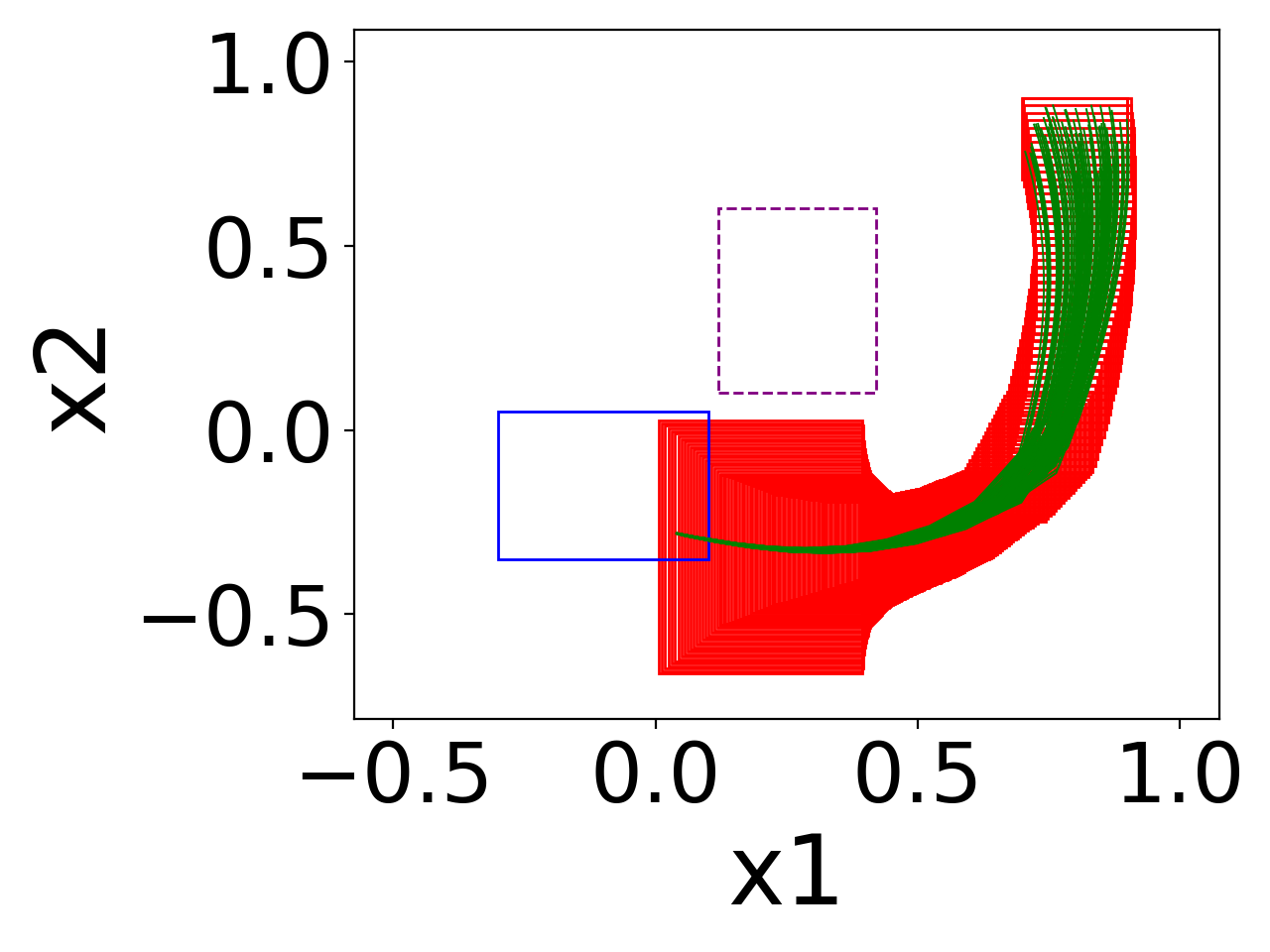}
	\end{minipage}
	\begin{minipage}[b]{0.29\linewidth}
		\centering
	\includegraphics[scale=0.20]{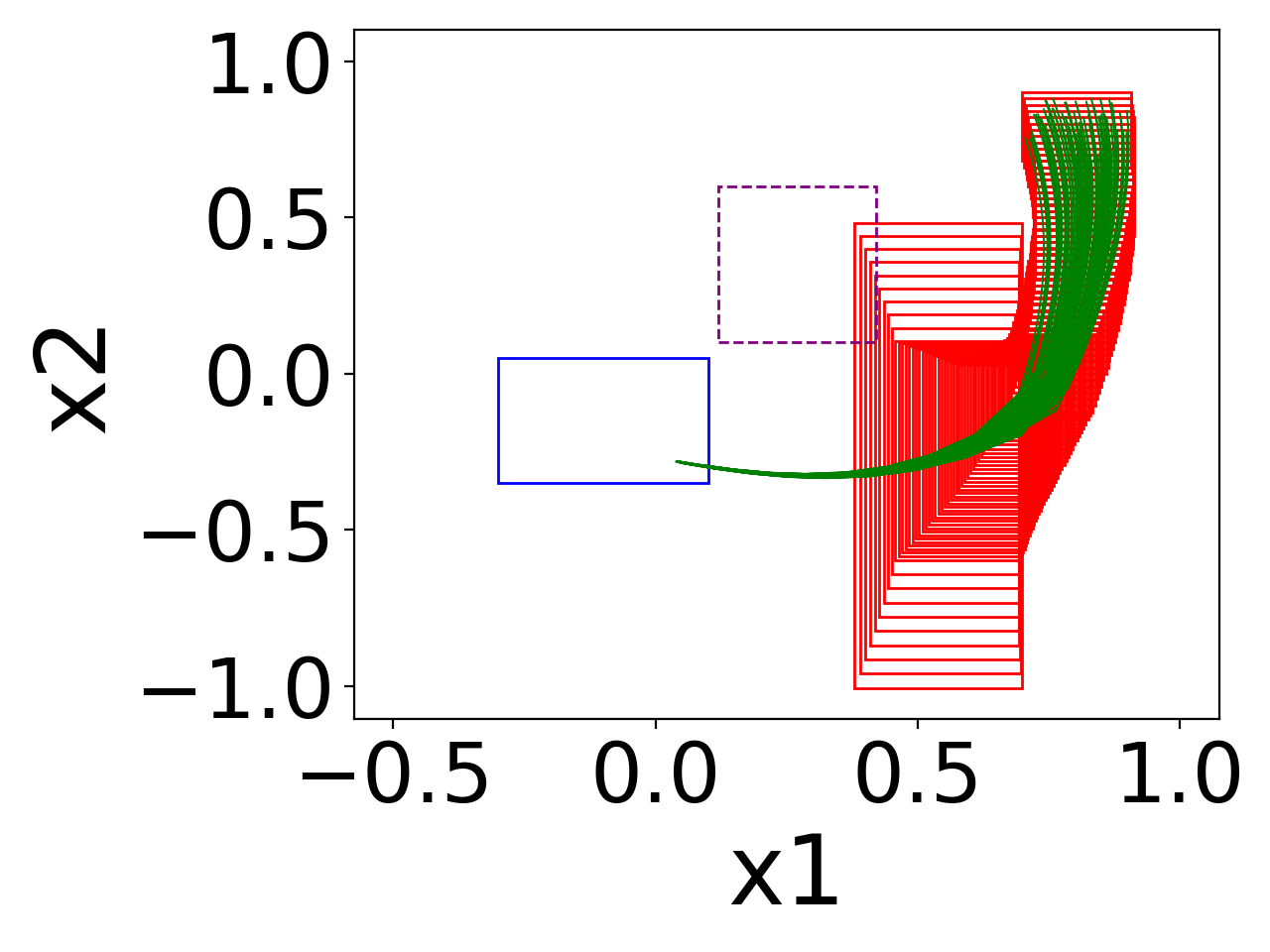}
	\end{minipage}
\\
\vspace{1ex}
	\begin{minipage}[b]{0.07\linewidth}
\begin{tikzpicture}
\tikz \node [draw] at (0.5,-1)  {Tora};		
\end{tikzpicture}
	\end{minipage}
	\begin{minipage}[b]{0.29\linewidth}
		\centering
		\includegraphics[scale=0.20]{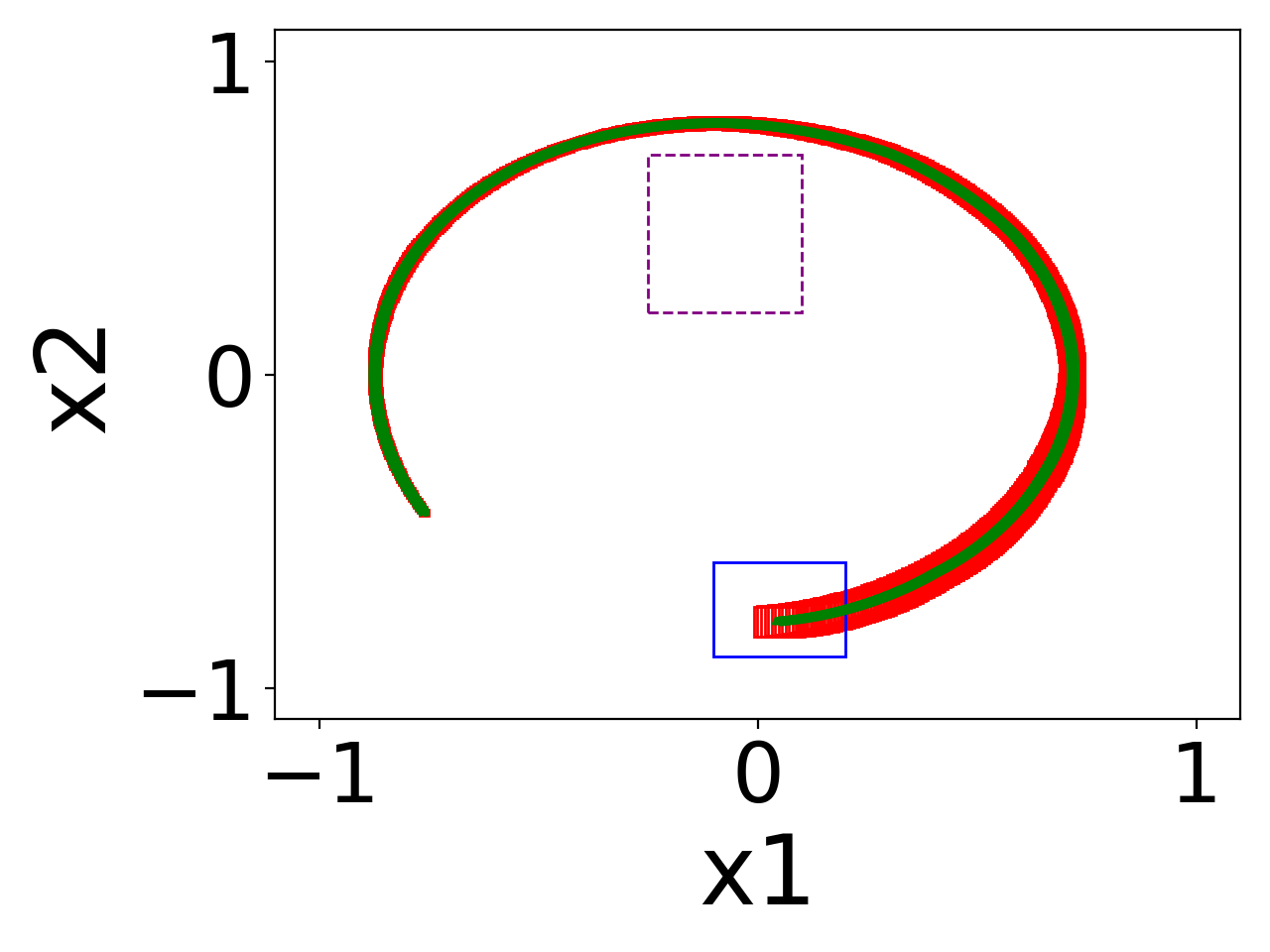}
	\end{minipage}
		\begin{minipage}[b]{0.29\linewidth}
		\centering
		\includegraphics[scale=0.20]{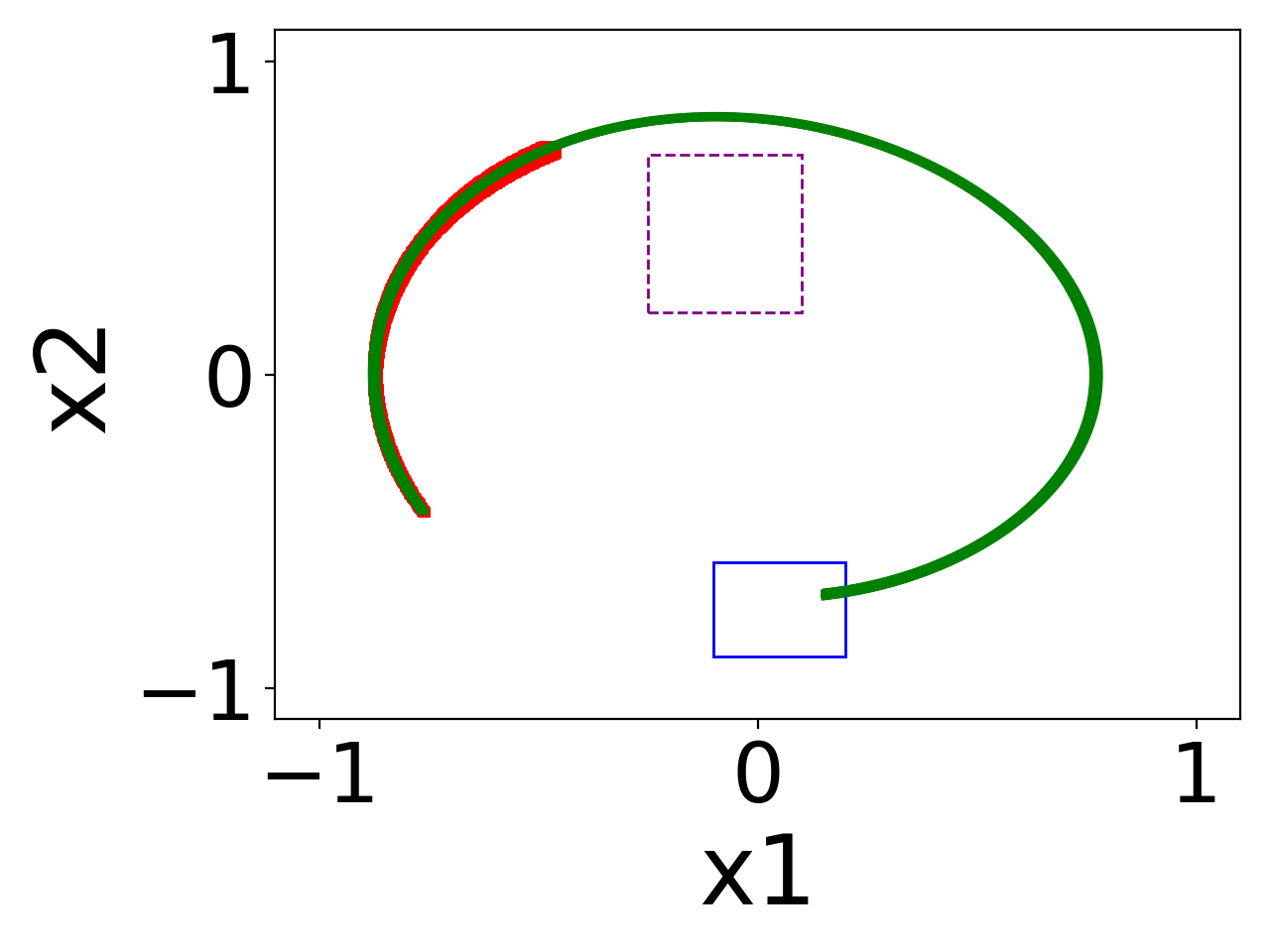}
	\end{minipage}
	\begin{minipage}[b]{0.29\linewidth}
		\centering
	\includegraphics[scale=0.20]{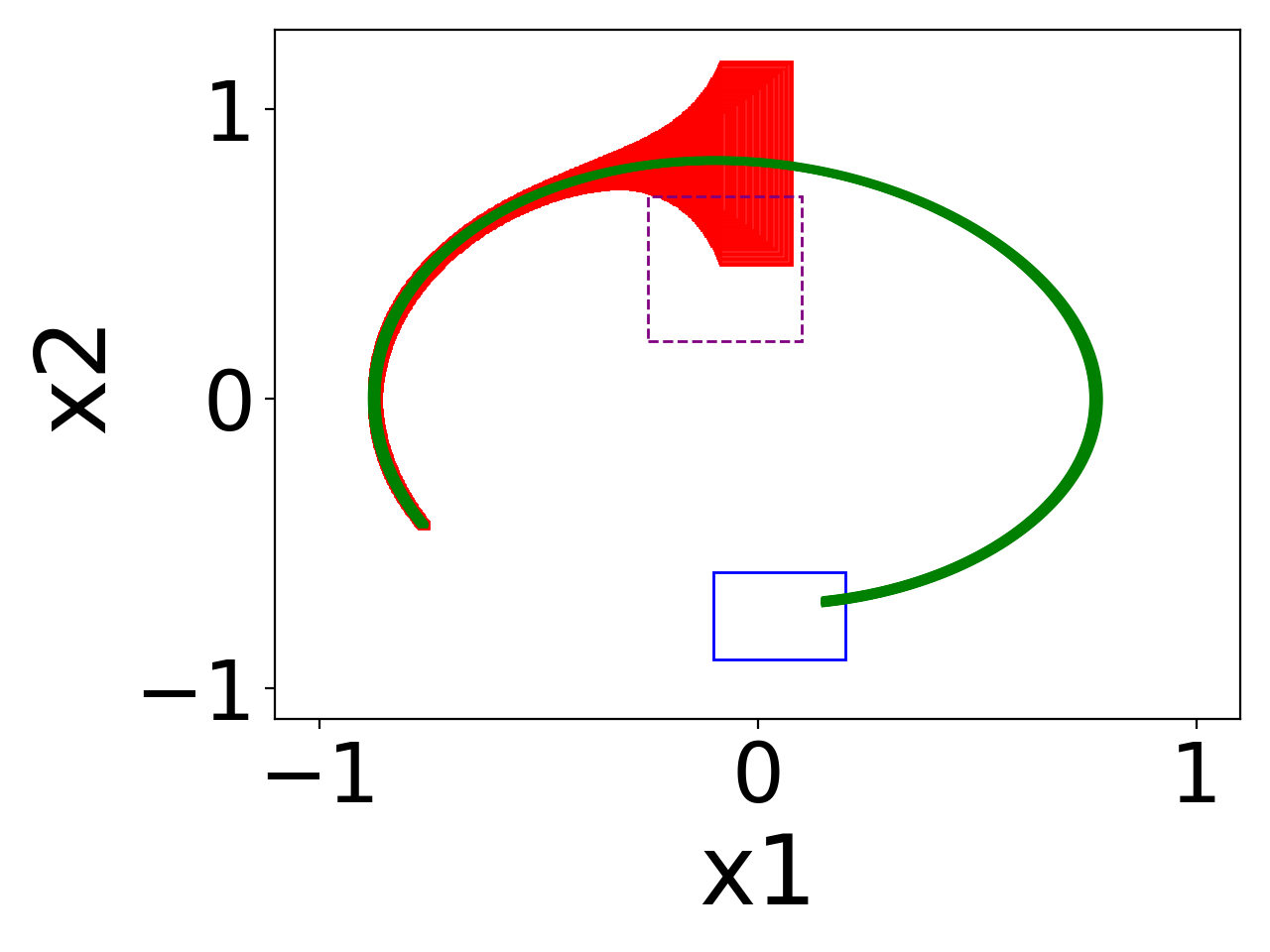}
	\end{minipage}
	\vspace{-3ex}
	\caption{Over-approximated  reachable states (red box: over-approximated set; green lines: simulation trajectories; blue box: goal    region; 
		purple box: unsafe region).}
	\label{fig:reachable_sets}
	\vspace{-5mm}
\end{figure}

Figure~\ref{fig:reachable_sets} shows three representative cases. Verification succeeds if the system never enters the unsafe region (purple box) before reaching the goal region (blue box) which is also known as satisfying the reach-avoid property. 
All the four tools successfully verify the reach-avoid property in case B1,  yet Verisig 2.0 is less tight than the other two. In case B2, \BBReach~outperforms the other two tools and succeeds in verifying the reach-avoid property. Both Verisig 2.0 and Polar terminate before reaching the goal region due to too large overestimation, and Polar outputs the over-approximation sets that intersect with the unsafe region. Nevertheless, the simulation results show the trained DNN-controlled system should satisfy the reach-avoid requirements. 
For Tora, \BBReach\ significantly surpasses other tools. None of the two white-box tools finishes before reaching the goal region because of the huge over-approximation error. For instance, the resulting bound of action, upon Verisig 2.0's termination, 
reaches $10^7$ which is too large to proceed, although the increase of reachable states by simulation is approximately in linear. 
The comparison results for B3, B4, B5, and ACC are similar as for B1. 
\iftoggle{conf-ver}{
	We refer to Appendix D of technical report~\cite{ase2023tian-tr} for more detailed results. }{
	We refer to Appendix~\ref{subsec:tight_com} for more detailed results.  
}

\begin{table}[t]
	\centering
	\footnotesize    		
	\caption{The verification results of reach-avoid properties and the time cost (s).}
	\label{tab:time_comparison}
	\renewcommand{\arraystretch}{0.93}
	\resizebox{\textwidth}{!}{
		\begin{tabular}{|c|c|r|r|r|c|r|r|r|r|c|r|r|r|c|}
			\hline
			\multirow{2}{*}{\textbf{Task}} & \multirow{2}{*}{\textbf{Dim}} & \multicolumn{1}{c|}{\multirow{2}{*}{\textbf{Network}}} &  \multicolumn{3}{c|}{\textbf{\BBReach}}   &                                                \multicolumn{5}{c|}{\textbf{Verisig 2.0}}                                                 &                                      \multicolumn{4}{c|}{\textbf{Polar}}                                      \\ \cline{4-15}
			~                &               ~               &                                                      ~ & \textbf{1C} & \textbf{20Cs} & \textbf{VR} & \textbf{1C} & \multicolumn{1}{c|}{\textbf{Impr.}} & \textbf{20Cs} & \multicolumn{1}{c|}{\textbf{Impr.}} &          \textbf{VR}           &   \textbf{1C} & \multicolumn{1}{c|}{\textbf{Impr.}} & \multicolumn{1}{c|}{\textbf{Impr.$^*$}} &  \textbf{VR}  \\ \hline\hline
			\multirow{4}{*}{B1}       &      \multirow{4}{*}{2}       &                                   Tanh$_{2 \times 20}$ &        45.7 &          6.88 & \checkmark  &          45 &      \color{Mahogany}{0.98$\times$} &            38 &    \color{OliveGreen}{5.52$\times$} &           \checkmark           &            17 &      \color{Mahogany}{0.37$\times$} &        \color{OliveGreen}{2.47$\times$} &  \checkmark   \\ 
			~                &               ~               &                                  Tanh$_{3 \times 100}$ &        42.8 &          5.53 & \checkmark  &         413 &    \color{OliveGreen}{9.65$\times$} &           123 &   \color{OliveGreen}{22.24$\times$} &           \checkmark           &           125 &    \color{OliveGreen}{2.92$\times$} &       \color{OliveGreen}{22.60$\times$} &  \checkmark   \\ \cline{3-15}
			~                &               ~               &                                   ReLU$_{2 \times 20}$ &        42.9 &          6.44 & \checkmark  &         --- &                                 --- &           --- &                                 --- & \multirow{2}{*}{\ding{55}$^c$} &             3 &      \color{Mahogany}{0.07$\times$} &          \color{Mahogany}{0.47$\times$} &  \checkmark   \\ 
			~                &               ~               &                                  ReLU$_{3 \times 100}$ &        52.5 &          8.65 & \checkmark  &         --- &                                 --- &           --- &                                 --- &                                &           --- &                                 --- &                                     --- & \ding{55}$^b$ \\ \hline
			\multirow{4}{*}{B2}       &      \multirow{4}{*}{2}       &                                   Tanh$_{2 \times 20}$ &        10.0 &          1.19 & \checkmark  &         5.2 &      \color{Mahogany}{0.52$\times$} &           4.1 &    \color{OliveGreen}{3.45$\times$} &         \ding{55}$^a$          &             5 &      \color{Mahogany}{0.50$\times$} &        \color{OliveGreen}{4.20$\times$} &  \checkmark   \\ 
			~                &               ~               &                                  Tanh$_{3 \times 100}$ &        10.8 &          1.36 & \checkmark  &         --- &                                 --- &           --- &                                 --- &         \ding{55}$^b$          &           --- &                                 --- &                                     --- & \ding{55}$^b$ \\ \cline{3-15}
			~                &               ~               &                                   ReLU$_{2 \times 20}$ &         8.6 &          1.30 & \checkmark  &         --- &                                 --- &           --- &                                 --- & \multirow{2}{*}{\ding{55}$^c$} &             3 &      \color{Mahogany}{0.35$\times$} &        \color{OliveGreen}{2.31$\times$} &  \checkmark   \\ 
			~                &               ~               &                                  ReLU$_{3 \times 100}$ &        12.4 &          1.42 & \checkmark  &         --- &                                 --- &           --- &                                 --- &                                &           --- &                                 --- &                                     --- & \ding{55}$^b$ \\ \hline
			\multirow{4}{*}{B3}       &      \multirow{4}{*}{2}       &                                   Tanh$_{2 \times 20}$ &         4.2 &          0.47 & \checkmark  &          36 &    \color{OliveGreen}{8.57$\times$} &            28 &   \color{OliveGreen}{59.57$\times$} &           \checkmark           &            18 &    \color{OliveGreen}{4.29$\times$} &       \color{OliveGreen}{39.29$\times$} &  \checkmark   \\ 
			~                &               ~               &                                  Tanh$_{3 \times 100}$ &         4.3 &          0.50 & \checkmark  &         357 &   \color{OliveGreen}{83.02$\times$} &            88 &  \color{OliveGreen}{176.00$\times$} &           \checkmark           &            91 &   \color{OliveGreen}{91.16$\times$} &      \color{OliveGreen}{182.00$\times$} &  \checkmark   \\ \cline{3-15}
			~                &               ~               &                                   ReLU$_{2 \times 20}$ &         4.1 &          0.47 & \checkmark  &         --- &                                 --- &           --- &                                 --- & \multirow{2}{*}{\ding{55}$^c$} &             8 &    \color{OliveGreen}{1.95$\times$} &       \color{OliveGreen}{17.02$\times$} &  \checkmark   \\ 
			~                &               ~               &                                  ReLU$_{3 \times 100}$ &         4.2 &          0.47 & \checkmark  &         --- &                                 --- &           --- &                                 --- &                                &            14 &    \color{OliveGreen}{3.33$\times$} &       \color{OliveGreen}{29.79$\times$} &  \checkmark   \\ \hline
			\multirow{4}{*}{B4}       &      \multirow{4}{*}{3}       &                                   Tanh$_{2 \times 20}$ &         1.3 &          0.32 & \checkmark  &           7 &    \color{OliveGreen}{5.38$\times$} &           5.1 &   \color{OliveGreen}{15.94$\times$} &           \checkmark           &             5 &    \color{OliveGreen}{3.85$\times$} &       \color{OliveGreen}{15.63$\times$} &  \checkmark   \\ 
			~                &               ~               &                                  Tanh$_{3 \times 100}$ &         1.0 &          0.24 & \checkmark  &         114 &  \color{OliveGreen}{114.00$\times$} &            31 &  \color{OliveGreen}{129.17$\times$} &           \checkmark           &            27 &   \color{OliveGreen}{27.00$\times$} &      \color{OliveGreen}{112.50$\times$} &  \checkmark   \\ \cline{3-14}
			~                &               ~               &                                   ReLU$_{2 \times 20}$ &         1.9 &          0.48 & \checkmark  &         --- &                                 --- &           --- &                                 --- & \multirow{2}{*}{\ding{55}$^c$} &             2 &    \color{OliveGreen}{1.05$\times$} &        \color{OliveGreen}{4.17$\times$} &  \checkmark   \\ 
			~                &               ~               &                                  ReLU$_{3 \times 100}$ &         1.8 &          0.43 & \checkmark  &         --- &                                 --- &           --- &                                 --- &                                &             5 &    \color{OliveGreen}{2.78$\times$} &       \color{OliveGreen}{11.63$\times$} &  \checkmark   \\ \hline
			\multirow{4}{*}{B5}       &      \multirow{4}{*}{3}       &                                  Tanh$_{3 \times 100}$ &        13.3 &          2.48 & \checkmark  &         157 &   \color{OliveGreen}{11.80$\times$} &            44 &   \color{OliveGreen}{17.74$\times$} &           \checkmark           &            38 &    \color{OliveGreen}{2.86$\times$} &       \color{OliveGreen}{15.32$\times$} &  \checkmark   \\ 
			~                &               ~               &                                  Tanh$_{4 \times 200}$ &         8.2 &          1.63 & \checkmark  &        1443 &  \color{OliveGreen}{175.98$\times$} &           191 &  \color{OliveGreen}{117.18$\times$} &           \checkmark           &           157 &   \color{OliveGreen}{19.15$\times$} &       \color{OliveGreen}{96.32$\times$} &  \checkmark   \\ \cline{3-15}
			~                &               ~               &                                  ReLU$_{3 \times 100}$ &         5.8 &          1.08 & \checkmark  &         --- &                                 --- &           --- &                                 --- & \multirow{2}{*}{\ding{55}$^c$} &             7 &    \color{OliveGreen}{1.21$\times$} &        \color{OliveGreen}{6.48$\times$} &  \checkmark   \\ 
			~                &               ~               &                                  ReLU$_{4 \times 200}$ &        13.5 &          2.50 & \checkmark  &         --- &                                 --- &           --- &                                 --- &                                &            49 &    \color{OliveGreen}{3.63$\times$} &       \color{OliveGreen}{19.60$\times$} &  \checkmark   \\ \hline
			\multirow{4}{*}{Tora}      &      \multirow{4}{*}{4}       &                                   Tanh$_{3 \times 20}$ &       133.2 &          8.61 & \checkmark  &          69 &      \color{Mahogany}{0.52$\times$} &            46 &    \color{OliveGreen}{5.34$\times$} &           \checkmark           &            45 &      \color{Mahogany}{0.34$\times$} &        \color{OliveGreen}{5.23$\times$} &  \checkmark   \\ 
			~                &               ~               &                                  Tanh$_{4 \times 100}$ &       112.3 &          9.78 & \checkmark  &         --- &                                 --- &           --- &                                 --- &              ---               & \ding{55}$^b$ &                                 --- &                                     --- & \ding{55}$^b$ \\ \cline{3-15}
			~                &               ~               &                                   ReLU$_{3 \times 20}$ &       124.7 &          9.97 & \checkmark  &         --- &                                 --- &           --- &                                 --- & \multirow{2}{*}{\ding{55}$^c$} &            30 &      \color{Mahogany}{0.24$\times$} &        \color{OliveGreen}{3.01$\times$} &  \checkmark   \\ 
			~                &               ~               &                                  ReLU$_{4 \times 100}$ &       128.1 &          7.54 & \checkmark  &         --- &                                 --- &           --- &                                 --- &                                &            53 &      \color{Mahogany}{0.41$\times$} &        \color{OliveGreen}{7.03$\times$} &  \checkmark   \\ \hline
			\multirow{4}{*}{ACC}      &      \multirow{4}{*}{6}       &                                   Tanh$_{3 \times 20}$ &        15.4 &          4.53 & \checkmark  &         113 &    \color{OliveGreen}{7.34$\times$} &            50 &   \color{OliveGreen}{11.04$\times$} &           \checkmark           &            84 &    \color{OliveGreen}{5.45$\times$} &       \color{OliveGreen}{18.54$\times$} &  \checkmark   \\ 
			~                &               ~               &                                  Tanh$_{4 \times 100}$ &        15.2 &          4.51 & \checkmark  &        2617 &  \color{OliveGreen}{172.17$\times$} &           375 &   \color{OliveGreen}{83.15$\times$} &           \checkmark           &           677 &   \color{OliveGreen}{44.54$\times$} &      \color{OliveGreen}{150.11$\times$} &  \checkmark   \\ \cline{3-15}
			~                &               ~               &                                   ReLU$_{3 \times 20}$ &        15.2 &          4.45 & \checkmark  &         --- &                                 --- &           --- &                                 --- & \multirow{2}{*}{\ding{55}$^c$} &            26 &    \color{OliveGreen}{1.71$\times$} &        \color{OliveGreen}{5.84$\times$} &  \checkmark   \\ 
			~                &               ~               &                                  ReLU$_{4 \times 100}$ &        18.4 &          5.49 & \checkmark  &         --- &                                 --- &           --- &                                 --- &                                &            58 &    \color{OliveGreen}{3.15$\times$} &       \color{OliveGreen}{10.56$\times$} &  \checkmark   \\ \hline
	\end{tabular}}
	\begin{tablenotes}
		\small \item \textbf{Remarks.}
		Improvement: time speedup of \BBReach~compared to Verisig or Polar ($ $Verisig or Polar$/\BBReach$). * denotes the comparison between BBReach with 20 cores (Cs) and Polar. 
		Tanh/ReLU$_{n \times k}$: a neural network with the activation function Tanh/ReLU, $n$ hidden layers, and $k$ neurons per hidden layer.   
		VR:  verification result.
		$\checkmark$: the reach-avoid problem is successfully verified.
		\ding{55}$^{type}$: the reach-avoid problem cannot be verified due to $type$: (a) large over-approximation error, 
		(b) the calculation did not finish, (c) not applicable.  
		---: no data available due to \ding{55}$^{b}$ or \ding{55}$^{c}$.
	\end{tablenotes}
	\vspace{-7mm}
\end{table}

Table \ref{tab:time_comparison} shows the verification results of all the 28 instances in column \textbf{VR}. \BBReach\ successfully verifies all the instances, while Verisig 2.0 succeeds in 11 instances and Polar in 24 instances. Verisig 2.0 reports 1 unknown case (marked by \ding{55}$^{a}$, indicating that  over-approximated sets get outside of the goal region). Additionally, Verisig 2.0 and Polar report 1 case and 4 cases of terminating before reaching the goal region, respectively, denoted by \ding{55}$^{b}$. These also reflect that \BBReach\  is tighter and introduces less overestimation than other tools.  





\color{black}

\looseness=-1
Table \ref{tab:time_comparison} also shows the time cost. Note that Verisig~2.0 is not applicable to ReLU neural networks (marked by  \ding{55}$^c$). 
\BBReach\ costs much less time than Verisig 2.0 (up to 176$\times$ speedup) with parallelization enabled. Even with a single core, \BBReach\ incurs less overhead than Verisig 2.0 in most cases. Compared to Polar, \BBReach~consumes more time in dealing with small-sized neural networks in B1, B2, and Tora because the finer-grained abstraction granularity is chosen in the three cases, which affects the performance (see  Section~\ref{subsec:discussion}). 
Nevertheless, \BBReach~consumes less time in all the remaining cases than Polar.
In addition,  
\BBReach\ outperforms Polar with up to 182$\times$ speedup (the latest release of Polar does not support parallelization), thanks to the parallel acceleration.


The efficiency advantage of \BBReach~becomes more notable with larger networks such as Tanh$_{4\times 200}$, thanks to the black-box feature of our approach. \BBReach~consumes almost the same time even for larger neural networks as for small neural networks (e.g., Tanh$_{2\times 20}$). In contrast, the time cost of both the white-box approach almost always increases significantly with larger neural networks.
Moreover, Polar incurs more overhead to process the neural networks with the Tanh activation function compared to ReLU, while \BBReach~consumes similar times for both activation functions.  
Consequently, it is fair to conclude that \BBReach~is more efficient and scalable to large-sized neural networks with any activation functions. It is also evident that, via a decent design of neural networks, the reachability analysis for DNN-controlled systems is achievable while the planted decision-making neural networks are treated as black-box oracles,  with significant rightness and efficiency outperformance over the white-box approaches.

	\begin{figure}[t]
		\captionsetup[subfigure]{aboveskip=-1pt,belowskip=1pt}
		\begin{center}
			\setlength{\tabcolsep}{0.6mm}{
				\begin{tabular}{cccc}
					\begin{subfigure}[b]{0.24\textwidth}
						\includegraphics[width=\textwidth]{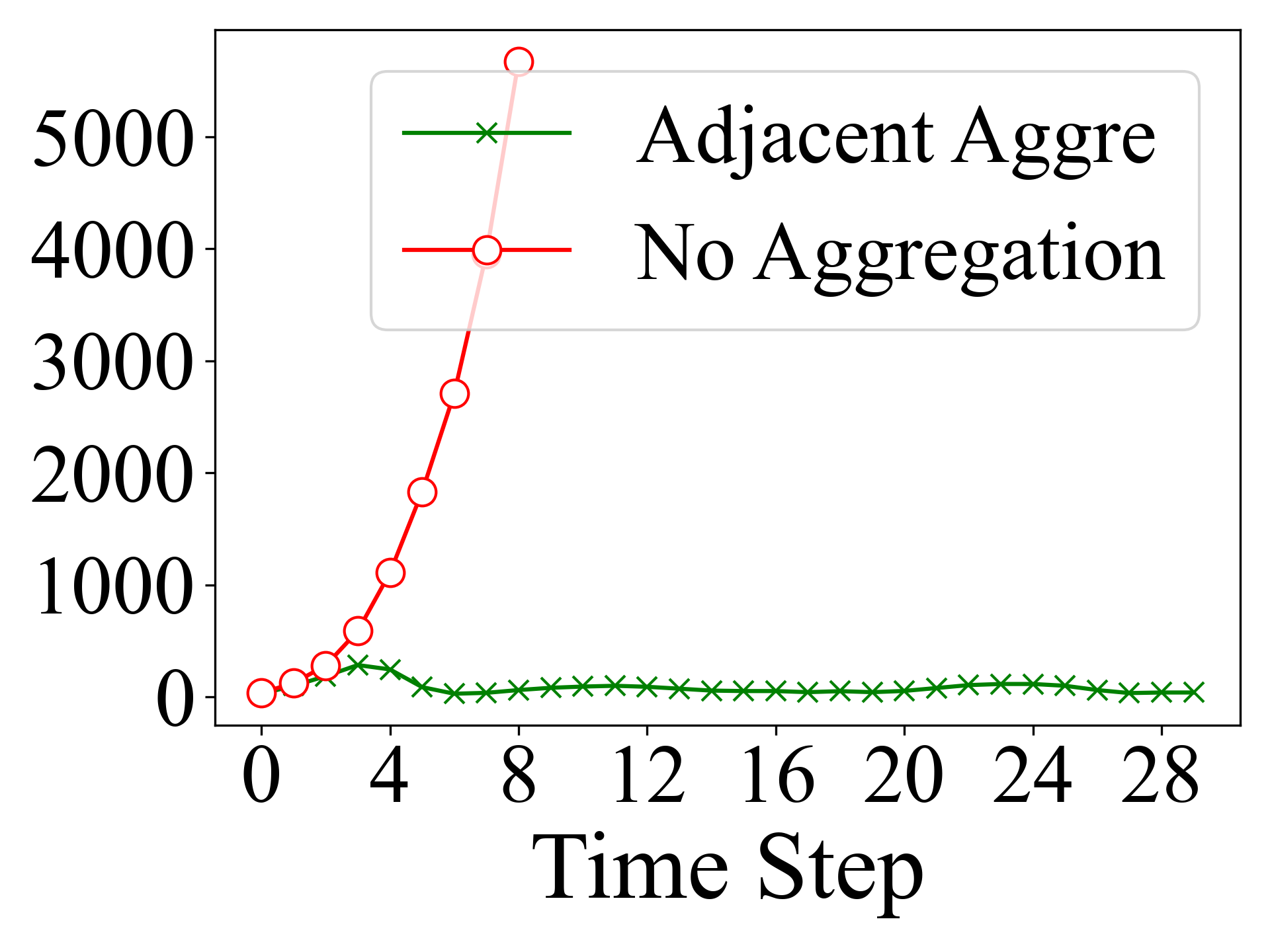}
						\caption{B1}
						\label{fig:b1_agg_com}
					\end{subfigure}&
					\begin{subfigure}[b]{0.24\textwidth}
						\includegraphics[width=\textwidth]{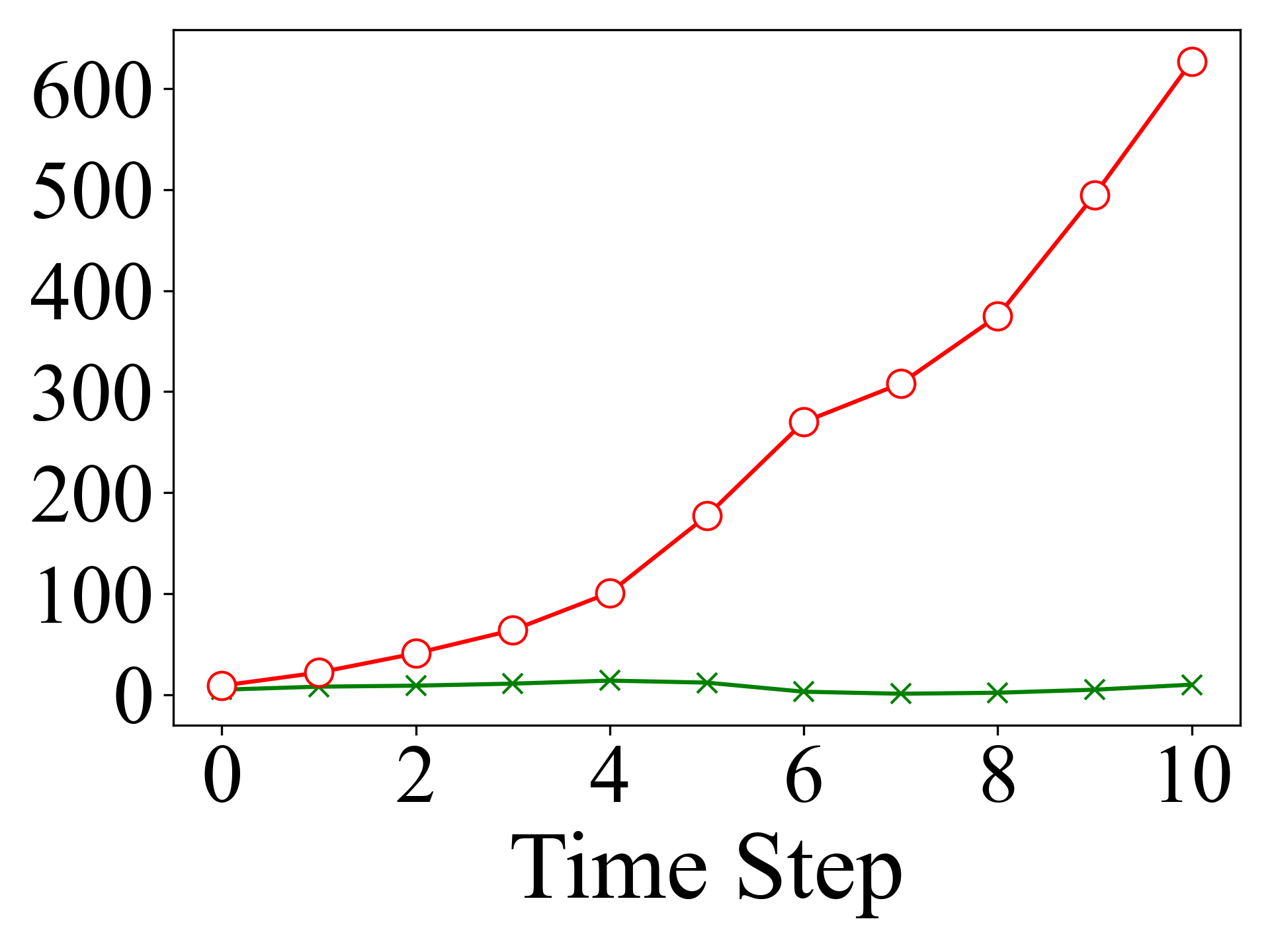}
						\caption{B2}
						\label{fig:b2_agg_com}
					\end{subfigure}&
					\begin{subfigure}[b]{0.24\textwidth}
						\includegraphics[width=\textwidth]{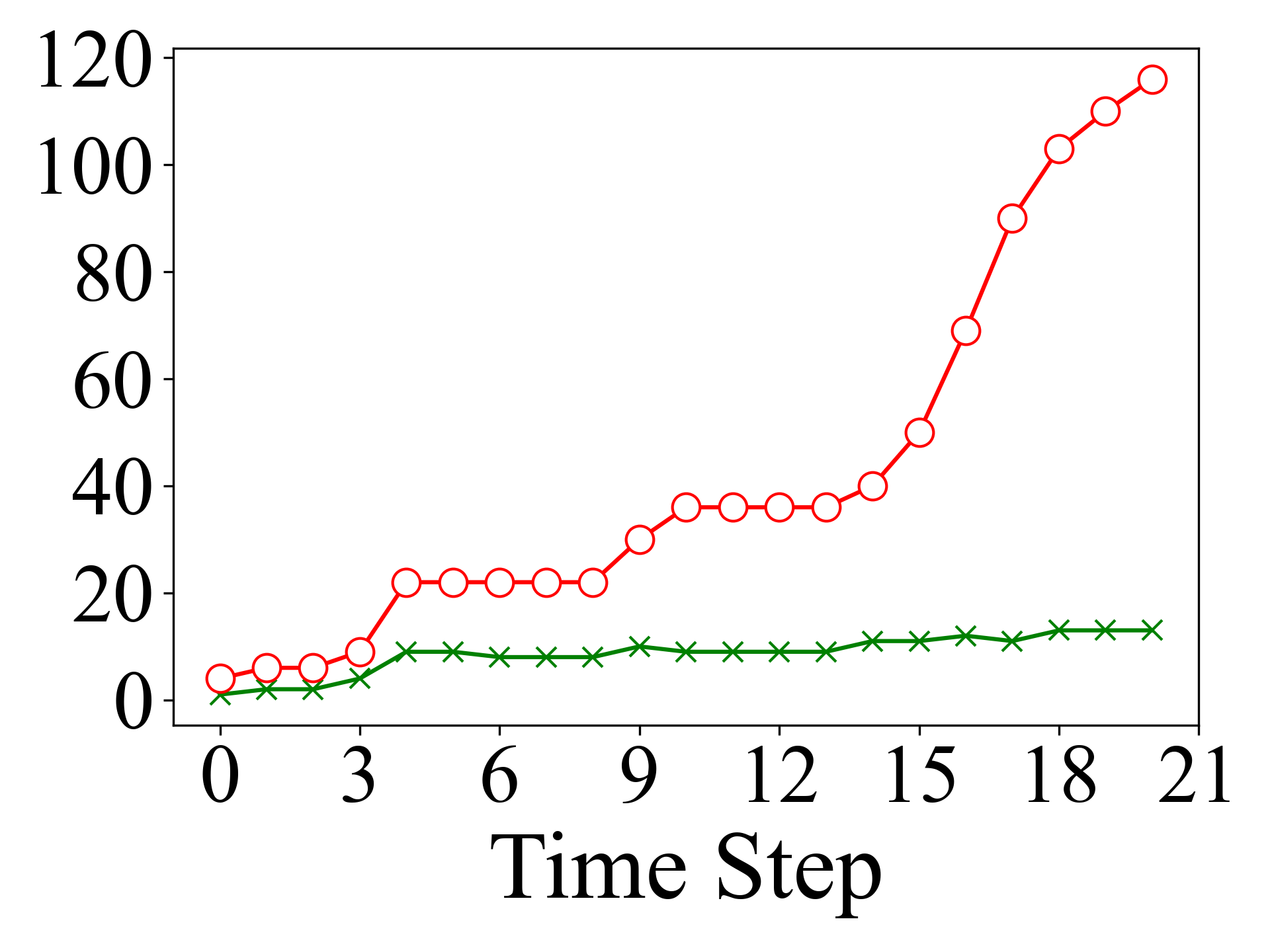}
						\caption{B3}
						\label{fig:b3_agg_com}
					\end{subfigure}
					&
					\begin{subfigure}[b]{0.24\textwidth}
						\includegraphics[width=\textwidth]{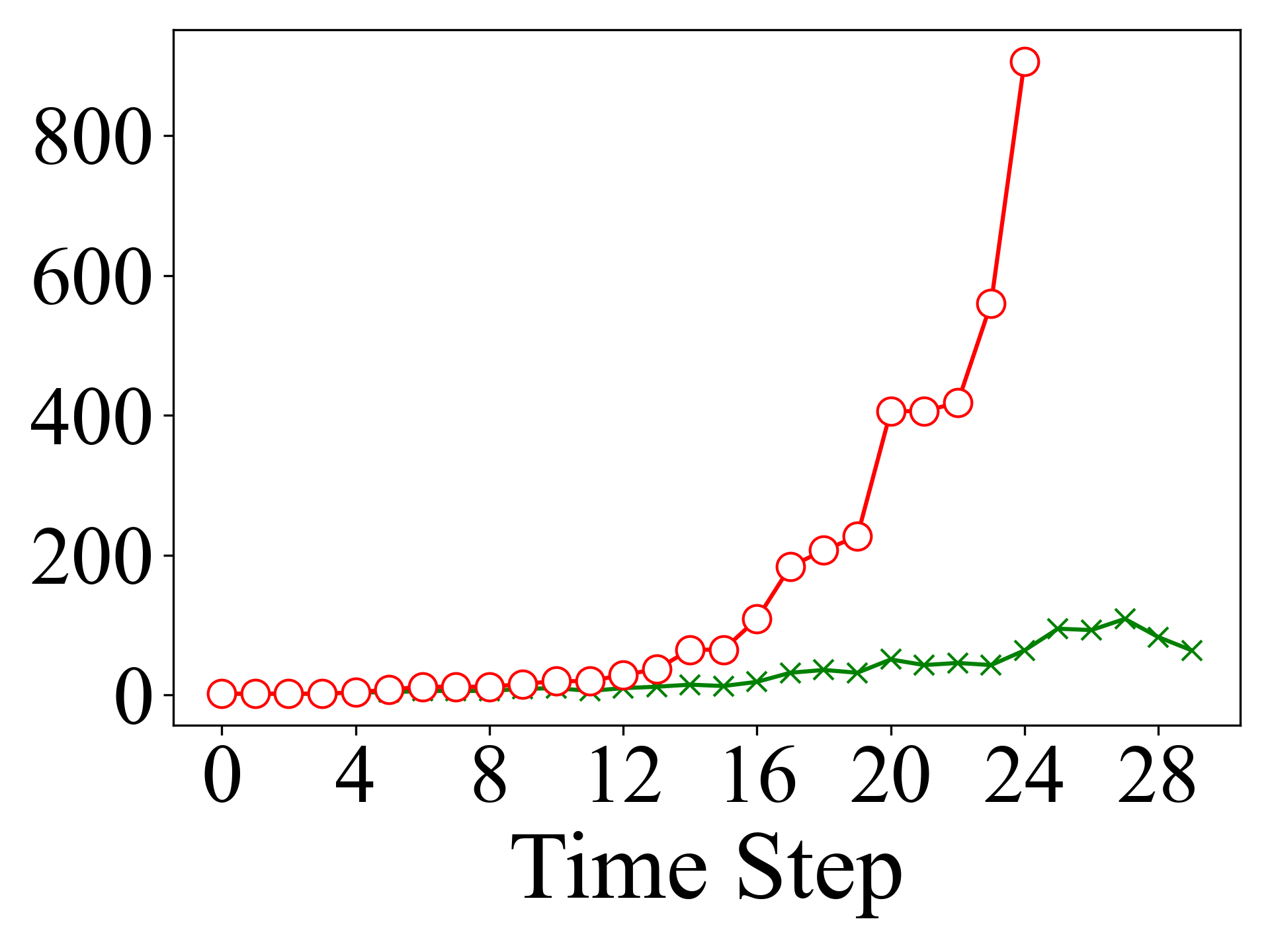}
						\caption{B4}
						\label{fig:b4_agg_com}
					\end{subfigure}
					
					
					\\


							\begin{subfigure}[b]{0.24\textwidth}
								\includegraphics[width=\textwidth]{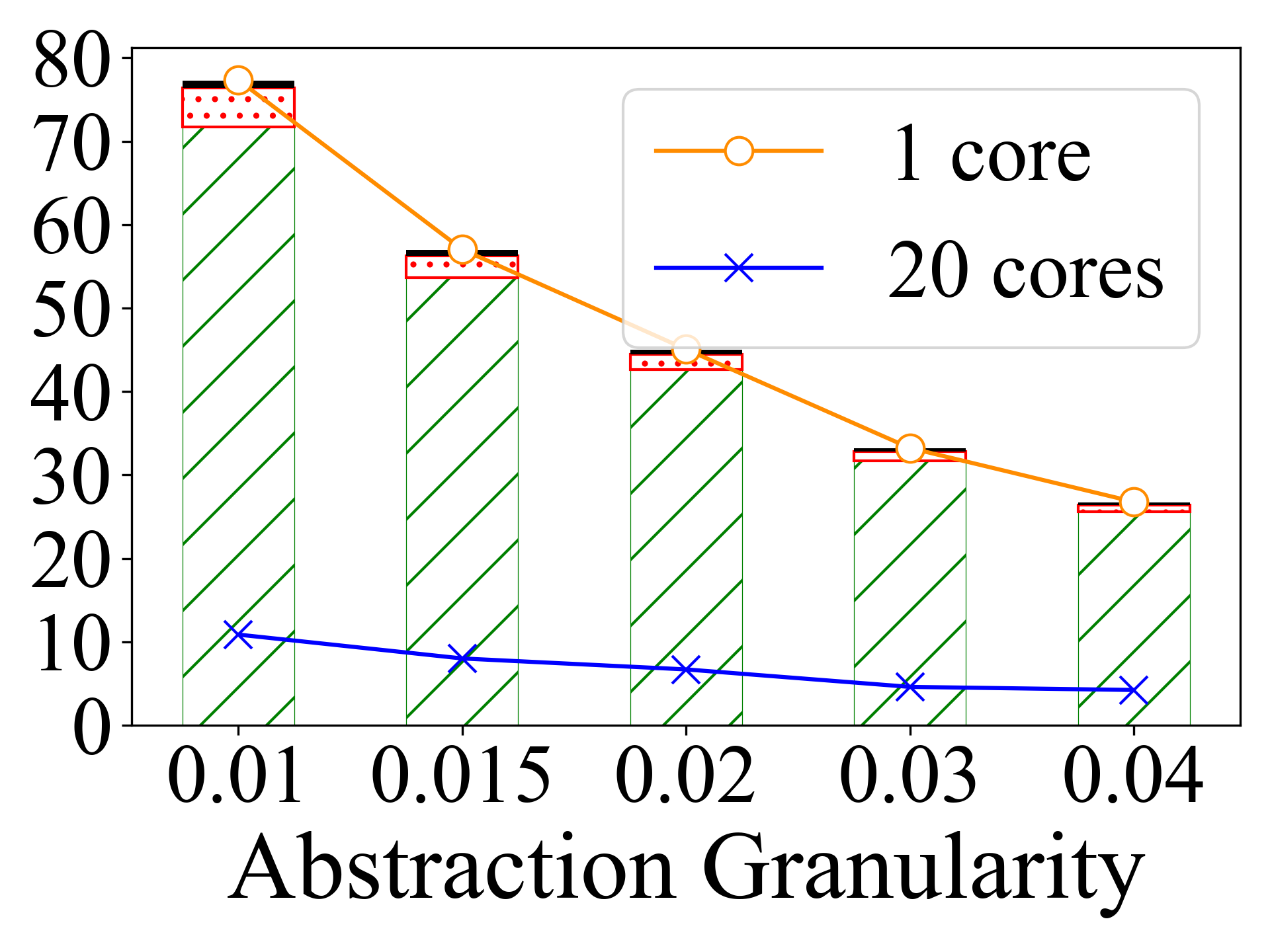}
								\caption{B1 (tanh)}
								\label{fig:b1_dec_tanh}
							\end{subfigure}&
							\begin{subfigure}[b]{0.24\textwidth}
								\includegraphics[width=\textwidth]{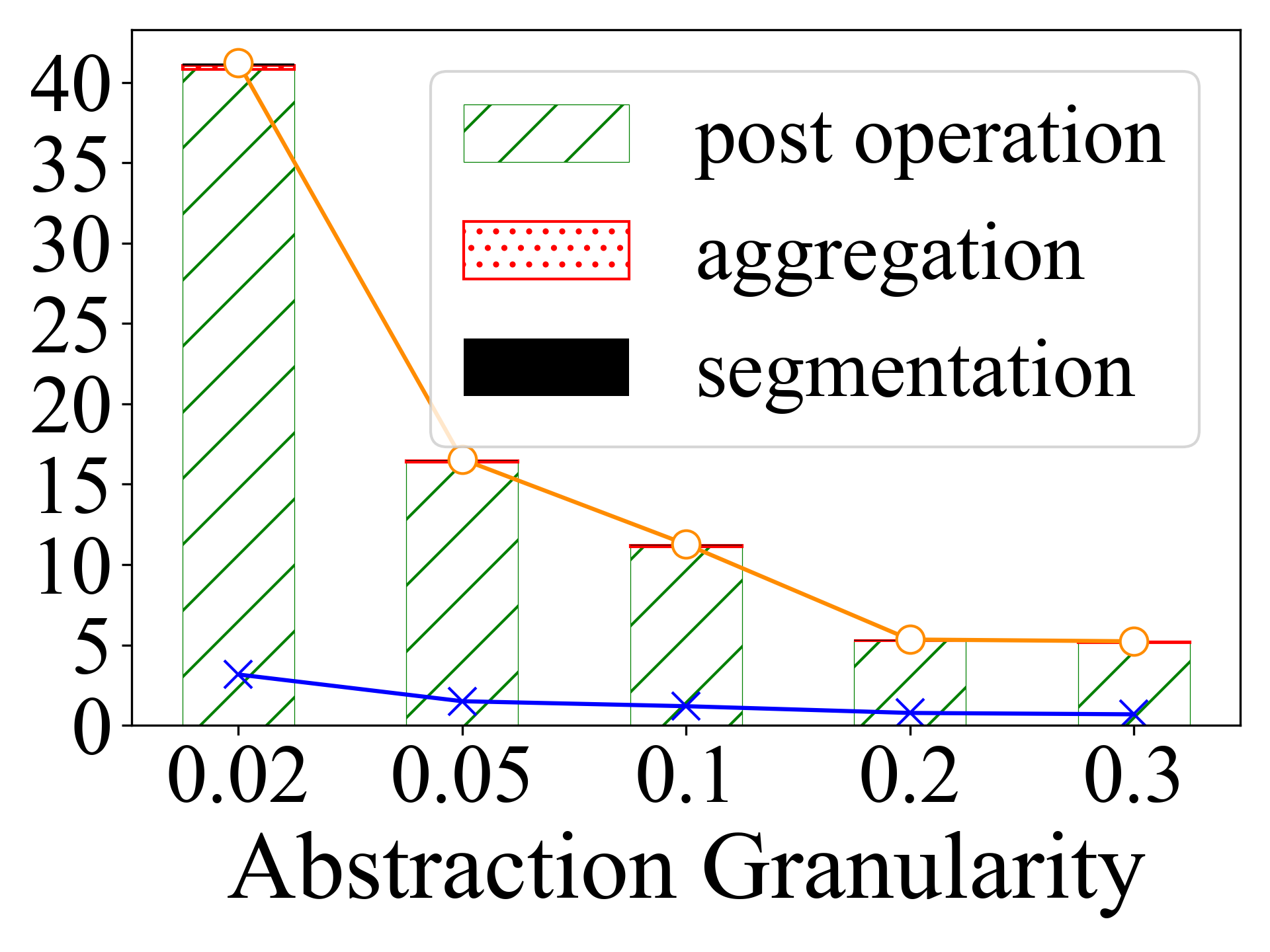}
								\caption{B2 (tanh)}
								\label{fig:b2_dec_tanh}
							\end{subfigure}&
							\begin{subfigure}[b]{0.24\textwidth}
								\includegraphics[width=\textwidth]{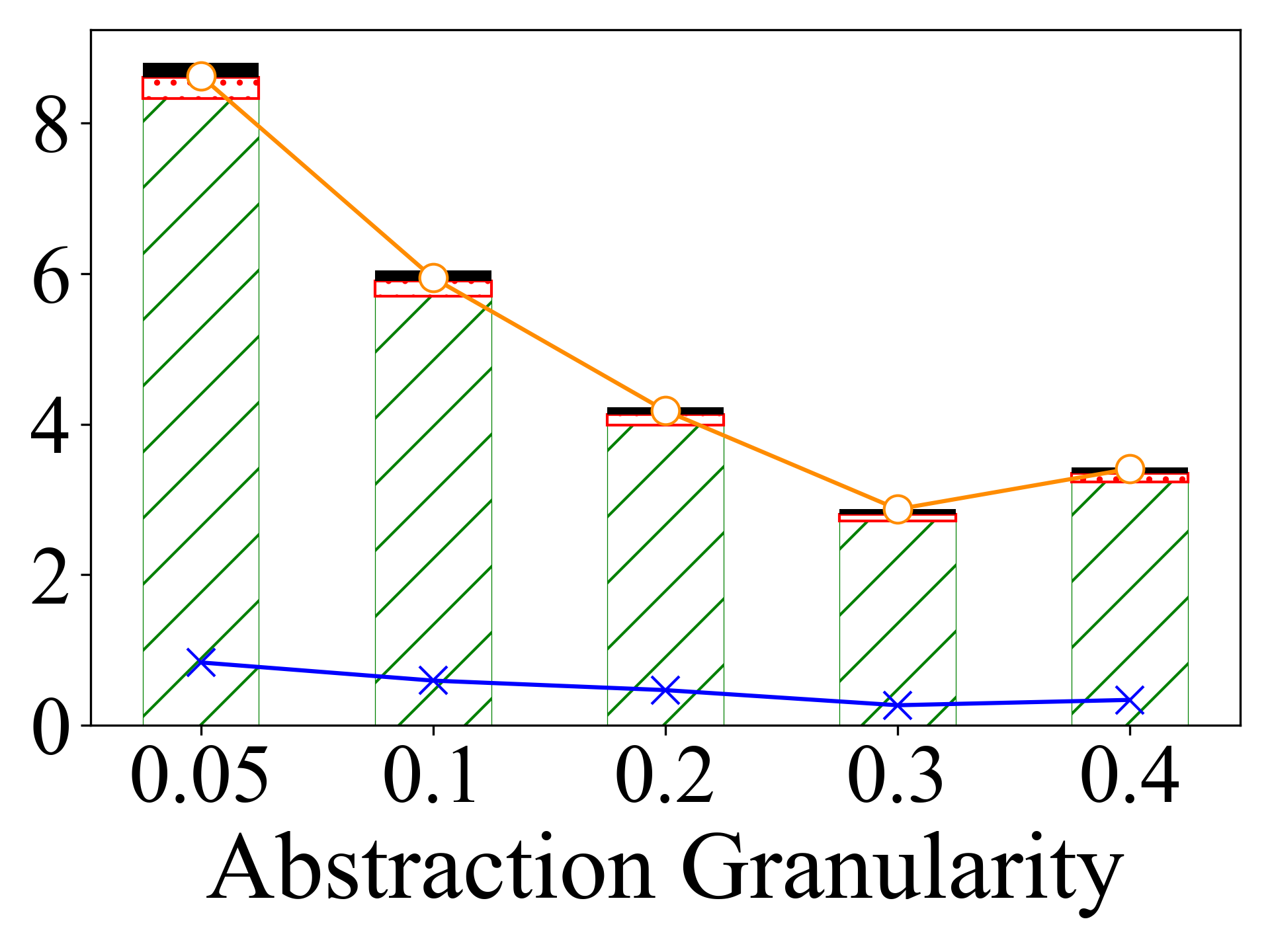}
								\caption{B3 (tanh)}
								\label{fig:b3_dec_tanh}
							\end{subfigure}&
							\begin{subfigure}[b]{0.24\textwidth}
								\includegraphics[width=\textwidth]{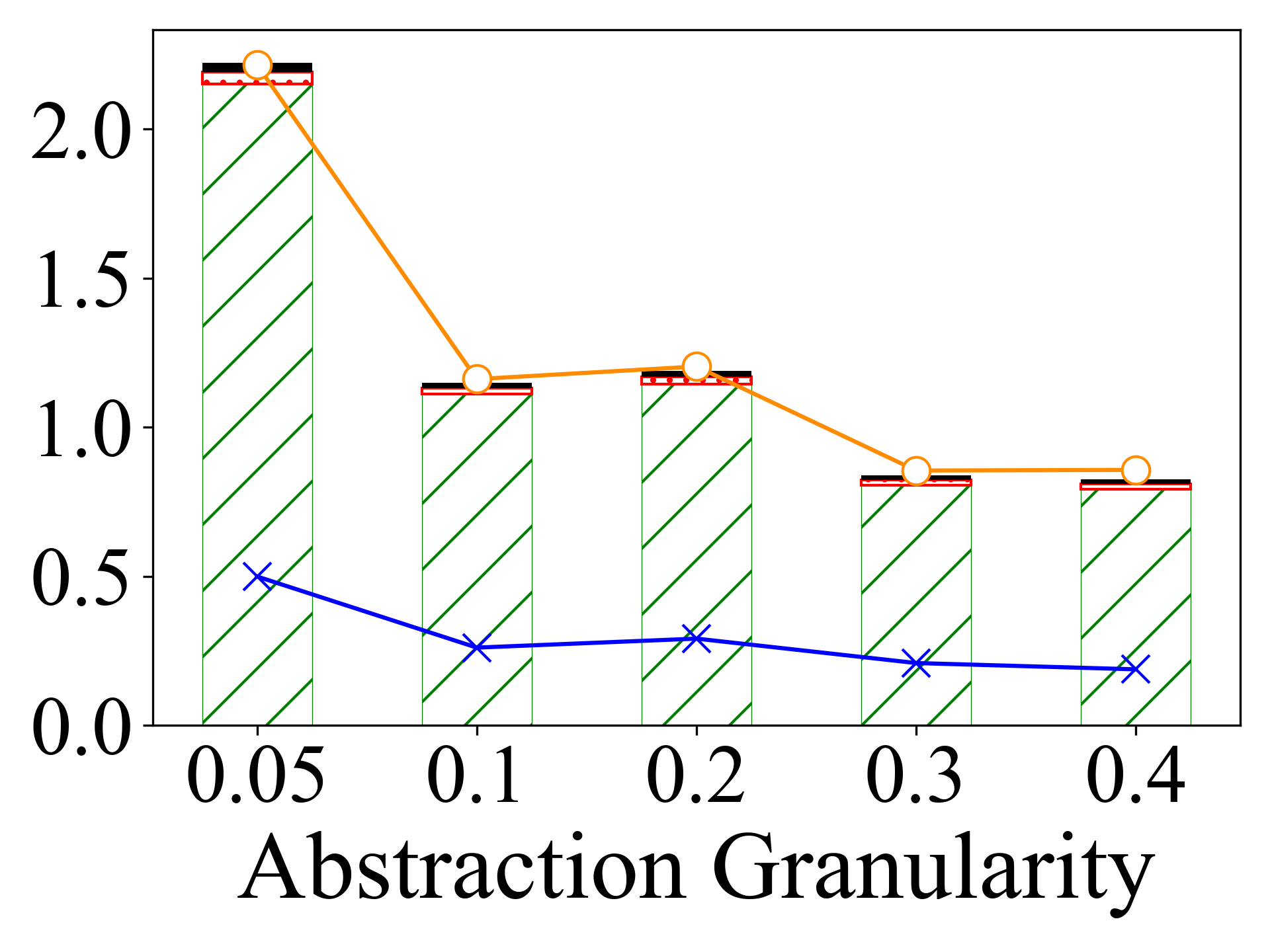}
								\caption{B4 (tanh)}
								\label{fig:b4_dec_tanh}
							\end{subfigure}
					\end{tabular}}
					\vspace{-5mm}
				\end{center}
				\caption{Differential (Row 1) and decomposing (Row 2) analysis results. Y-axis in (a--d) indicates the number of interval boxes while  
					in (e--h) the time overhead in seconds. 
					Due to the space limitation,  we use a scalar value $g_1$ to denote the $n$-dimensional abstraction granularity vector $\gamma = (g_1,...,g_1)$. 
				}
				\label{fig:decompose_analysis}
				\vspace{-6mm}
			\end{figure}
	
	\subsection{Differential and  Decomposing Analysis}
	\label{subsec:discussion}
 
	\noindent \textbf{Differential Analysis.}
	To demonstrate the significance of the adjacent interval aggregation in Algorithm~\ref{alg:interval_agg}, we measure the growth rate of the number of interval boxes with adjacent aggregation, as well as with no aggregation.
	Figure~\ref{fig:decompose_analysis}(a-d) shows the comparison results on B1-B4 (the results for the other six benchmarks are similar and given in Appendix~\ref{subsec:diff_dec_result}). 
	We observe that the number of interval boxes grows rapidly with no aggregation, which implies a dramatically increased verification overhead.  With the adjacent interval aggregation, the number of interval boxes is extremely small and stable. 
	
	\vspace{1ex}
	\noindent \textbf{Decomposing Analysis.}
	We evaluate how different abstraction granularity levels affect the performance of \BBReach~and its components.
	Abstraction granularity is a crucial hyper-parameter used in both training and calculation of over-approximation sets. To better understand the impact of abstraction granularity, for each benchmark, in addition to the default abstraction granularity levels (\iftoggle{conf-ver}{details can be found in Appendix  C}{in Appendix \ref{sec:benchmarks}}), we choose two finer and two coarser levels, respectively, to evaluate the verification efficiency on both Tanh and ReLU neural networks. We also measure the time consumed by each of the three steps, i.e., interval segmentation, post operation, and adjacent interval aggregation. 

 \looseness=-1
	We present in  Figure~\ref{fig:decompose_analysis}(e-h) the results with the Tanh neural network in B1-B4 (the remaining results are similar and given in 
	\iftoggle{conf-ver}{Appendix E of \cite{ase2023tian-tr}} {Appendix \ref{subsec:diff_dec_result}}). With a single core, as the abstraction granularity becomes coarse-grained, the verification time decreases; however, a fairly fine-grained abstraction granularity, e.g., (0.01, 0.01), could result in much higher verification overhead.  
	We also observe that the 
	post operation takes most of the verification time, while the
	overhead of the other two steps is negligible.  Finally, as expected, the parallelization (with 20 cores) can
	significantly accelerate \BBReach.


\vspace{-1mm}
\section{Related Work} \label{sec:related}
\vspace{-1mm}
Our work is a sequel of recently emerged approaches for the reachability analysis of DNN-controlled systems such as Verisig 2.0~\cite{ivanov2021verisig}, Polar~\cite{huang2022polar}, ReachNN*~\cite{huang2019reachnn}. 
Besides these states of the art, 
NNV~\cite{tran2020nnv} introduces the star set analysis technique~\cite{tran2020verification} to deal with the neural network and combines with the tool called CORA~\cite{althoff2015introduction} for the reachability analysis of non-linear systems. JuliaReach \cite{schilling2022verification} integrates the over-approximation of environment dynamics and DNNs together using Taylor models and zonotope. All these approaches treat DNNs as white boxes by over-approximating them with efficiently computable models such as Taylor models \cite{chen2012taylor}. Due to the intrinsic complexity of DNNs, these white-box approaches are applicable to only a limited type of DNNs on small scales. 



Our abstraction-based training method follows those machine learning methodologies which advocate a similar idea of pre-processing training data using either abstraction \cite{abel2019theory,jin2022cegar}, fuzzing \cite{de2020fuzzy} or granulation \cite{song2019granular} for various purposes of reducing the size of models, capturing uncertainties in input data and extracting abstract knowledge. 
Recent studies show that,
rather than training on concrete datasets,
training on symbolic datasets
is helpful to build  verification-friendly neural networks \cite{drews2020proving} and network-controlled systems \cite{jin2022cegar}. 
 The approach in  \cite{abel2019theory} is focused on the training of finite-state systems, while the one in \cite{jin2022cegar} needs to extend existing training methods to admit abstract states. Our design of abstract neural networks is more decent than the approach in \cite{jin2022cegar} because we only need to insert abstraction layers into neural networks and do not impose any other changes to training algorithms.

There are several black-box but unsound verification approaches for DNN-controlled systems. For instance, Fan et al. proposed a hybrid approach of combining black-box simulation and white-box transition graph for a probabilistic verification result \cite{fan2017dryvr}. Xue et al. proposed a black-box model-checking approach for continuous-time dynamical systems based on Probably Approximate Correctness (PAC)  learning \cite{xue2020pac}. 
Dong et al. built a discrete-time Markov chain from extracted trajectories of a DRL system and verified safety properties by probabilistic model checking \cite{dong2022dependability}. 
However, these approaches are not sound and can only compute error probability and confidence with probably approximate correctness guarantees. 
The fundamental reason for the unsoundness is that only partial behaviors of systems can be modeled when conventional neural networks are treated as black-box oracles, i.e.,  fixing concrete system states and feeding them into the networks to determine the state transitions. 

\vspace{-2mm}
\section{Conclusion and Future Work}
\vspace{-1mm}
We have presented an efficient and tight approach for the reachability analysis of DNN-controlled systems by bypassing the time-consuming and imprecise over-approximation of the DNNs in systems via abstraction-based training. Our method demonstrates the possibility of achieving sound but black-box reachability analysis through a decent abstraction-based training approach, breaking conventional intuitions that black-box methods only offer approximate correctness guarantees~
\cite{fan2017dryvr,xue2020pac} and that over-approximating DNNs is inevitable for sound verification~\cite{ivanov2021verisig,huang2019reachnn,huang2022polar}. Compared to white-box approaches, our black-box approach offers several benefits, including significant efficiency improvements, improved tightness of computed overestimation sets, applicability and scalability to a wider range of extended abstract DNNs, regardless of their architectures, activation functions, and neuron size.




Our work sheds light on a promising direction for studying efficient and sound formal verification approaches for DNN-controlled systems by treating black-box-featured DNNs as black boxes. 
We believe that this first black-box reachability analysis approach for DNN-controlled systems would stimulate more future work, such as new abstraction methods, runtime verification and model-checking of more complex safety and liveness properties. In our another work, we adopt the black-box training method to synthesize verification-friendly piece-wise linear controllers for ease the verification of trained systems \cite{jiaxu2023boosting}. 
Thanks to the representation power of DNNs \cite{tuckerman2022curse}, our training approach can support training DNNs for high-dimensional systems because the complexity of transforming an $n$-dimensional state into an interval box is $O(n)$. Nevertheless, the reachability analysis part may suffer from state explosion in the worst case when the number of reachable states increases exponentially, as faced by all the related white-box approaches ~\cite{ivanov2021verisig,huang2019reachnn,huang2022polar}. One possible solution is to coarsen the abstraction to reduce the size of abstract states, and learn an easy-to-verify linear policy for each coarsened abstract state. Such an approach has been successfully applied to reinforcement learning \cite{akrour2018regularizing} and requires further investigation in the DNN-based setting. 

\section*{Acknowledgment}
 The work has been supported by the National Key Project (2020AAA0107800), NSFC Programs (62161146001, 62372176), Huawei Technologies Co., Ltd., the Shanghai International Joint Lab (22510750100), the Shanghai Trusted Industry Internet Software Collaborative Innovation Center, the Engineering and Physical Sciences Research Council (EP/T006579/1), the National Research Foundation (NRF-RSS2022-009), Singapore, and the Shanghai Jiao Tong University Postdoc Scholarship. 

\bibliographystyle{splncs04}
\bibliography{ref}

\newpage 
\appendix

\section{Assessing Verisig 2.0 with Big Weights}
\label{subsec:big_weights}
Verisig 2.0 produces large over-approximation error when dealing with neural networks with big weights.
To demonstrate this,
we initialize the weights of the neural network with larger values (random numbers $w_l \sim \mathbf{N}(\mu, \sigma^{2})$ with $\mu =0, \sigma = 0.1$) and show the experimental results in Figure~\ref{fig:big_weights_reachable_sets}. 
We observe that the calculated over-approximation sets contain large over-approximation error except for B4. In Tora, Verisig 2.0 fails to calculate the complete reachable sets due to too large over-approximation error. Hence, it is fairly to say that Verisig 2.0 is sensitive to the DNNs with big weights. 

\begin{figure}[h]
\begin{center}
		\begin{tabular}{cccc}
		\begin{subfigure}[b]{0.32\textwidth}
			\includegraphics[width=\textwidth]{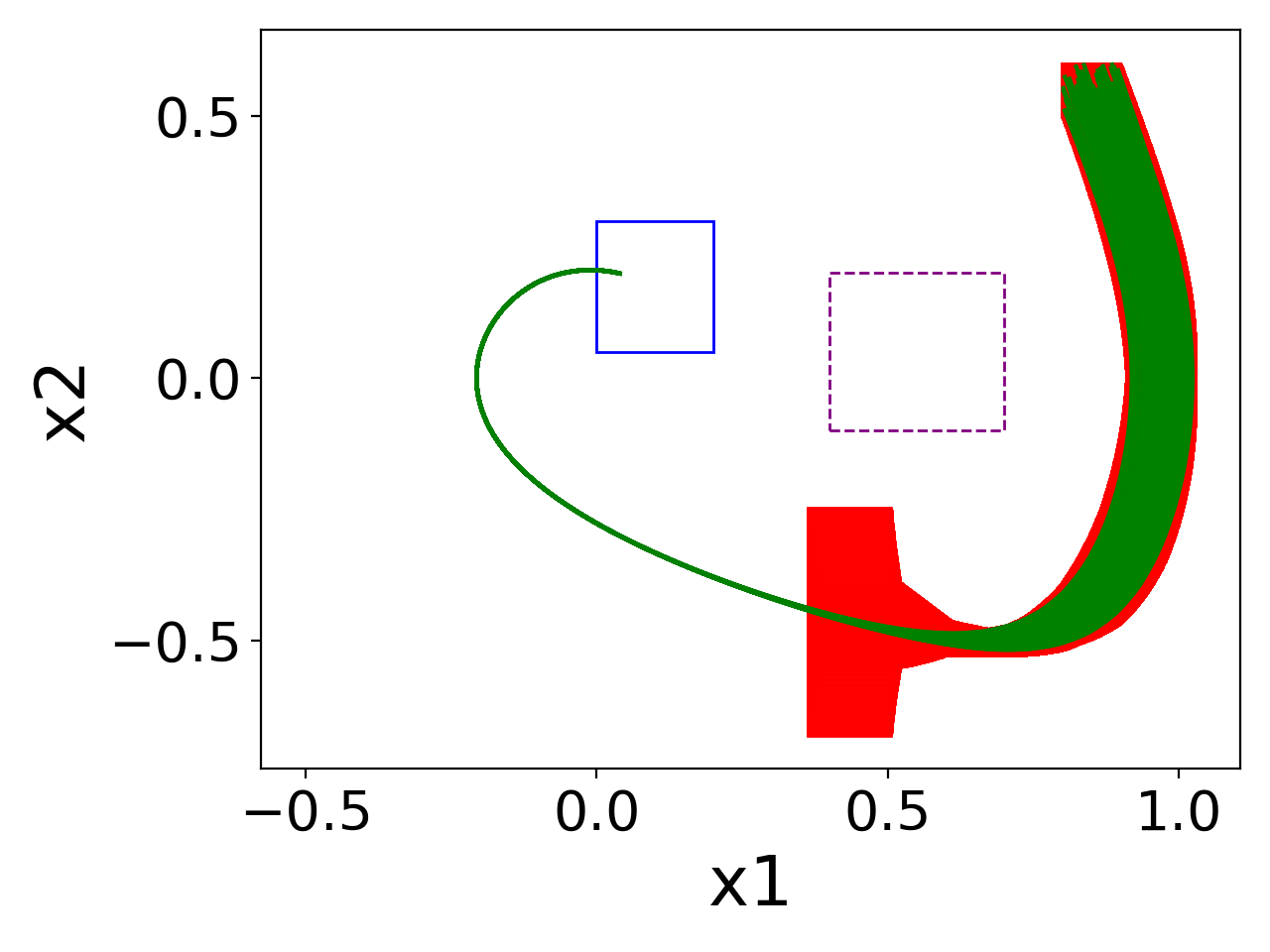}
			
			\caption{B1}
			\label{fig:b1}
		\end{subfigure}&
		\begin{subfigure}[b]{0.32\textwidth}
			\includegraphics[width=\textwidth]{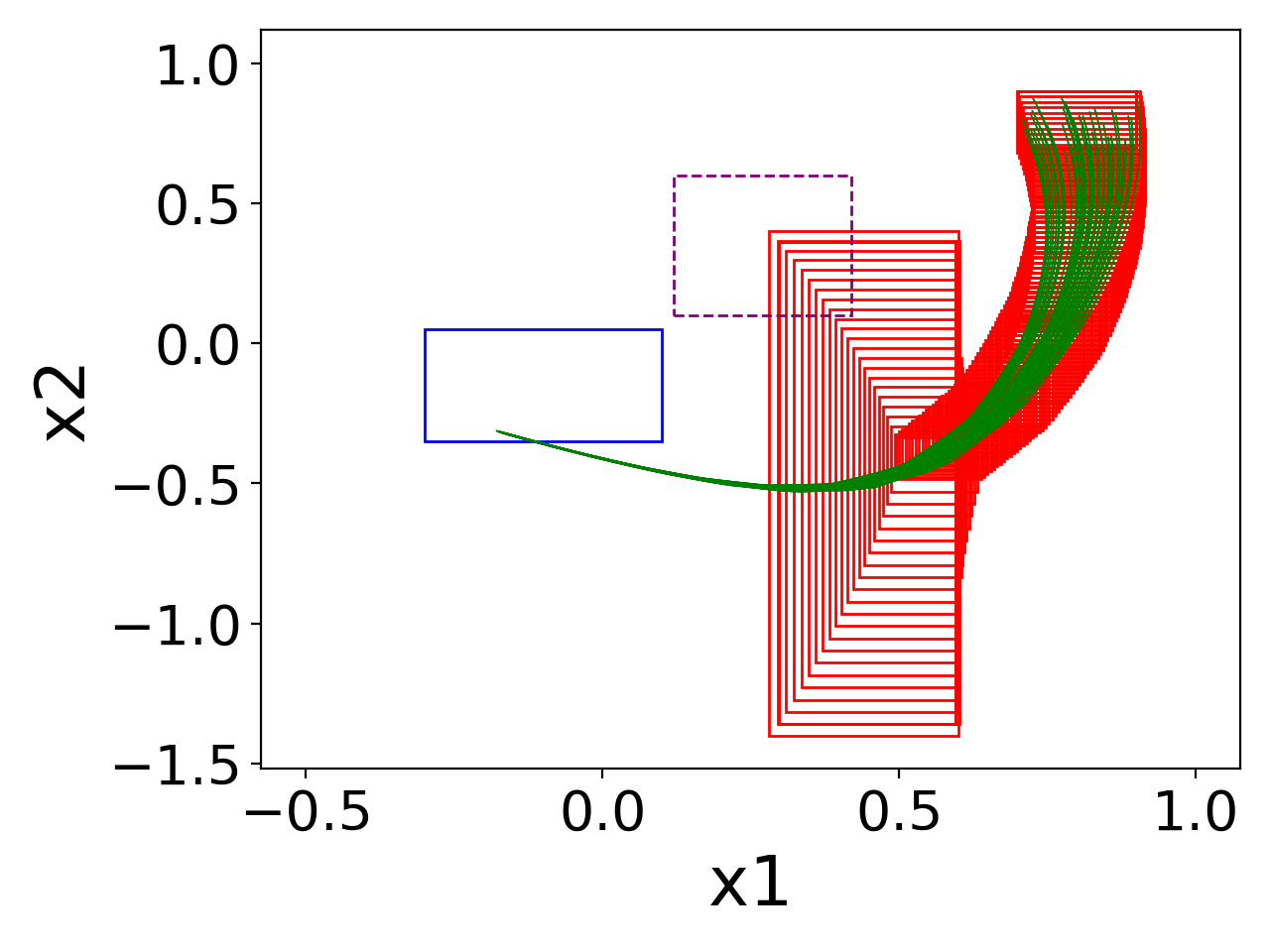}
			
			\caption{B2}
			\label{fig:b2}
		\end{subfigure}
  	&
		\begin{subfigure}[b]{0.32\textwidth}
			\includegraphics[width=\textwidth]{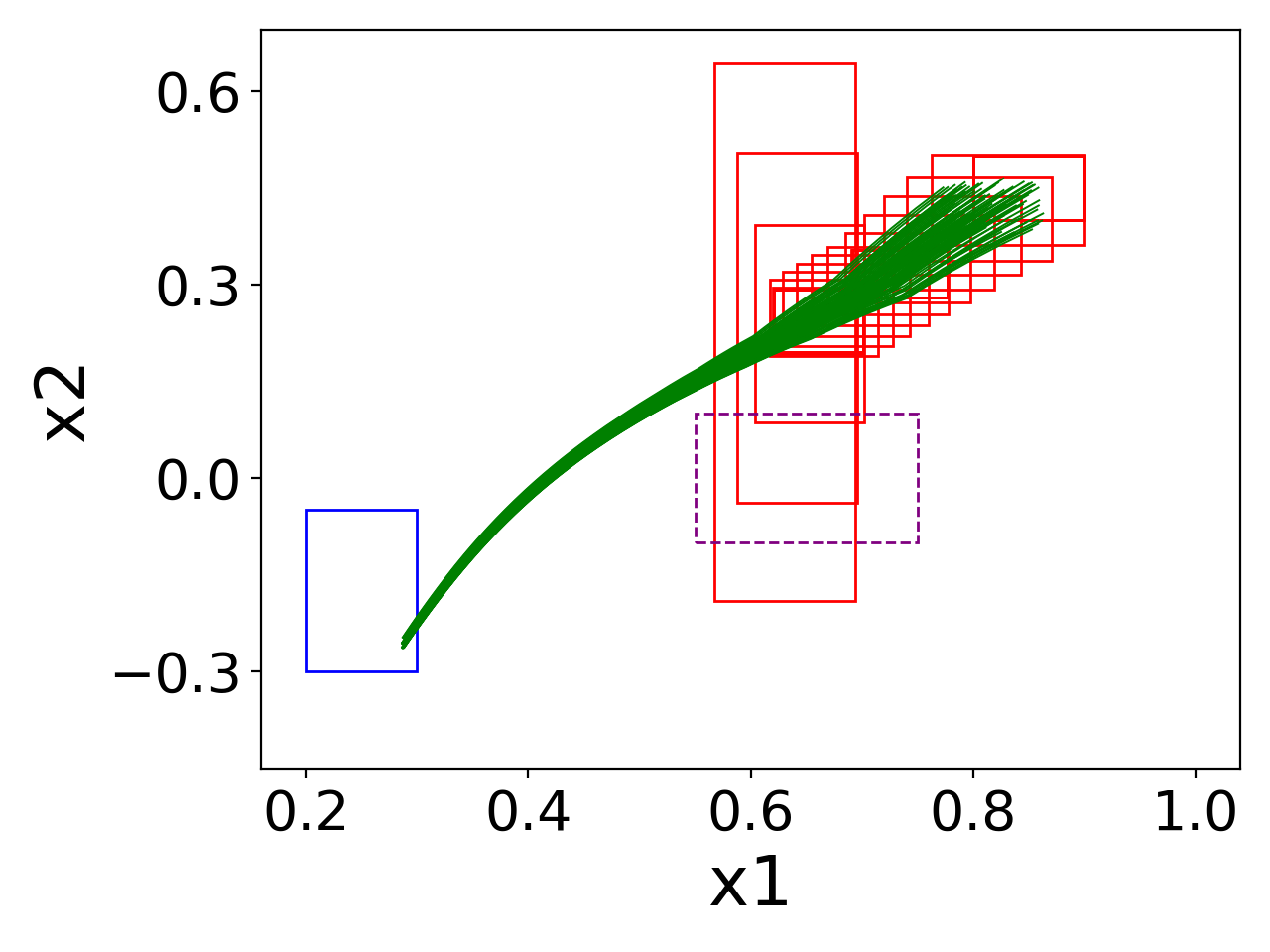}
			
			\caption{B3}
			\label{fig:b3}
		\end{subfigure}
		\\
		\begin{subfigure}[b]{0.32\textwidth}
			\includegraphics[width=\textwidth]{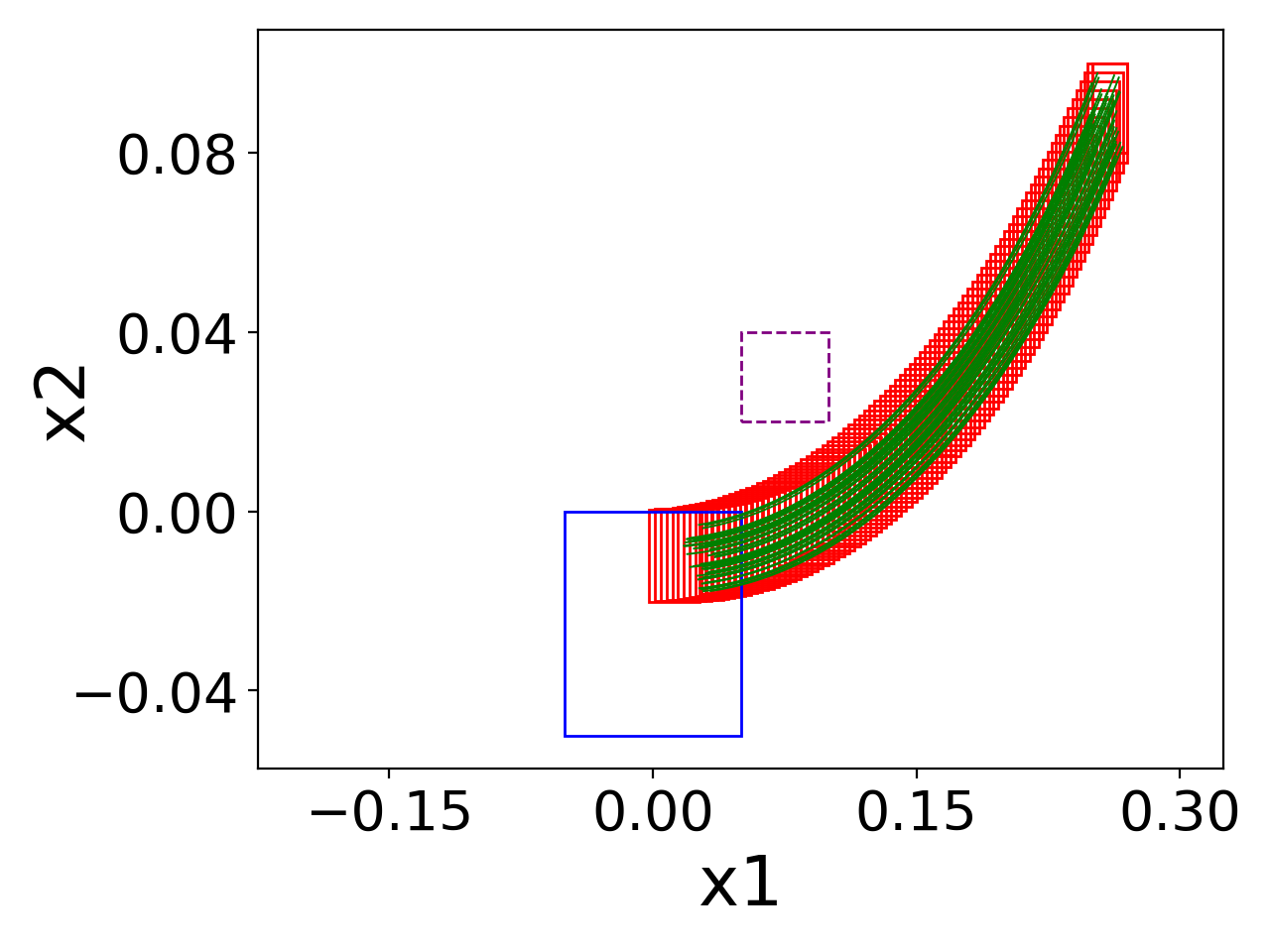}
			\caption{B4}
			\label{fig:b4}
		\end{subfigure} 
		&
		\begin{subfigure}[b]{0.32\textwidth}
			\includegraphics[width=\textwidth]{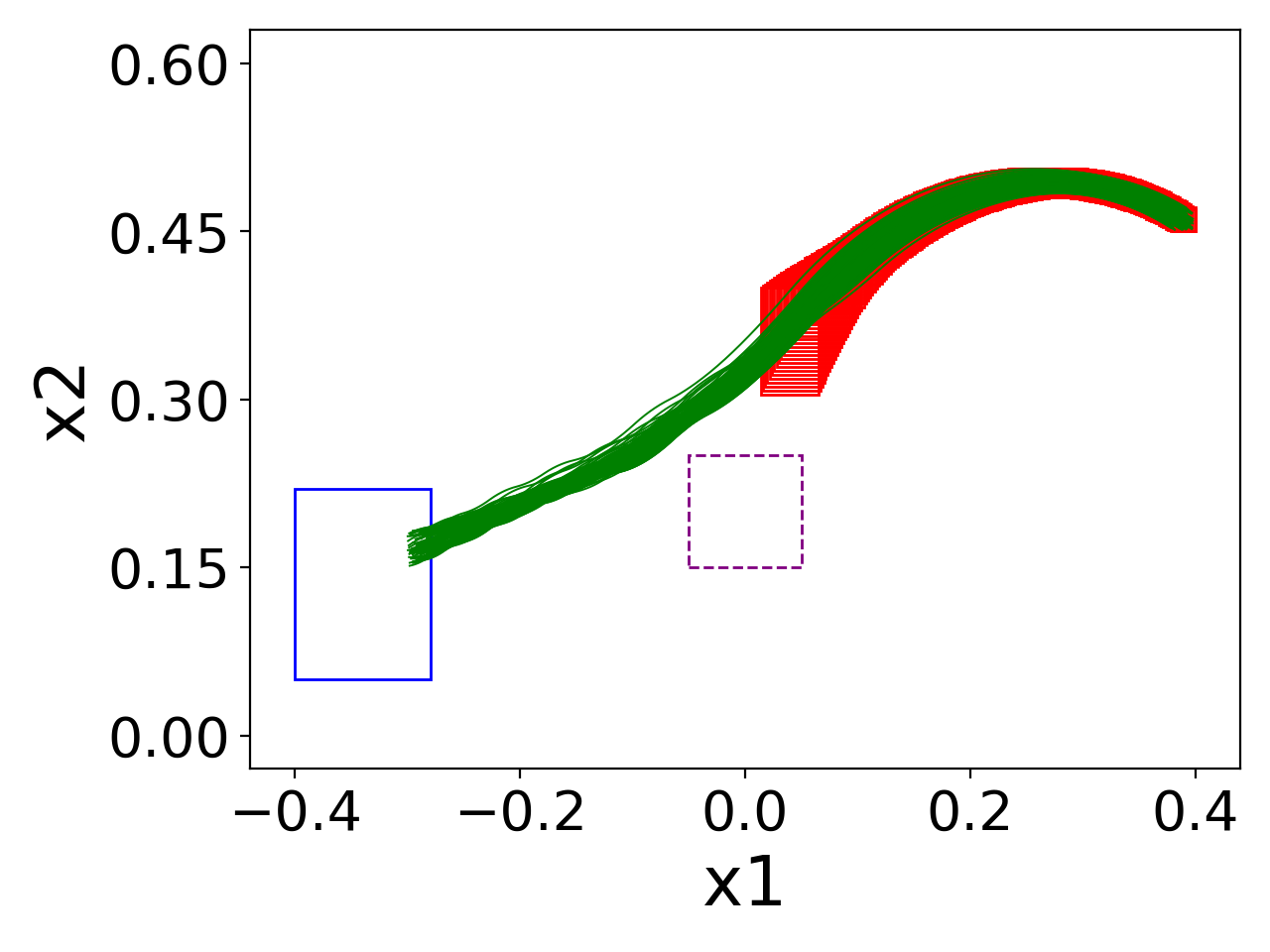}
			\caption{B5}
			\label{fig:b5}
		\end{subfigure} &
		\begin{subfigure}[b]{0.32\textwidth}
			\includegraphics[width=\textwidth]{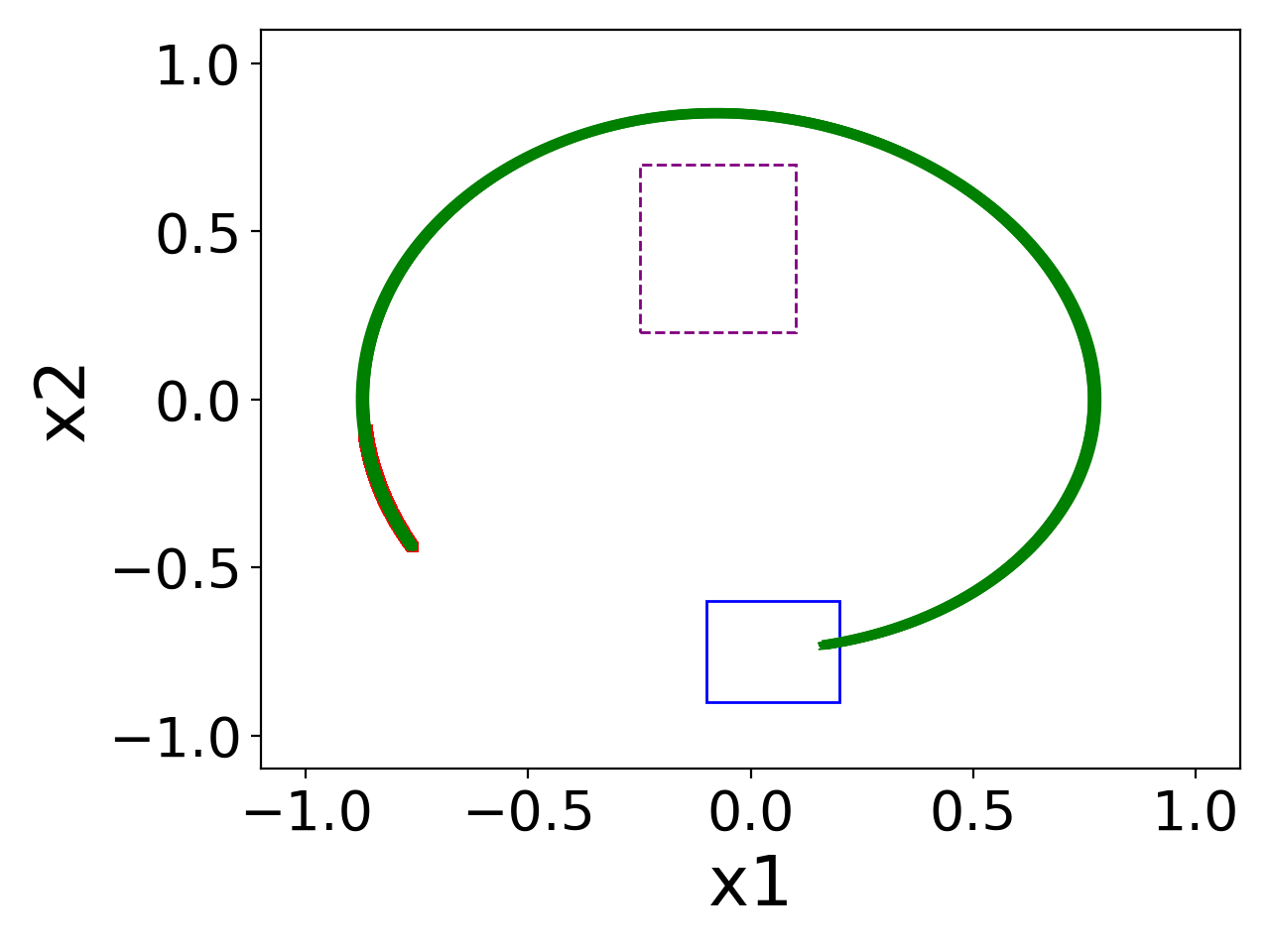}
			\caption{Tora}
			\label{fig:tora}
		\end{subfigure}
	\end{tabular}
\end{center}
\vspace{-3ex}
\caption{Assessing Verisig 2.0 on the larger networks with big weights. 
red box: over-approximation set; green lines: simulation trajectories; blue box: goal region; 
purple dashed box: unsafe region.}
\label{fig:big_weights_reachable_sets}
\end{figure}

 \clearpage 
\section{Benchmarks Setting}
\label{sec:benchmarks}
We provide the setting of seven benchmarks in Table~\ref{tab:benchmarks_setting}. The initial region and goal region are the same as the setting in~\cite{ivanov2021verisig}. The abstraction granularity is a hyper-parameter used in the abstraction-based training and the calculation of reachable sets. All the experimental results in Section~\ref{subsec:tightness}   are based on the following settings.
\vspace{-7ex}
\begin{table}[h!]
\scriptsize
    \centering
        \caption{Benchmarks setting.}
        \vspace{2mm}
    \label{tab:benchmarks_setting}
    \renewcommand{\arraystretch}{1}
    \setlength{\tabcolsep}{6pt}
    \begin{tabular}{c|l|l|l|l}
    \toprule
         \textbf{Task}&\centering \textbf{ Granularity}&\centering \hspace{5mm}\textbf{Initial Region}&\hspace{5mm}\centering \textbf{Goal Region}&\hspace{3mm}\textbf{Unsafe Region} \\
         \midrule
         B1 & [0.02, 0.02]&\makecell[l]{$x_1 \in [0.8, 0.9]$\\$ x_2 \in [0.5,0.6]$}& \makecell[l]{$x_1 \in [0, 0.2]$\\$ x_2 \in [0.05,0.3]$}&\makecell[l]{$x_1 \in [0.4, 0.7]$\\$ x_2 \in [-0.1,0.2]$}\\
         \hline
         B2 & [0.1, 0.1]&\makecell[l]{$x_1 \in [0.7, 0.9]$\\$x_2 \in [0.7,0.9]$}& \makecell[l]{$x_1 \in [-0.3, 0.1]$\\$ x_2 \in [-0.35,0.5]$}&\makecell[l]{$x_1 \in [0.12, 0.42]$\\$ x_2 \in [0.1,0.6]$}\\
         \hline
         B3 & [0.2, 0.2]&\makecell[l]{$x_1 \in [0.8, 0.9]$\\$ x_2 \in [0.4,0.5]$}& \makecell[l]{$x_1 \in [0.2, 0.3]$\\$ x_2 \in [-0.3,-0.05]$}&\makecell[l]{$x_1 \in [0.55, 0.75]$\\$x_2 \in [-0.1,0.1]$}\\
         \hline
         B4 & [0.2, 0.2, 0.2]&\makecell[l]{$x_1 \in [0.25, 0.27]$\\$ x_2 \in [0.08,0.1]$\\$x_3 \in [0.25, 0.27]$}& \makecell[l]{$x_1 \in [-0.05, 0.05]$\\$ x_2 \in [-0.05,0]$}&\makecell[l]{$x_1 \in [0.05, 0.1]$\\$x_2 \in [0.02,0.04]$}\\
         \hline
         B5 & [0.1, 0.1, 0.1]&\makecell[l]{$x_1 \in [0.38, 0.4]$\\$ x_2 \in [0.45,0.47]$\\$ x_3 \in [0.25,0.27]$}& \makecell[l]{$x_1 \in [-0.4, -0.28]$\\$ x_2 \in [0.05,0.22]$}&\makecell[l]{$x_1 \in [-0.05, 0.05]$\\$x_2 \in [0.15, 0.25]$}\\
         \hline
         Tora &\makecell[l]{[0.2, 0.2,0.2, 0.2]} &\makecell[l]{$x_1 \in [-0.77, -0.75]$\\$ x_2 \in [-0.45,-0.43]$\\$x_3 \in [0.51,0.54]$\\$ x_4 \in [-0.3,-0.28]$}& \makecell[l]{$x_1 \in [-0.1, 0.2]$\\$ x_2 \in [-0.9,-0.6]$}&\makecell[l]{$x_1 \in [-0.25, 0.10]$\\$x_2 \in [0.2, 0.7]$}\\
         \hline
         ACC & \makecell[l]{[1, 0.1, 0.1, 1, 0.1,\\ 0.1]}&\makecell[l]{$x_1 \in [90, 91]$\\$ x_2 \in [32, 32.05]$\\ $x_4 \in [10, 11]$\\ $x_5 \in [30, 30.05]$\\$x_3, x_6 \in [0, 0]$}&\makecell{$x_2 \in [22.81, 22.87]$\\$x_5 \in [29.88, 30.02]$}&\makecell[l]{$x_2 \in [26, 29]$\\$x_5 \in [30.05, 30.15]$}\\
\bottomrule
    \end{tabular}
\end{table}
\vspace{-5ex}

\section{More Results on Training Reward Comparison}
\label{app:train_reward_com}
Figure \ref{fig:app_cumulative_reward} complements the comparison results of the trend of cumulative rewards during training between different network structures (i.e. ANN and DNN) in B5, Tora and ACC. The solid lines and
the shadows indicate the average reward and 95\% confidence interval, respectively. We can also observe that even integrating an abstraction layer into ANN can achieve nearly the same cumulative reward as using a DNN.
\vspace{-2ex}

\begin{figure}[h!]
	    \setlength{\tabcolsep}{7pt}
		\begin{tabular}{ccc}
			\begin{subfigure}[b]{0.3\textwidth}
				\includegraphics[width=\textwidth]{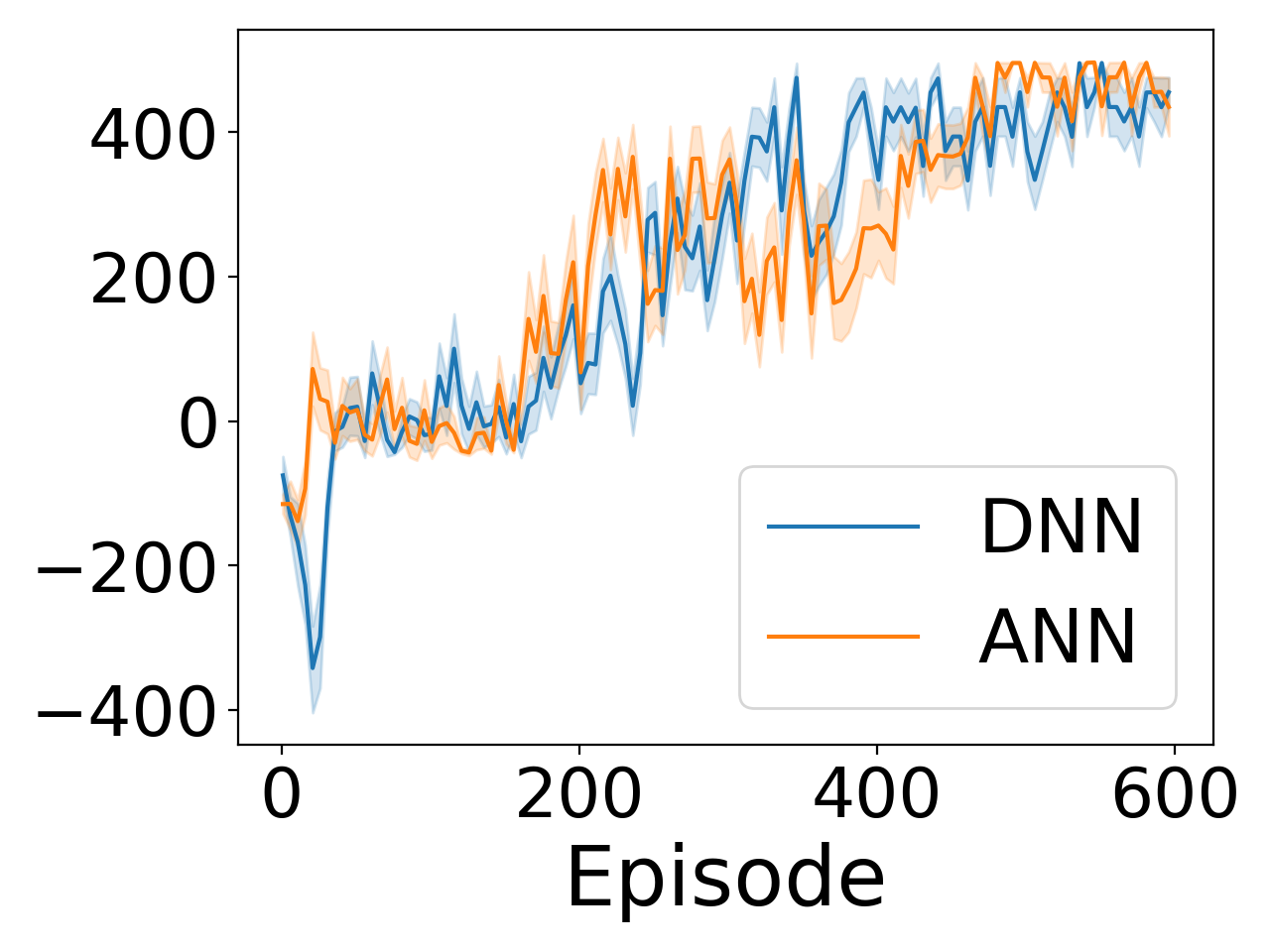}
				\caption{B5}
				\label{fig:reward_com_b5}
			\end{subfigure}&
			\begin{subfigure}[b]{0.3\textwidth}
				\includegraphics[width=\textwidth]{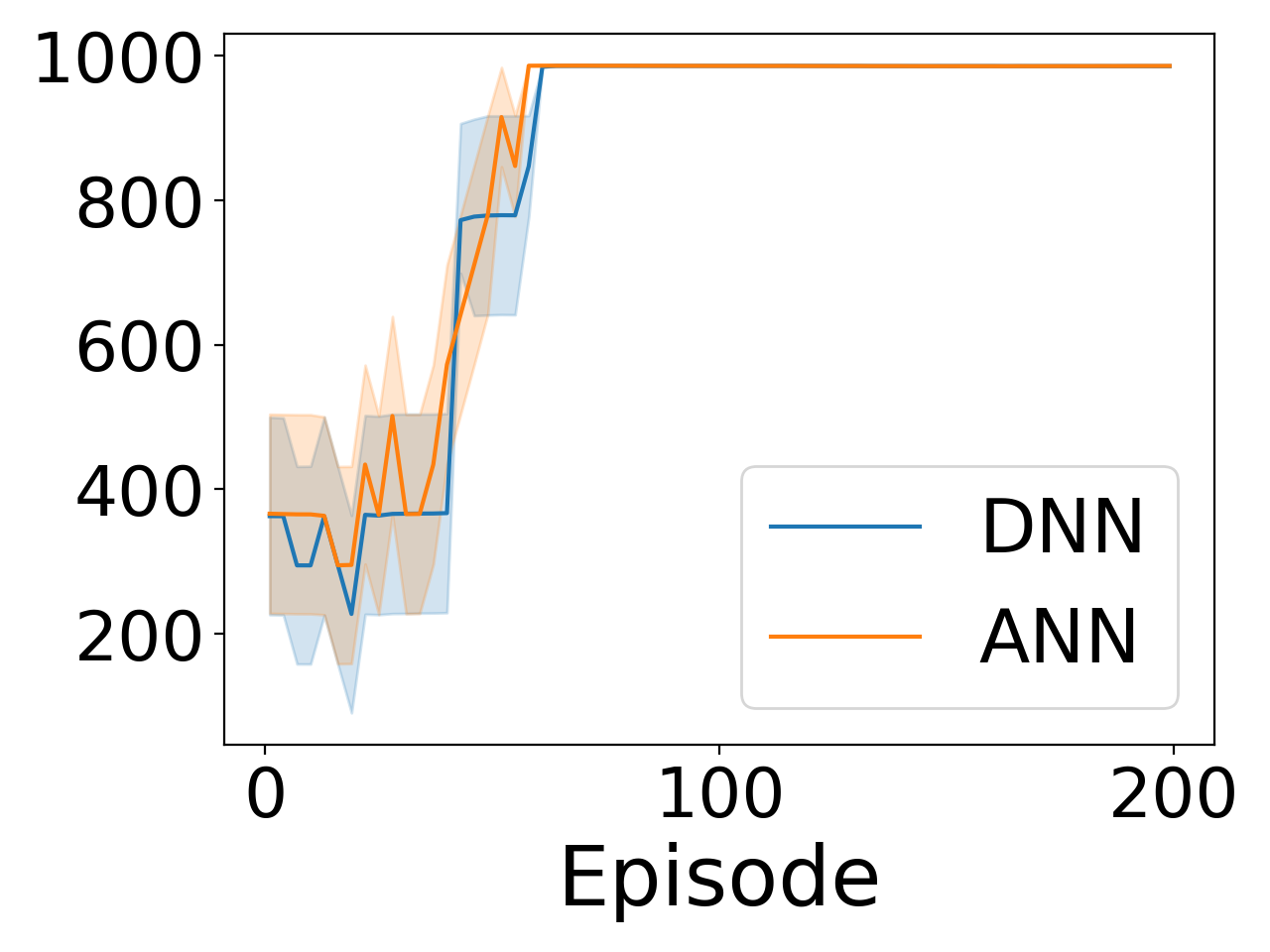}
				\caption{Tora}
				\label{fig:reward_com_tora}
			\end{subfigure}&
			\begin{subfigure}[b]{0.3\textwidth}
				\includegraphics[width=\textwidth]{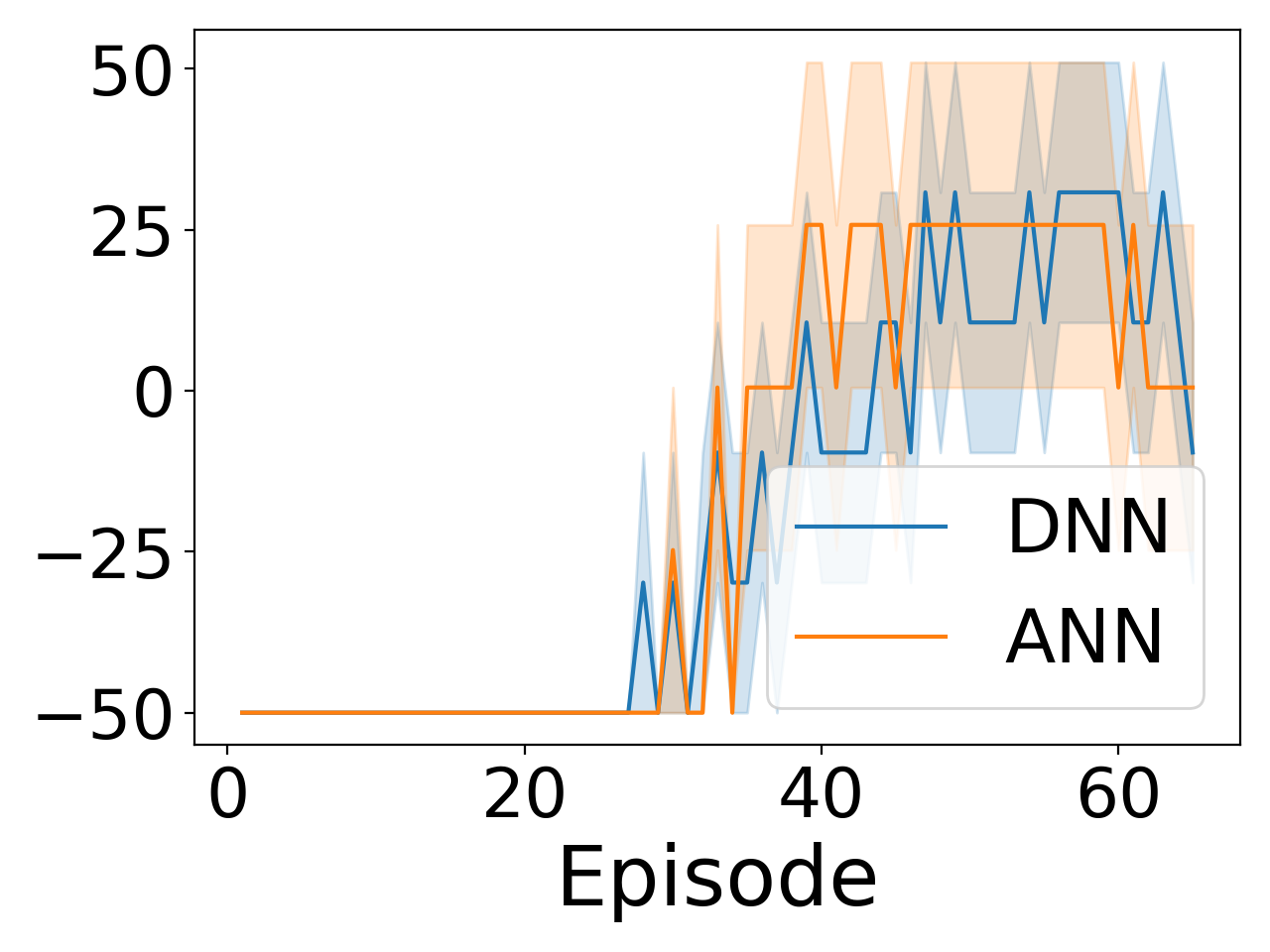}
				\caption{ACC}
				\label{fig:reward_com_acc}
			\end{subfigure}
	\end{tabular}
	\vspace{-3mm}
	\caption{Trend  of cumulative rewards (y-axis) of the systems controlled by ANNs (orange) and DNNs (blue) trained by  DDPG algorithm.}
	\label{fig:app_cumulative_reward}
	\vspace{-3mm}
\end{figure}

  \clearpage 
\section{More Results on Tightness Comparison}
\label{subsec:tight_com}
This section presents the tightness comparison results on B3, B4, B5 and ACC in Figure~\ref{fig:reachable_sets_b345}. We also initialize the neural networks with small weights. For these benchmarks, both Verisig 2.0 and Polar keep small over-approximation error when dealing with neural networks and thus obtain tight results.  \BBReach~does not have the over-approximation error for neural networks. Therefore, all methods achieve similar results: all reach-avoid problems are successfully verified and the range of sampled trajectories is almost indistinguishable with the over-approximation sets.  
\begin{figure}[h!]	
	\begin{minipage}[b]{0.08\linewidth}
\begin{tikzpicture}
\tikz \node [draw] at (0.5,-1)  {B3};
\end{tikzpicture}
	\end{minipage}
	\begin{minipage}[b]{0.3\linewidth}
		\centering
		{ \textbf{Our Approach}}\\
		 \vspace{1ex}
		\includegraphics[scale=0.23]{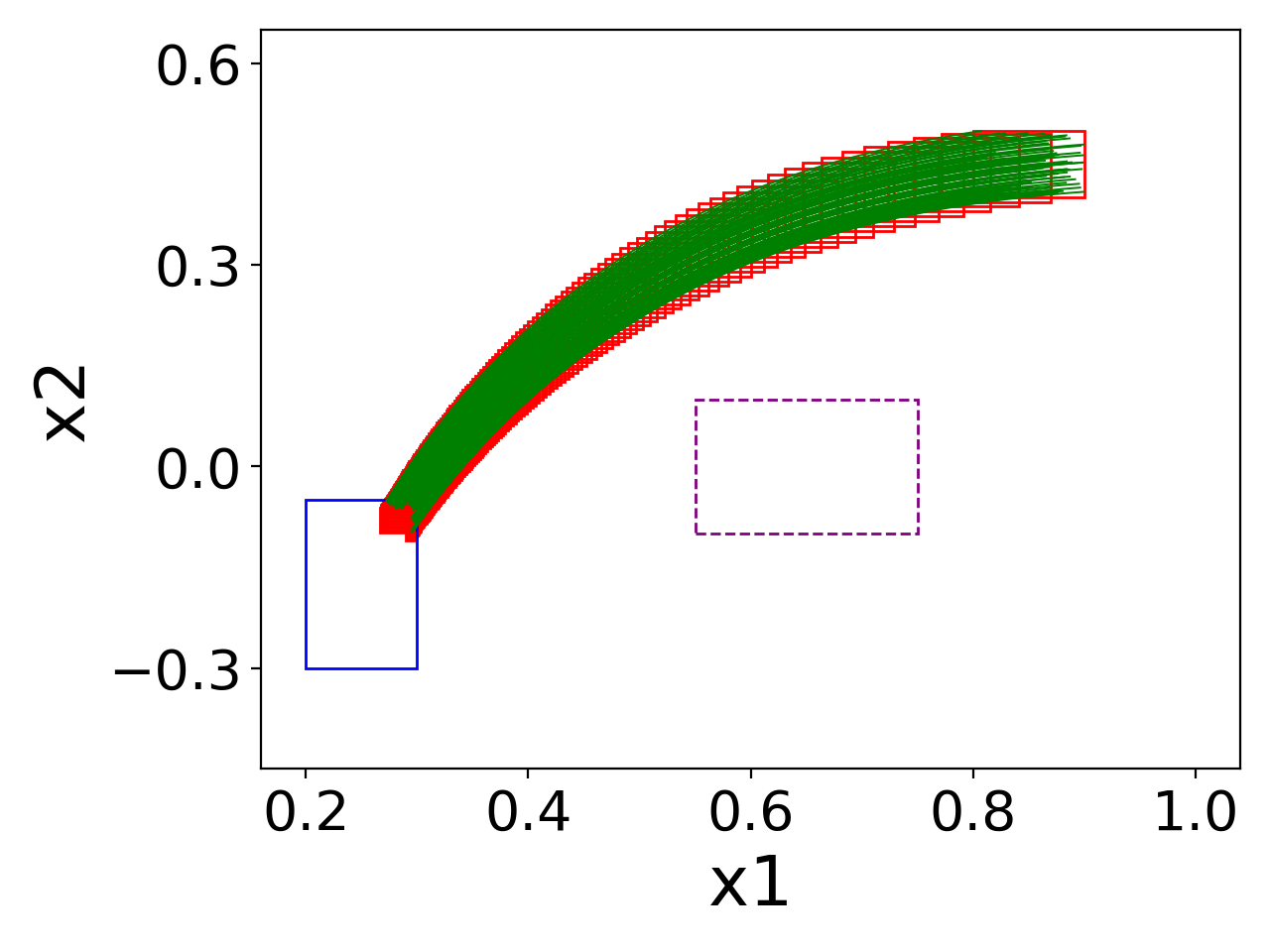}
	\end{minipage}
		\begin{minipage}[b]{0.3\linewidth}
		\centering
		{ \textbf{Verisig 2.0}}\\
		\vspace{1ex}
		\includegraphics[scale=0.23]{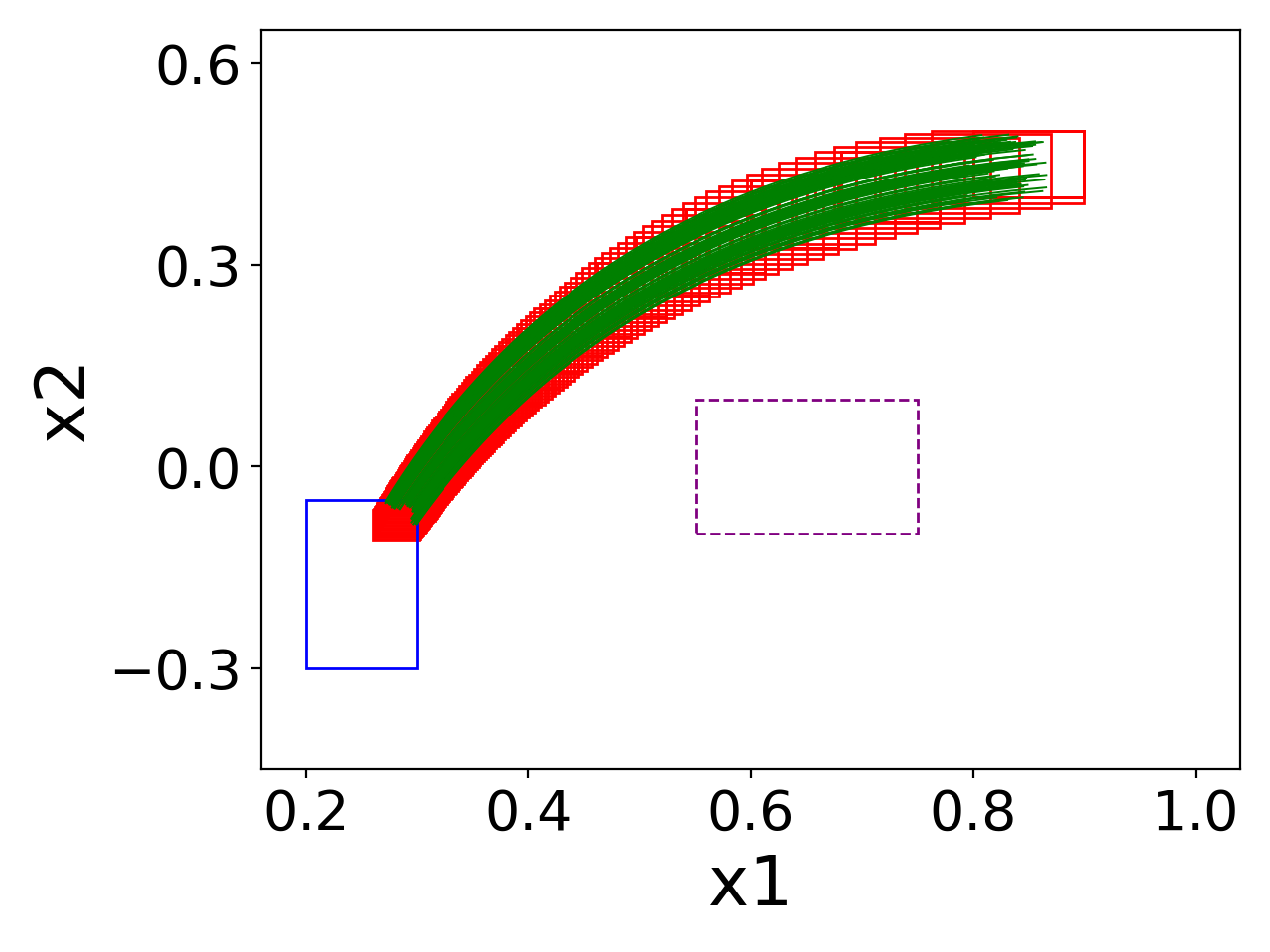}
	\end{minipage}
	\begin{minipage}[b]{0.3\linewidth}
		\centering
	{ \textbf{Polar}}\\
	\vspace{1ex}
	\includegraphics[scale=0.23]{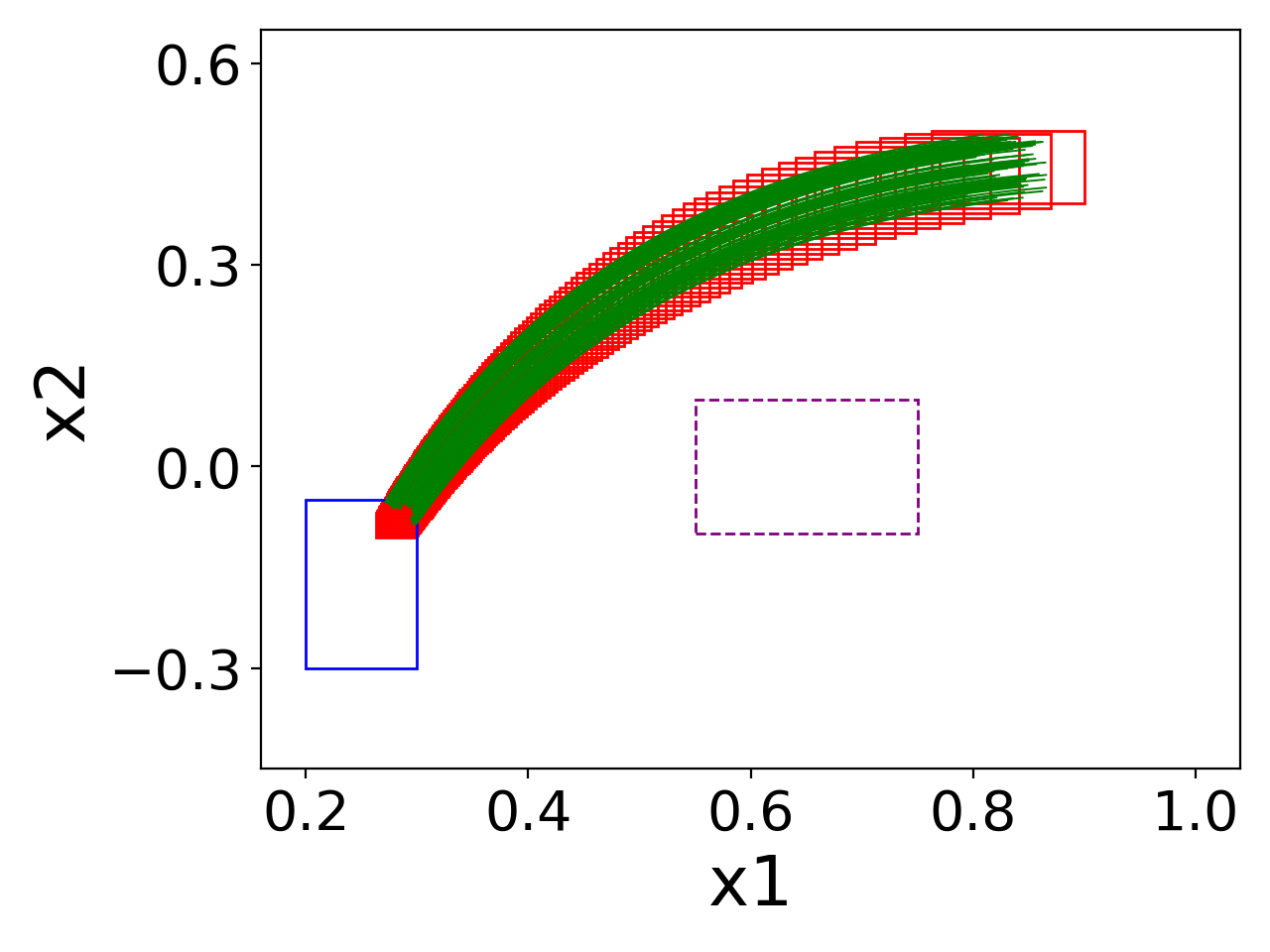}
	\end{minipage}
\\
\vspace{1ex}
	\begin{minipage}[b]{0.06\linewidth}
\begin{tikzpicture}
\tikz \node [draw] at (0.5,-1)  {B4};
\end{tikzpicture}
	\end{minipage}
	\begin{minipage}[b]{0.3\linewidth}
		\centering
		\includegraphics[scale=0.23]{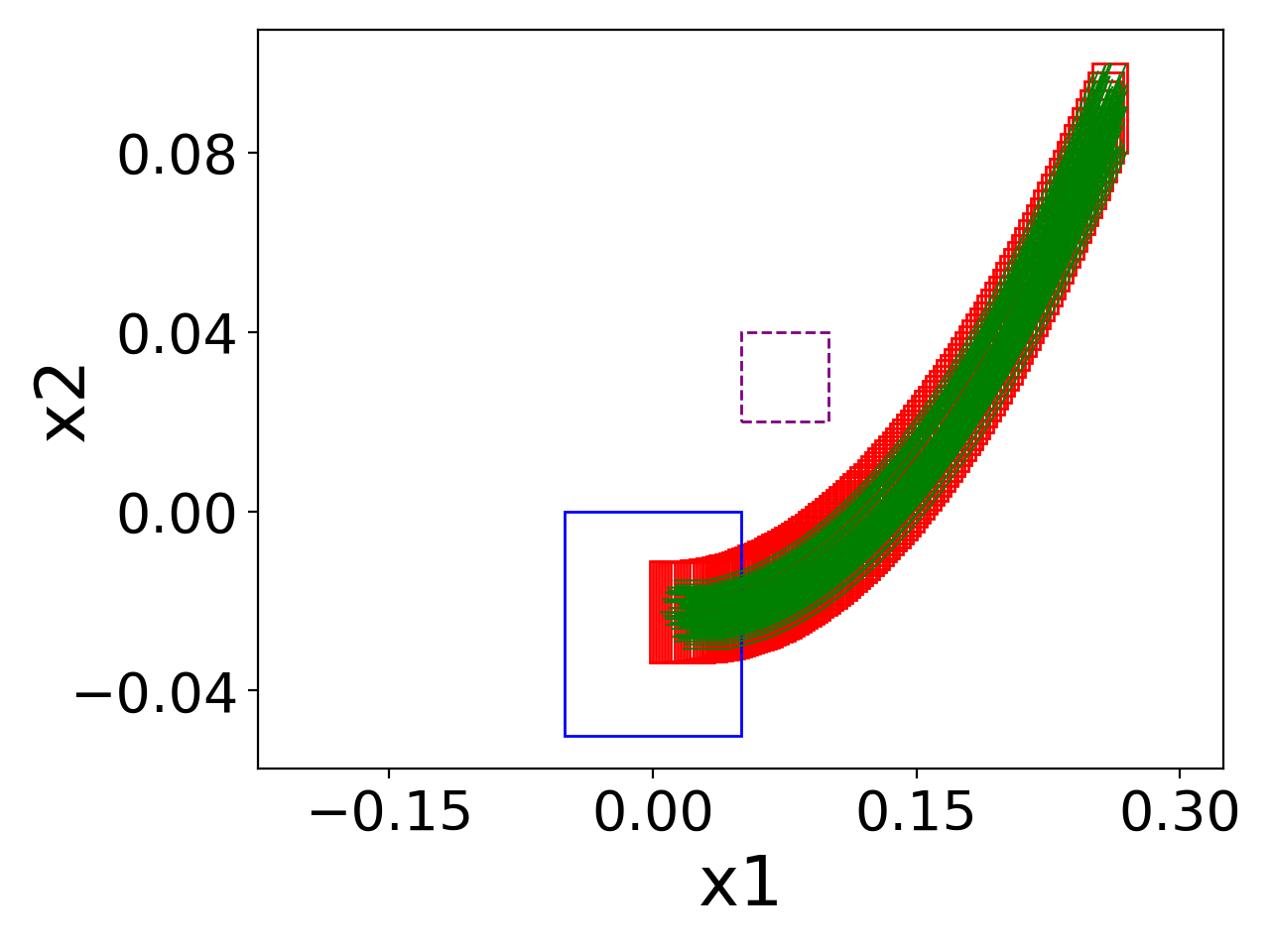}
	\end{minipage}
		\begin{minipage}[b]{0.3\linewidth}
		\centering
		\includegraphics[scale=0.23]{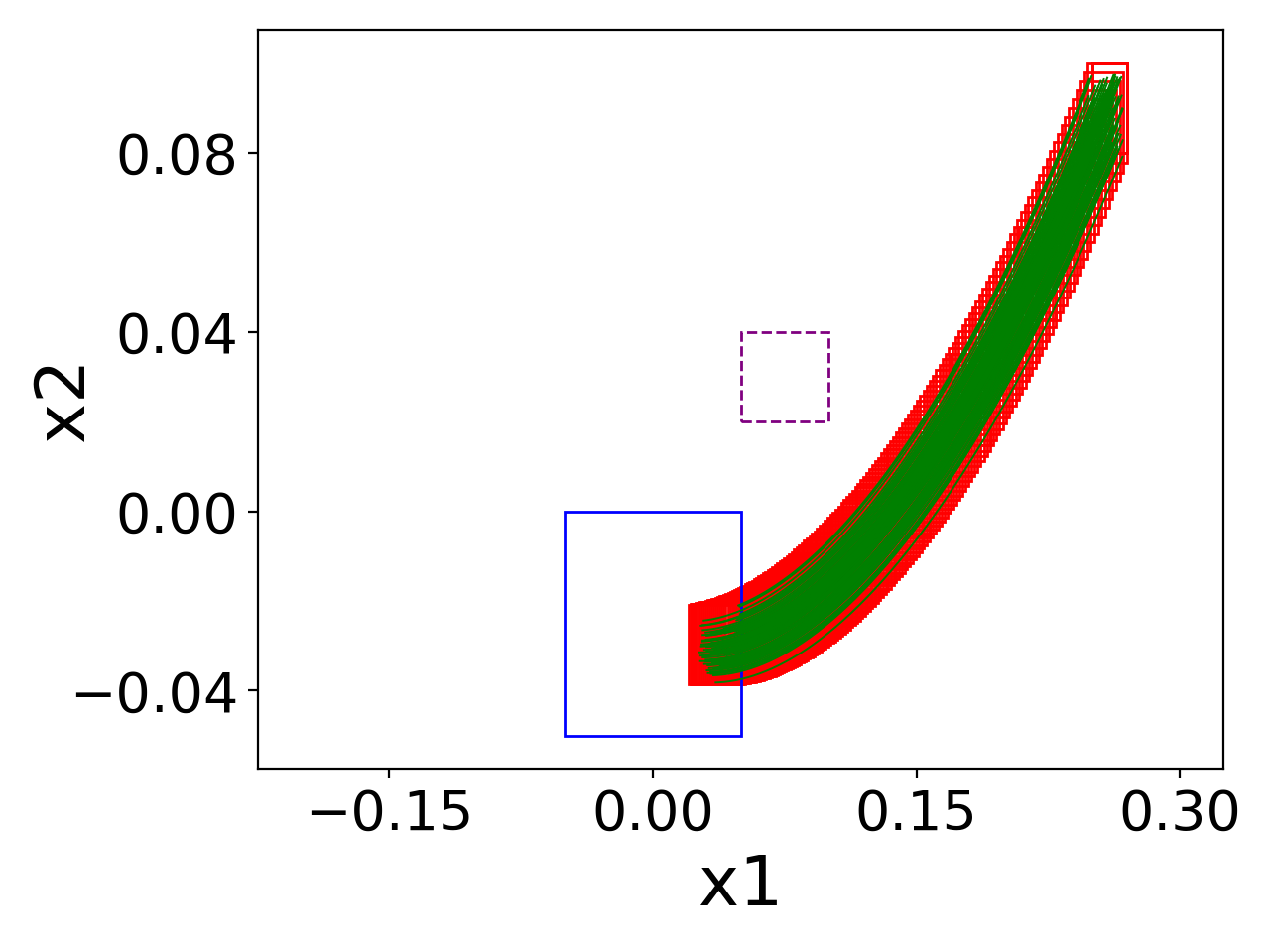}
	\end{minipage}
	\begin{minipage}[b]{0.3\linewidth}
		\centering
	\includegraphics[scale=0.23]{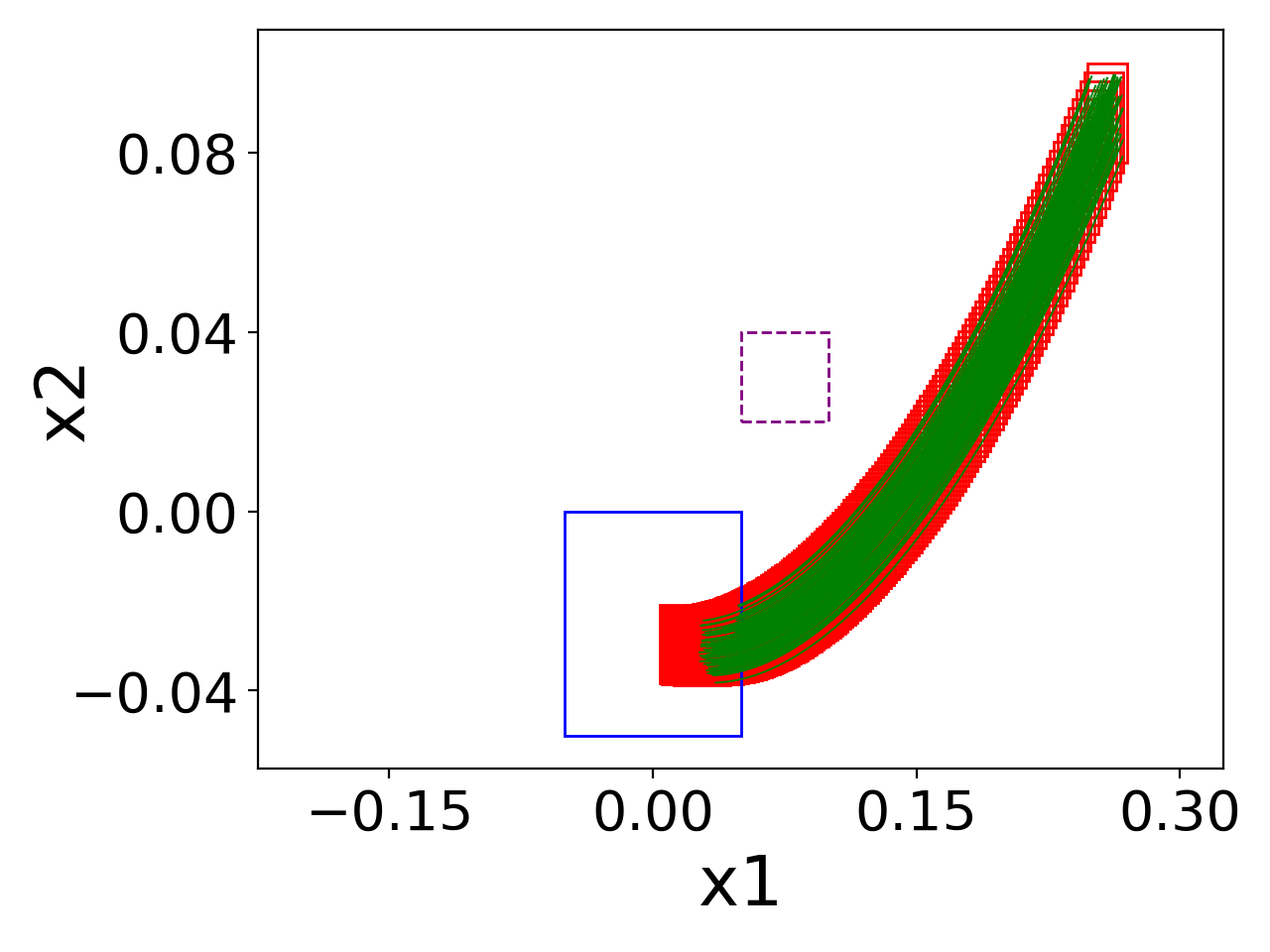}
	\end{minipage}
\\
\vspace{1ex}
	\begin{minipage}[b]{0.06\linewidth}
\begin{tikzpicture}
\tikz \node [draw] at (0.5,-1)  {B5};
\end{tikzpicture}
	\end{minipage}
	\begin{minipage}[b]{0.3\linewidth}
		\centering
		\includegraphics[scale=0.23]{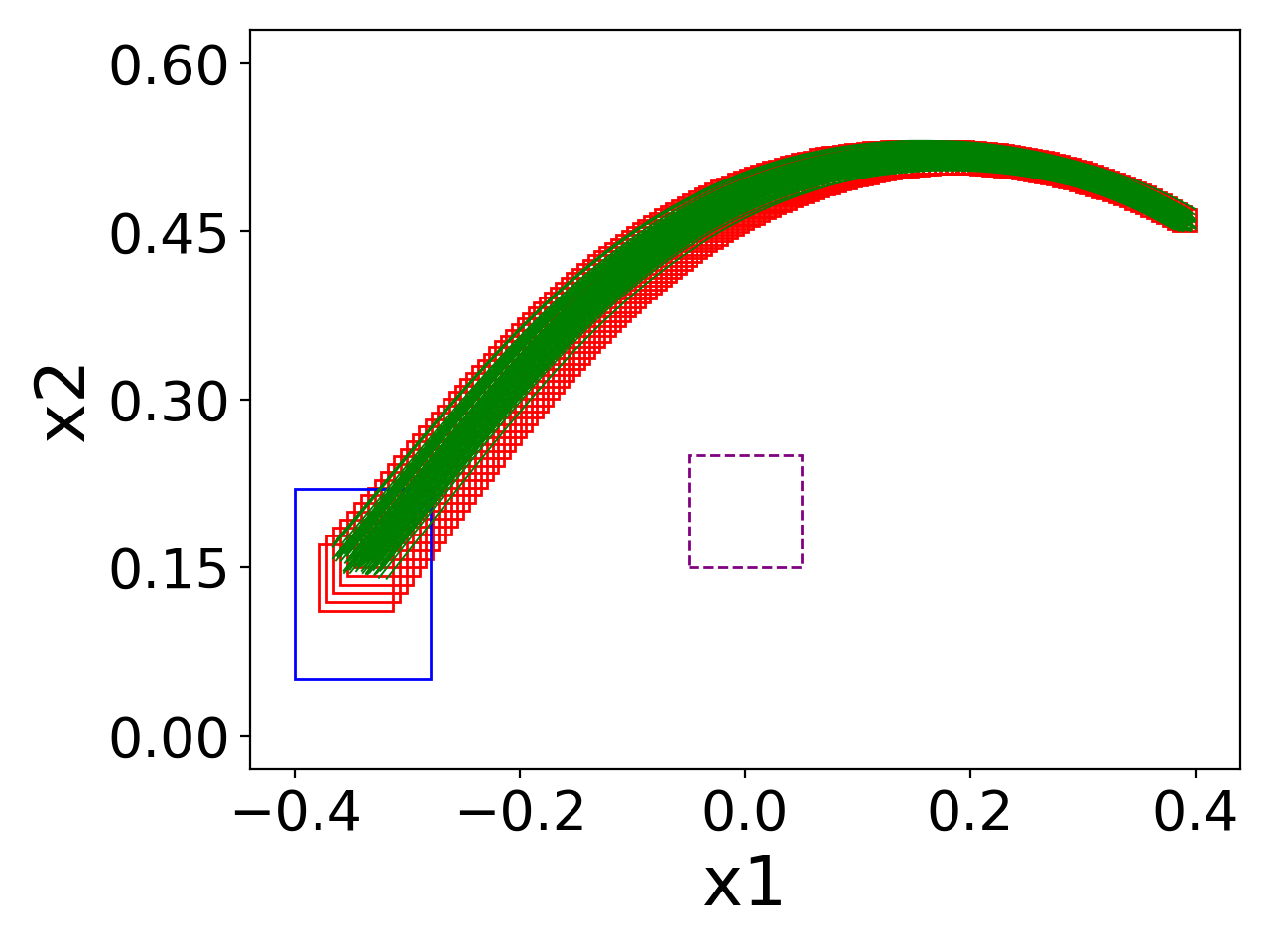}
	\end{minipage}
		\begin{minipage}[b]{0.3\linewidth}
		\centering
		\includegraphics[scale=0.23]{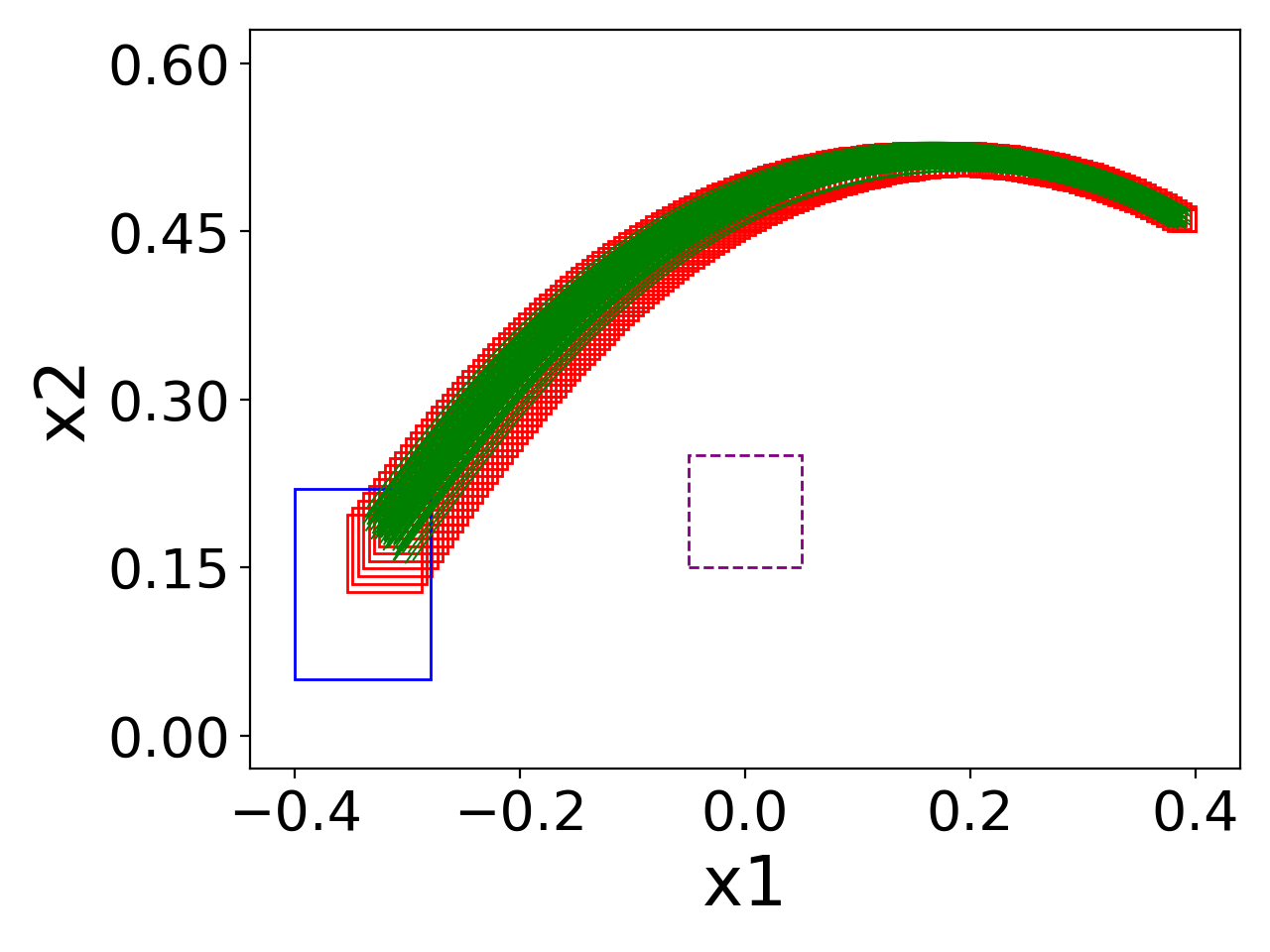}
	\end{minipage}
	\begin{minipage}[b]{0.3\linewidth}
		\centering
	\includegraphics[scale=0.23]{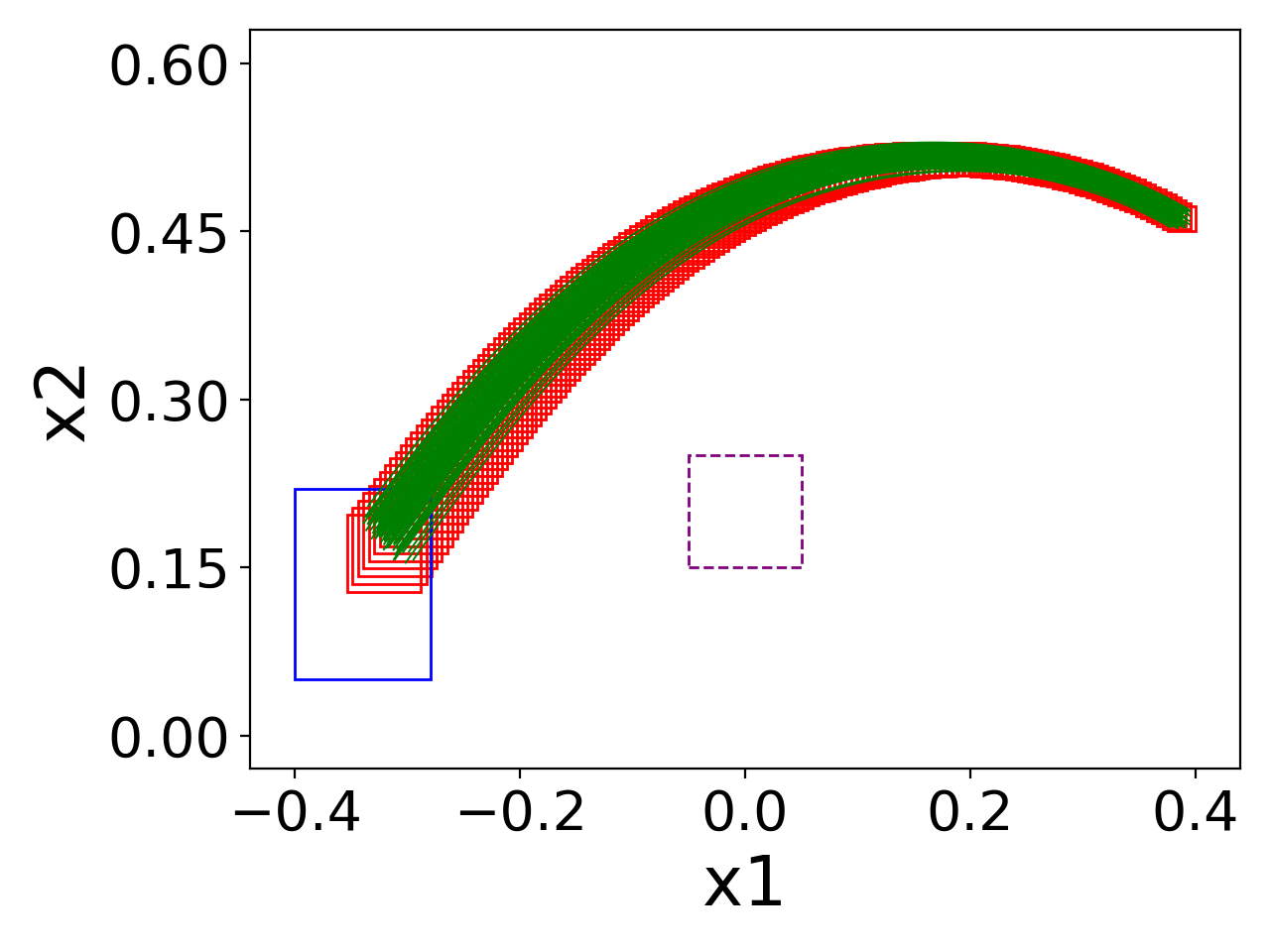}
	\end{minipage}
	\\
\vspace{1ex}
	\begin{minipage}[b]{0.06\linewidth}
\begin{tikzpicture}
\tikz \node [draw] at (0.6,-1)  {ACC};
\end{tikzpicture}
	\end{minipage}
	\begin{minipage}[b]{0.3\linewidth}
		\centering
		\includegraphics[scale=0.23]{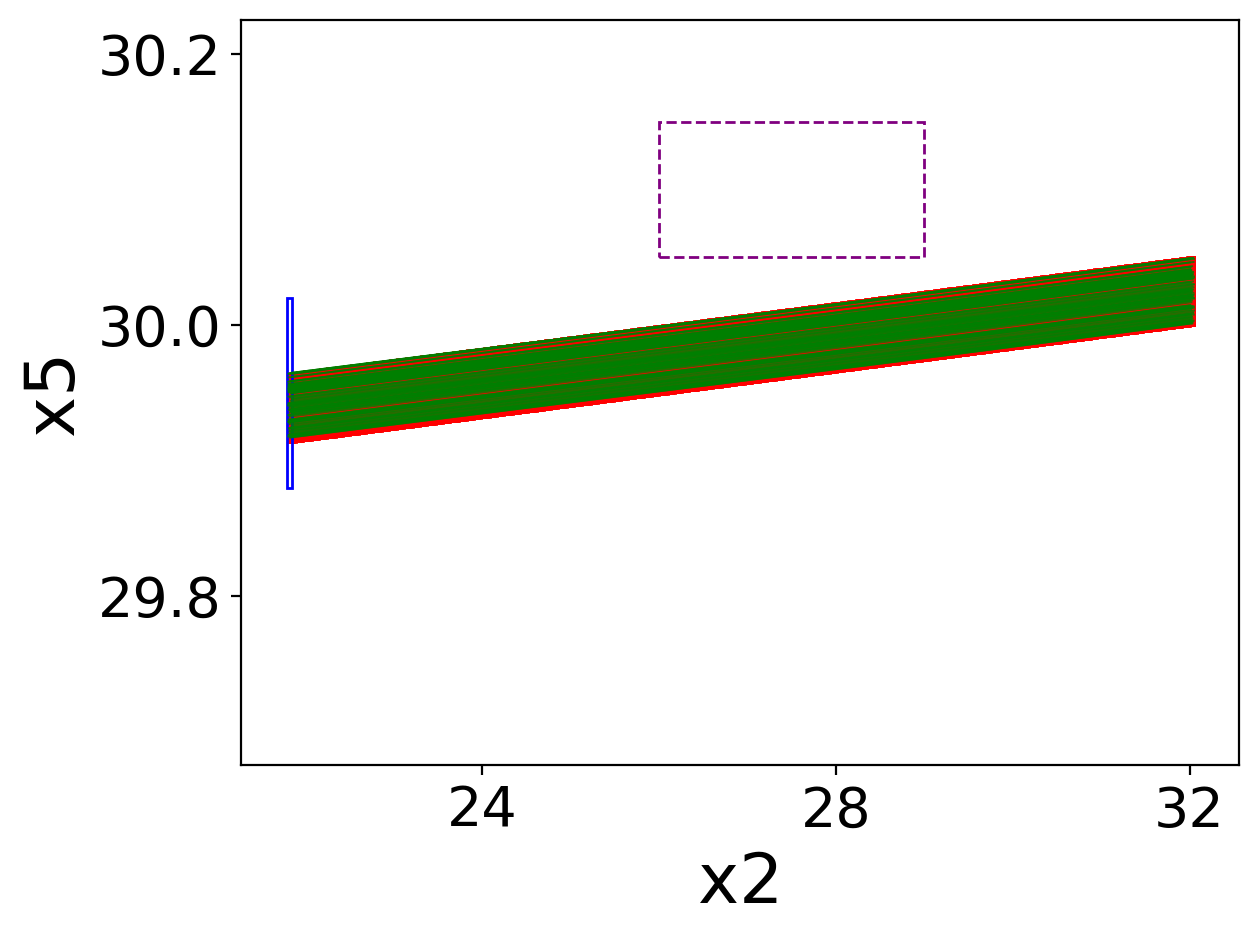}
	\end{minipage}
		\begin{minipage}[b]{0.3\linewidth}
		\centering
		\includegraphics[scale=0.23]{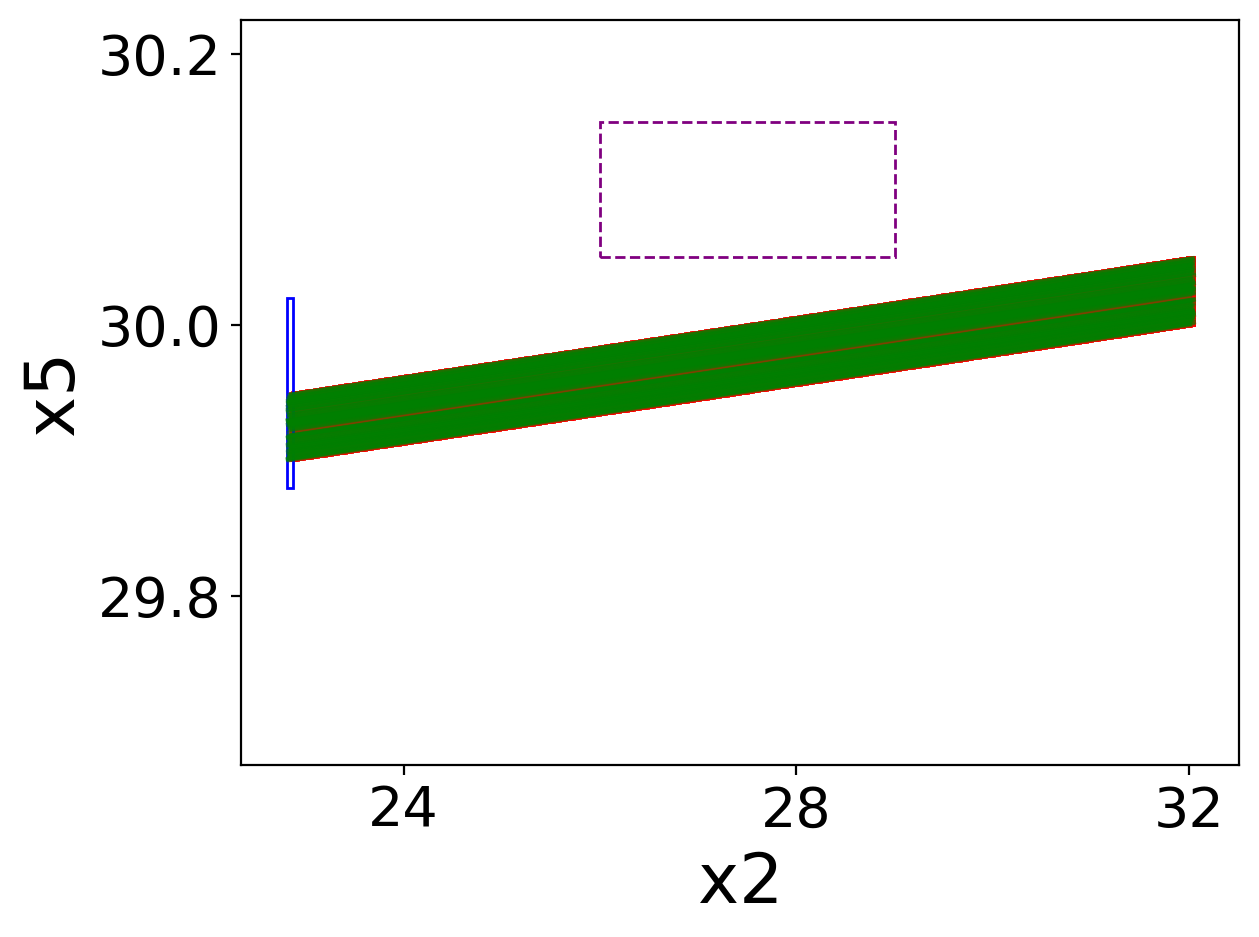}
	\end{minipage}
	\begin{minipage}[b]{0.3\linewidth}
		\centering
	\includegraphics[scale=0.23]{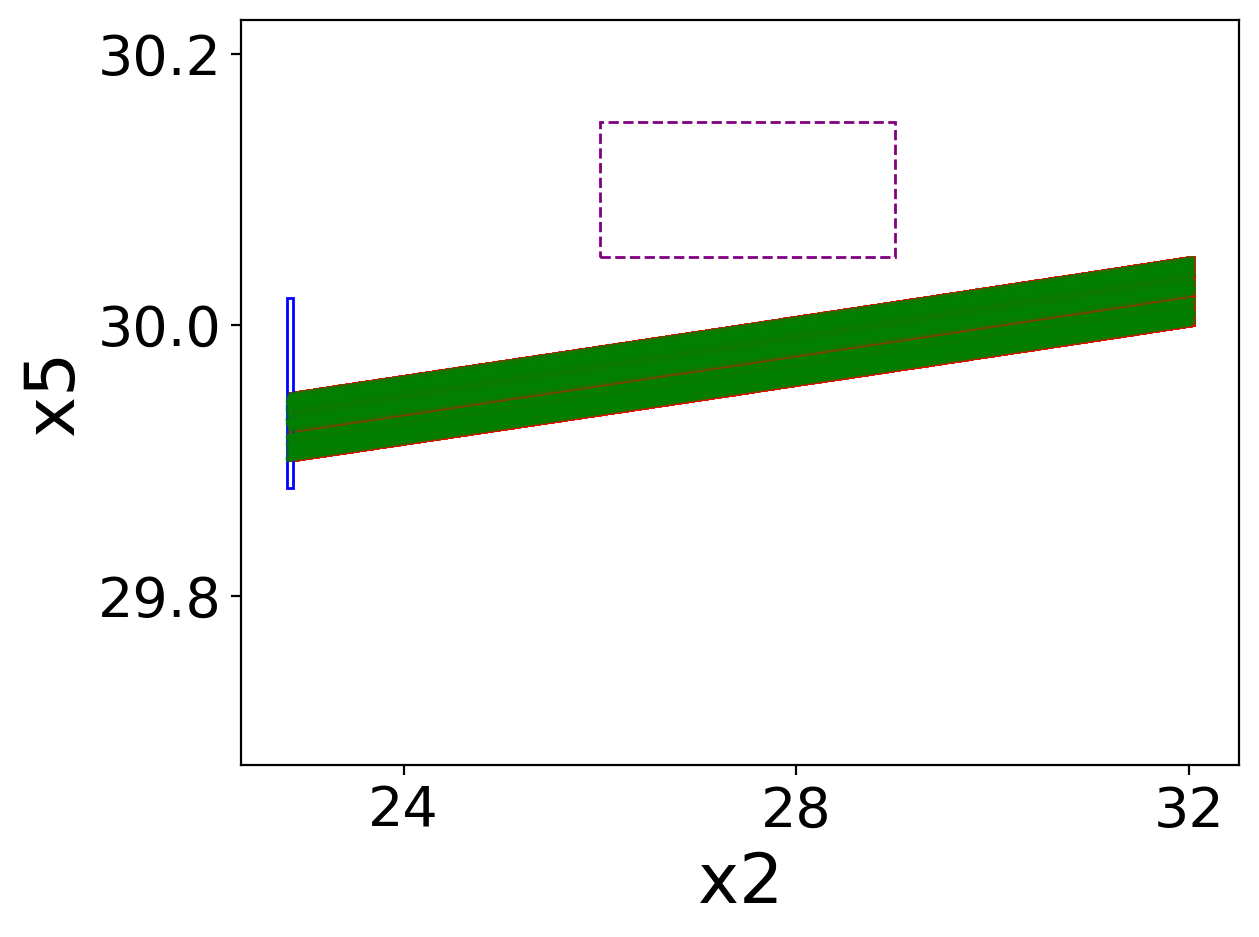}
	\end{minipage}
	\caption{Larger networks with the Tanh activation function.  green lines: simulation trajectories; red box: over-approximation set; blue box: goal region; 
	purple dashed box: unsafe region.} 
	\label{fig:reachable_sets_b345}
\end{figure}

 \clearpage 

\section{Differential and Decomposing Analysis Results}
\label{subsec:diff_dec_result}
In this section, we provide the complete differential and decomposing analysis results in Figures~\ref{fig:differential_result} and \ref{fig:decompose_result}.
Figure~\ref{fig:differential_result} shows the effect of adjacent interval aggregation, from which we can observe that with adjacent interval aggregation the number of interval boxes is small and stable compared to no aggregation, which makes our calculation more efficient while does not produce large over-approximation error as depicted in Figure~\ref{fig:reachable_sets} and Figure~\ref{fig:reachable_sets_b345}.
The abstraction granularity has a similar impact on the performance of \BBReach~as illustrated in Figure~\ref{fig:decompose_result}: the coarser-grained granularity leads to less time required.
 All of these results are consistent with the conclusion in Section~\ref{subsec:discussion}.

\begin{figure}[htbp]
\begin{center}
		\begin{tabular}{ccc}
		\begin{subfigure}[b]{0.32\textwidth}
			\includegraphics[width=\textwidth]{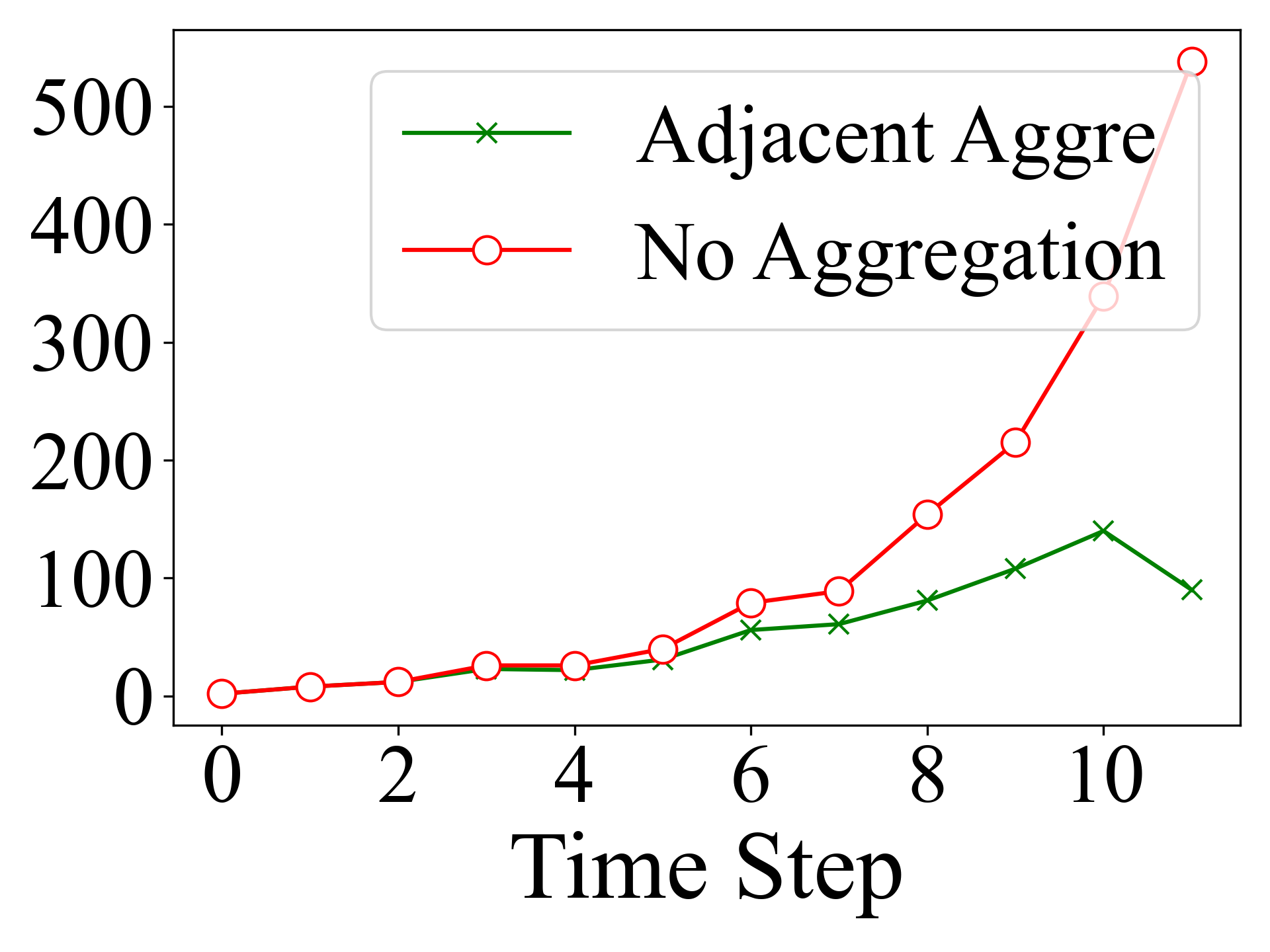}
			\caption{B5}
			\label{fig:b5_agg_com}
		\end{subfigure}&
		\begin{subfigure}[b]{0.32\textwidth}
			\includegraphics[width=\textwidth]{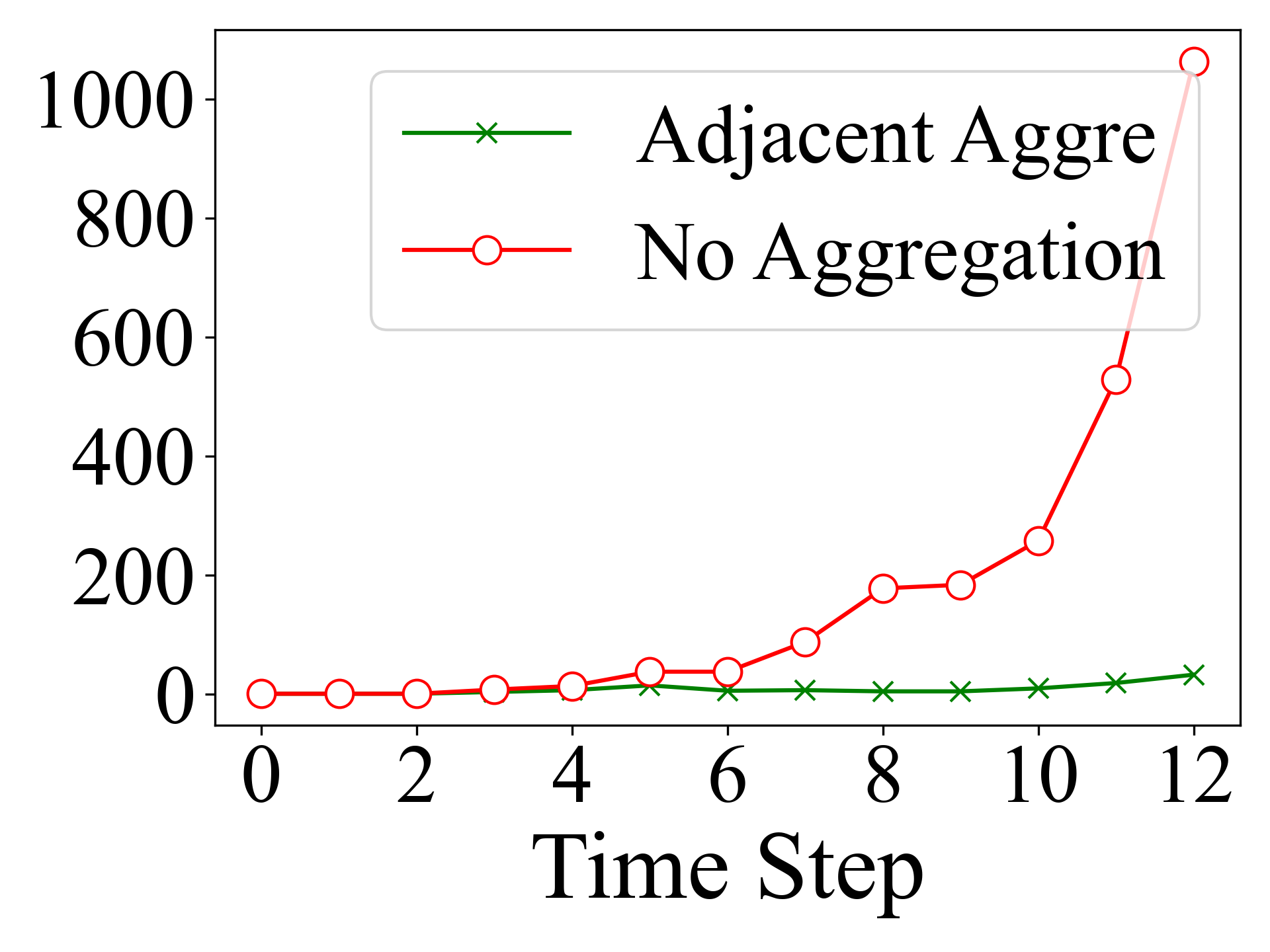}
			\caption{Tora}
			\label{fig:tora_agg_com}
		\end{subfigure}&
		\begin{subfigure}[b]{0.32\textwidth}
			\includegraphics[width=\textwidth]{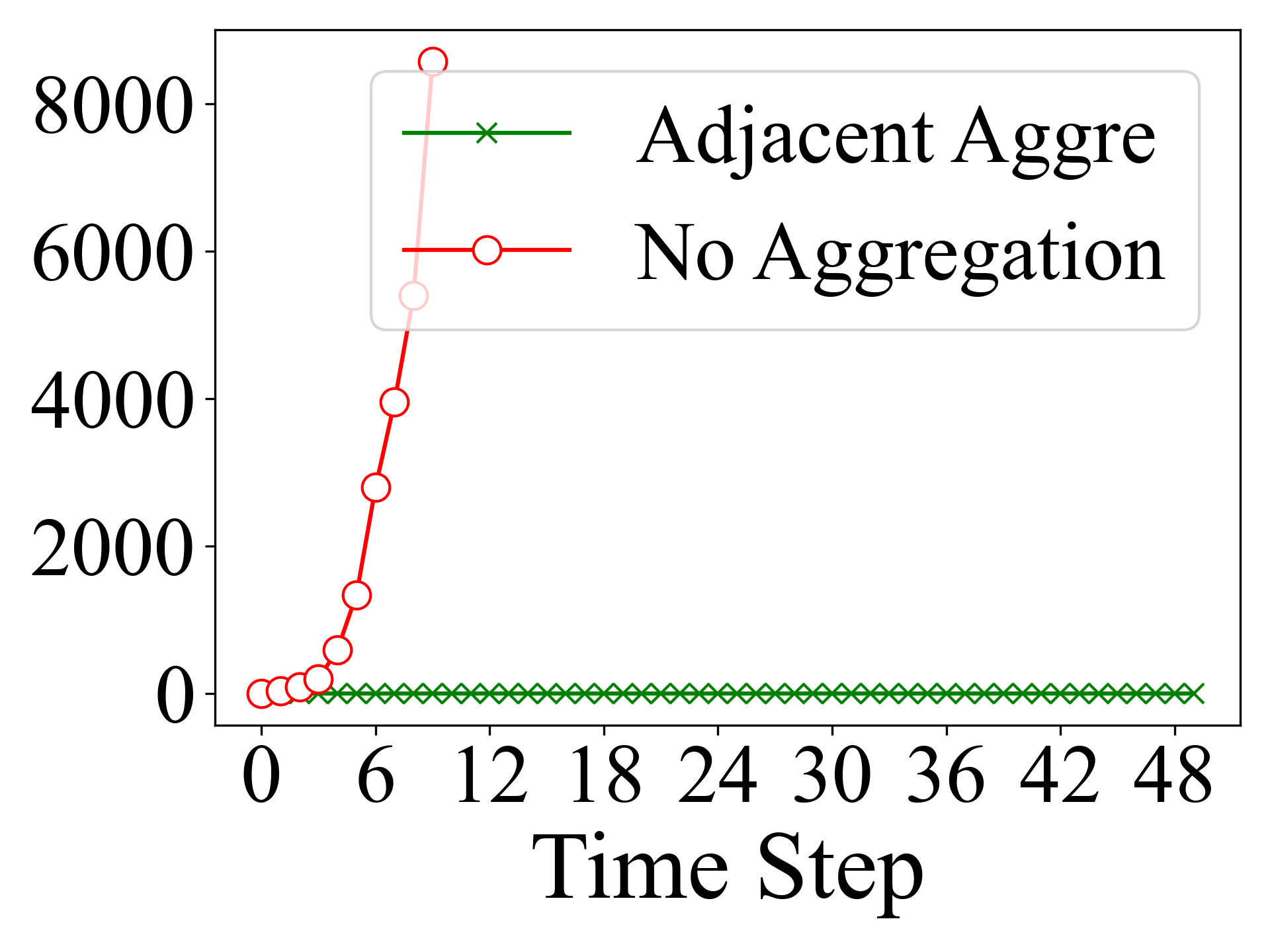}
			\caption{ACC}
			\label{fig:acc_agg_com}
		\end{subfigure}
		\\
	\end{tabular}
\end{center}
\vspace{-3ex}
\caption{Differential analysis results on the effect of adjacent aggregation. Y-axis indicates the number of interval boxes.}
\label{fig:differential_result}
\end{figure}

\begin{figure}[h!]
\begin{center}
		\begin{tabular}{cccc}
		\begin{subfigure}[b]{0.25\textwidth}
			\includegraphics[width=\textwidth]{imgs/b1_tanh_abstraction_granularity_com.png}
			\caption{B1 (tanh)}
		\end{subfigure}&
		\begin{subfigure}[b]{0.25\textwidth}
			\includegraphics[width=\textwidth]{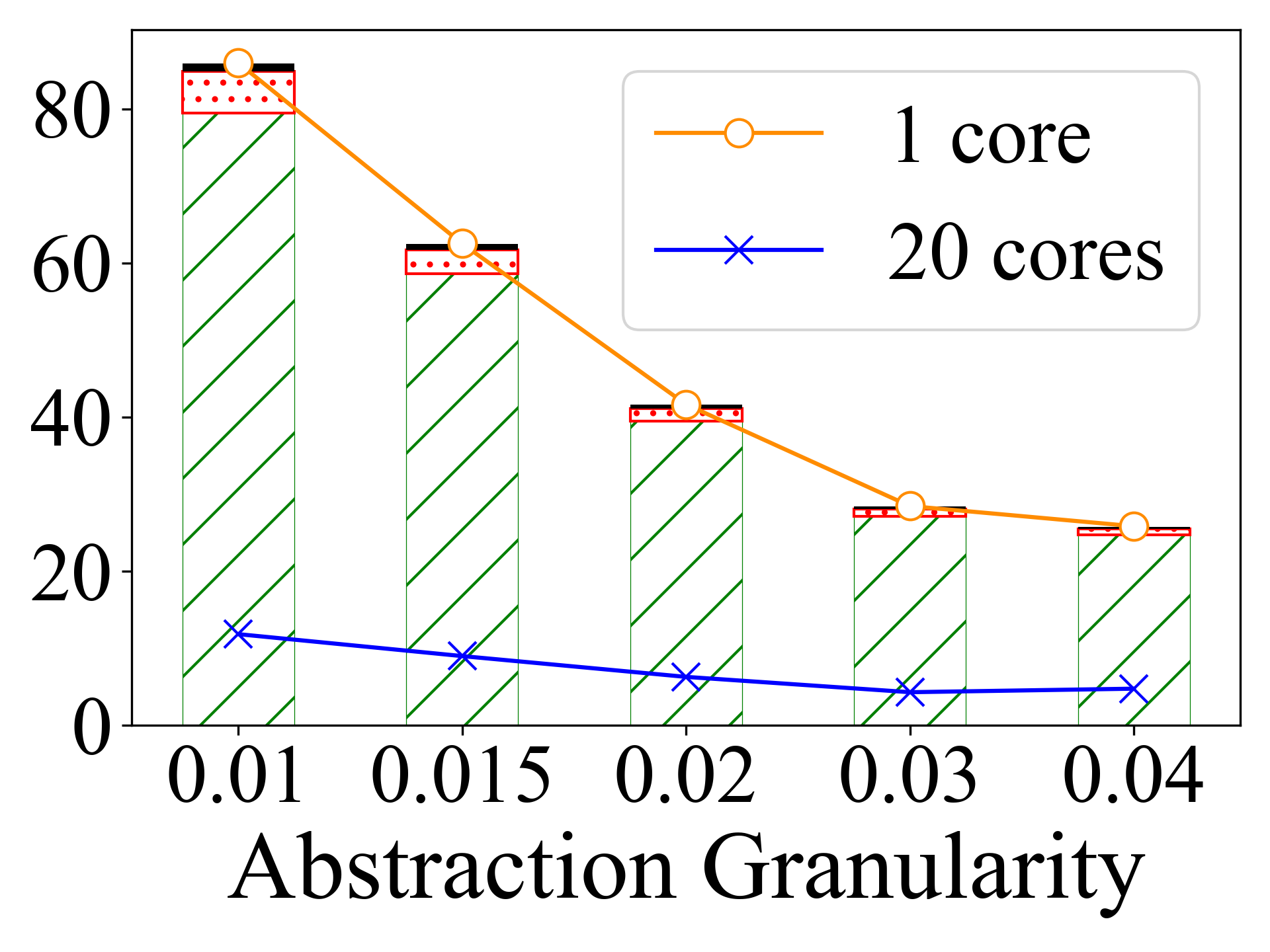}
			\caption{B1 (relu)}
			\label{fig:b1_dec_relu}
		\end{subfigure}&
		\begin{subfigure}[b]{0.25\textwidth}
			\includegraphics[width=\textwidth]{imgs/b2_tanh_abstraction_granularity_com.png}
			\caption{B2 (tanh)}
		\end{subfigure}&
				\begin{subfigure}[b]{0.25\textwidth}
			\includegraphics[width=\textwidth]{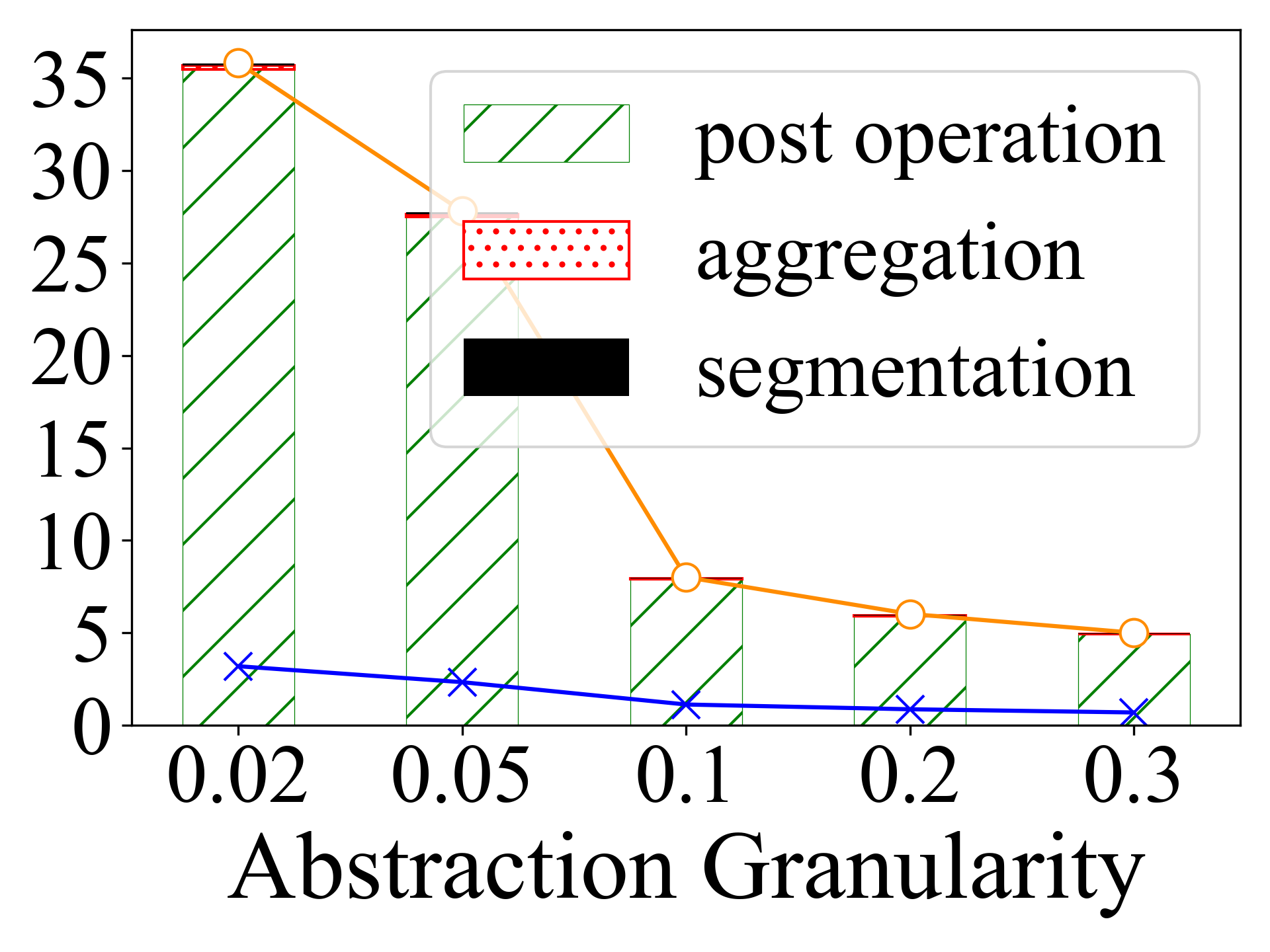}
			\caption{B2 (relu)}
			\label{fig:b2_dec_relu}
		\end{subfigure}
		\\
		\begin{subfigure}[b]{0.25\textwidth}
			\includegraphics[width=\textwidth]{imgs/b3_tanh_abstraction_granularity_com.png}
			\caption{B3 (tanh)}
		\end{subfigure}&
		\begin{subfigure}[b]{0.25\textwidth}
			\includegraphics[width=\textwidth]{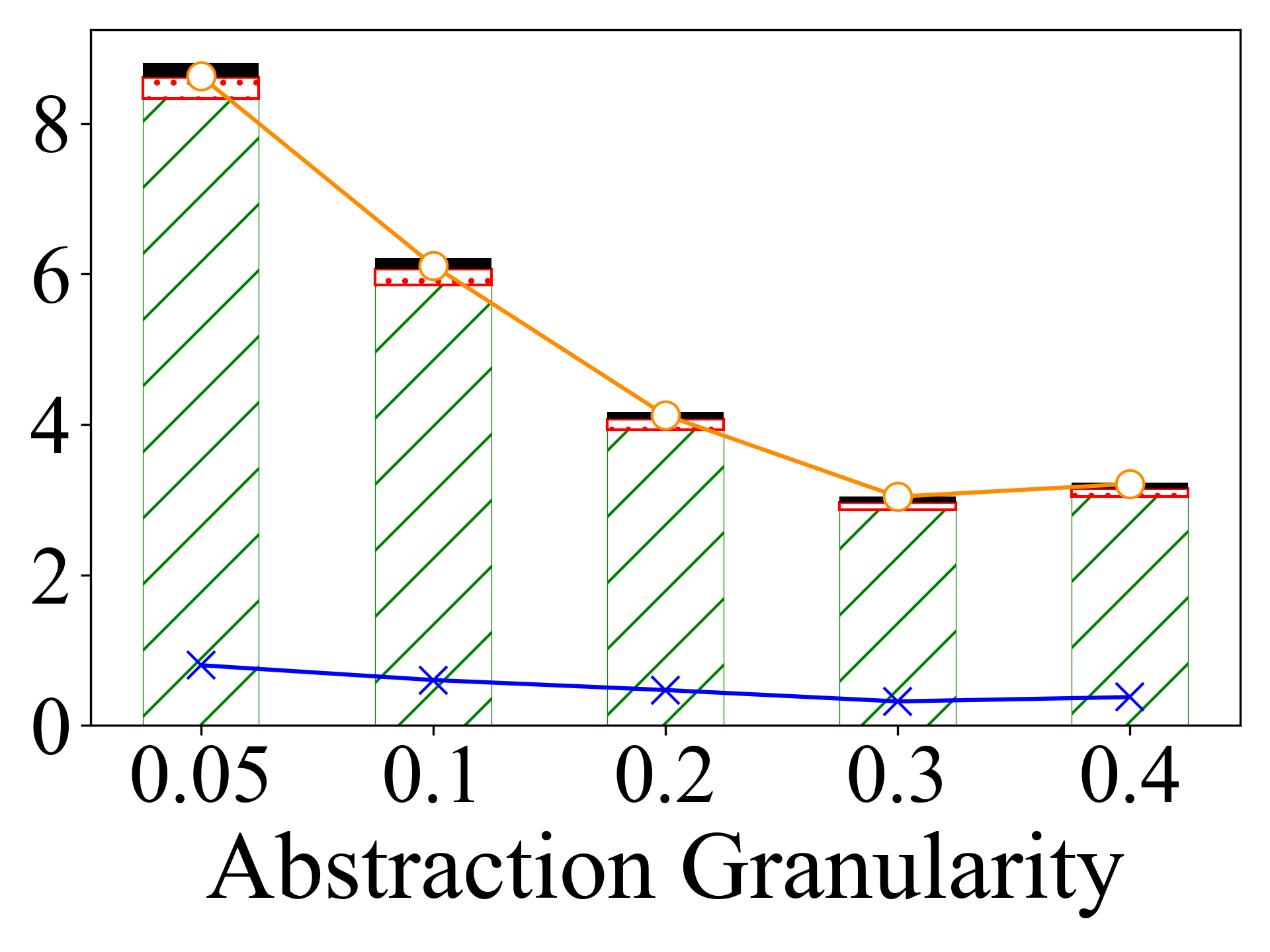}
			\caption{B3 (relu)}
			\label{fig:b3_dec_relu}
		\end{subfigure}&
		\begin{subfigure}[b]{0.25\textwidth}
			\includegraphics[width=\textwidth]{imgs/b4_tanh_abstraction_granularity_com.png}
			\caption{B4 (tanh)}
		\end{subfigure}&
				\begin{subfigure}[b]{0.25\textwidth}
			\includegraphics[width=\textwidth]{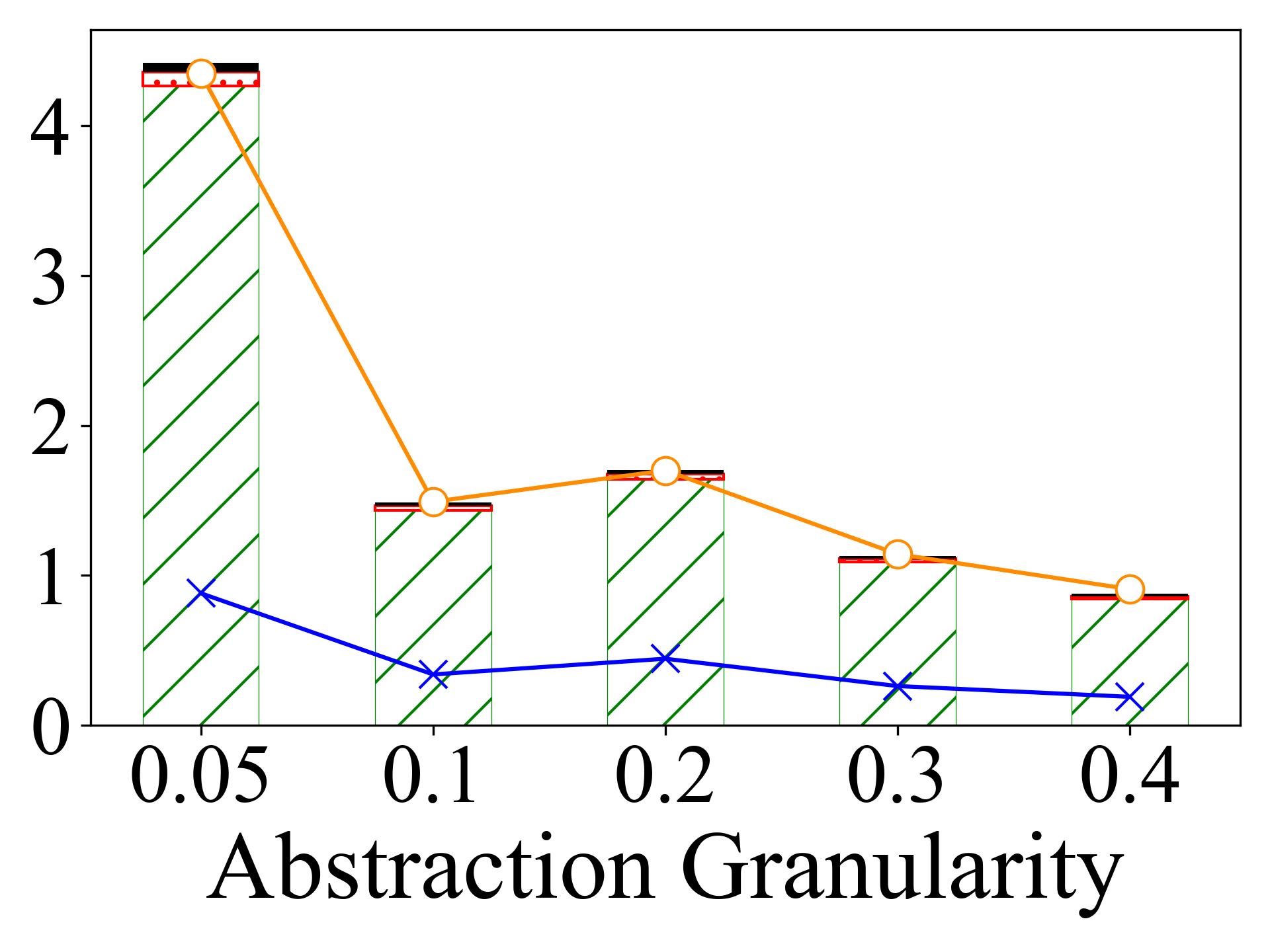}
			\caption{B4 (relu)}
			\label{fig:b4_dec_relu}
		\end{subfigure}
		\\
				\begin{subfigure}[b]{0.25\textwidth}
			\includegraphics[width=\textwidth]{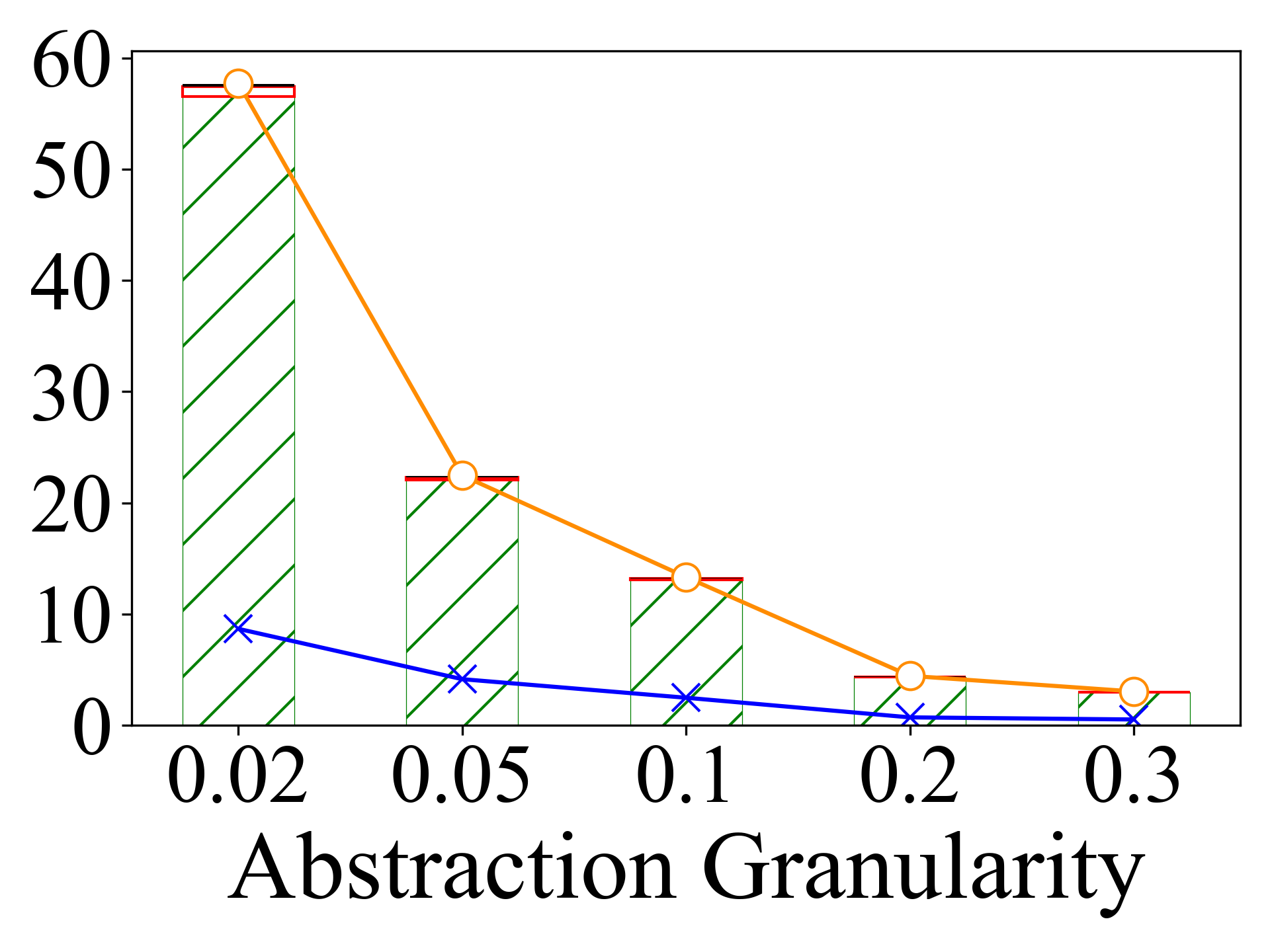}
			\caption{B5 (tanh)}
			\label{fig:b5_dec_tanh}
		\end{subfigure}&
		\begin{subfigure}[b]{0.25\textwidth}
			\includegraphics[width=\textwidth]{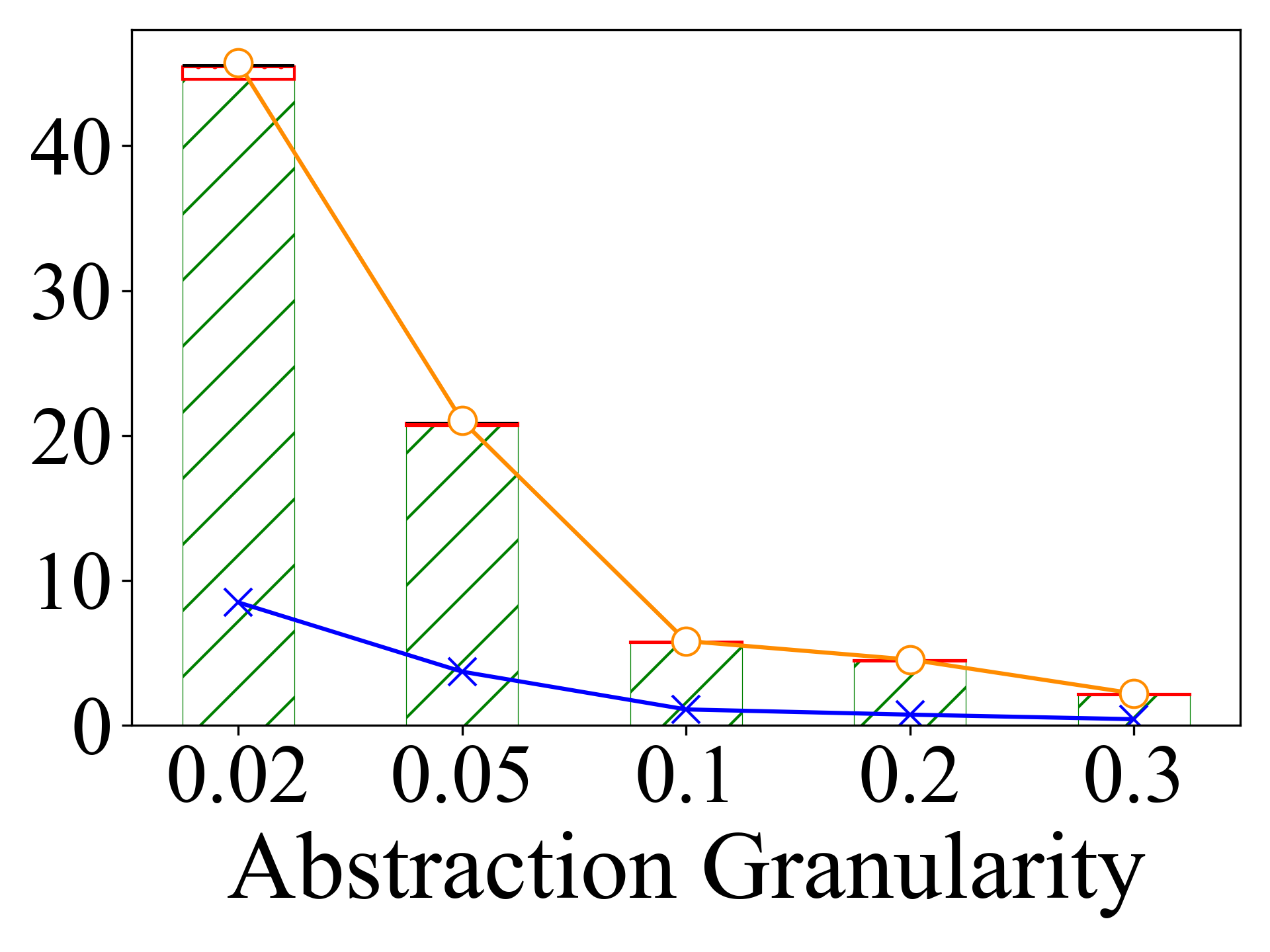}
			\caption{B5 (relu)}
			\label{fig:b5_dec_relu}
		\end{subfigure}&
		\begin{subfigure}[b]{0.25\textwidth}
			\includegraphics[width=\textwidth]{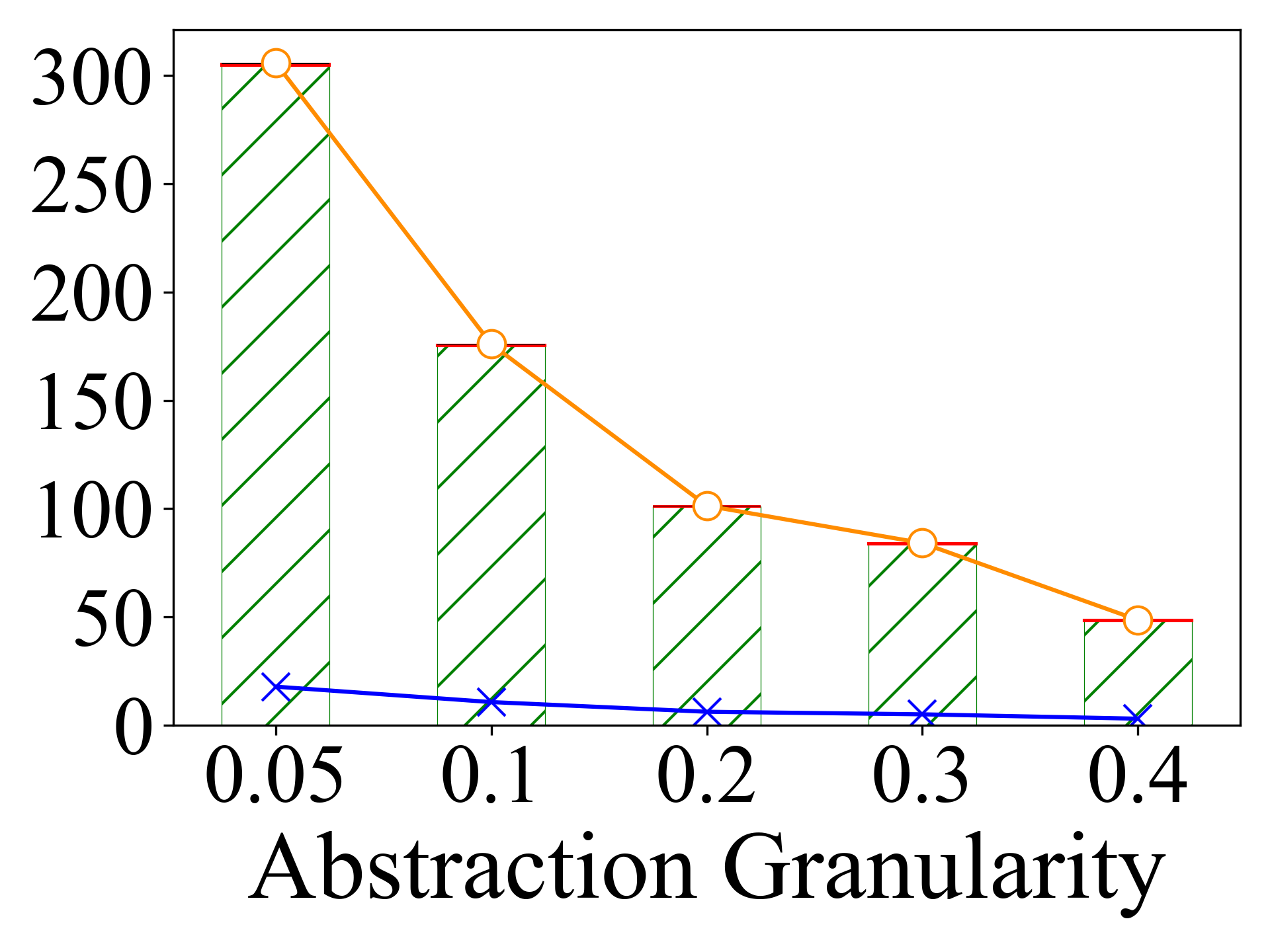}
			\caption{Tora (tanh)}
			\label{fig:tora_dec_tanh}
		\end{subfigure}&
				\begin{subfigure}[b]{0.25\textwidth}
			\includegraphics[width=\textwidth]{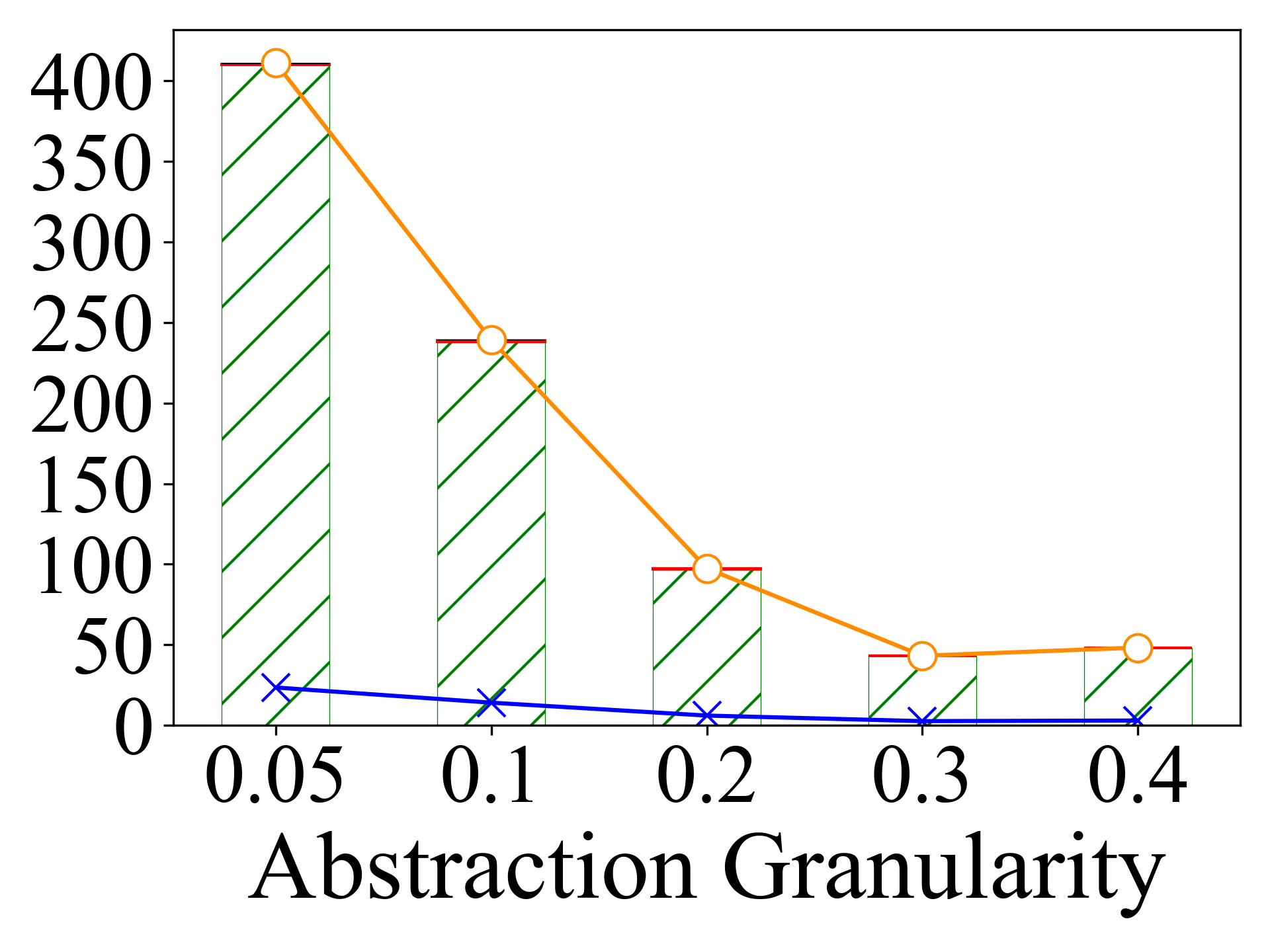}
			\caption{Tora (relu)}
			\label{fig:tora_dec_relu}
		\end{subfigure}
		\\
		\begin{subfigure}[b]{0.25\textwidth}
			\includegraphics[width=\textwidth]{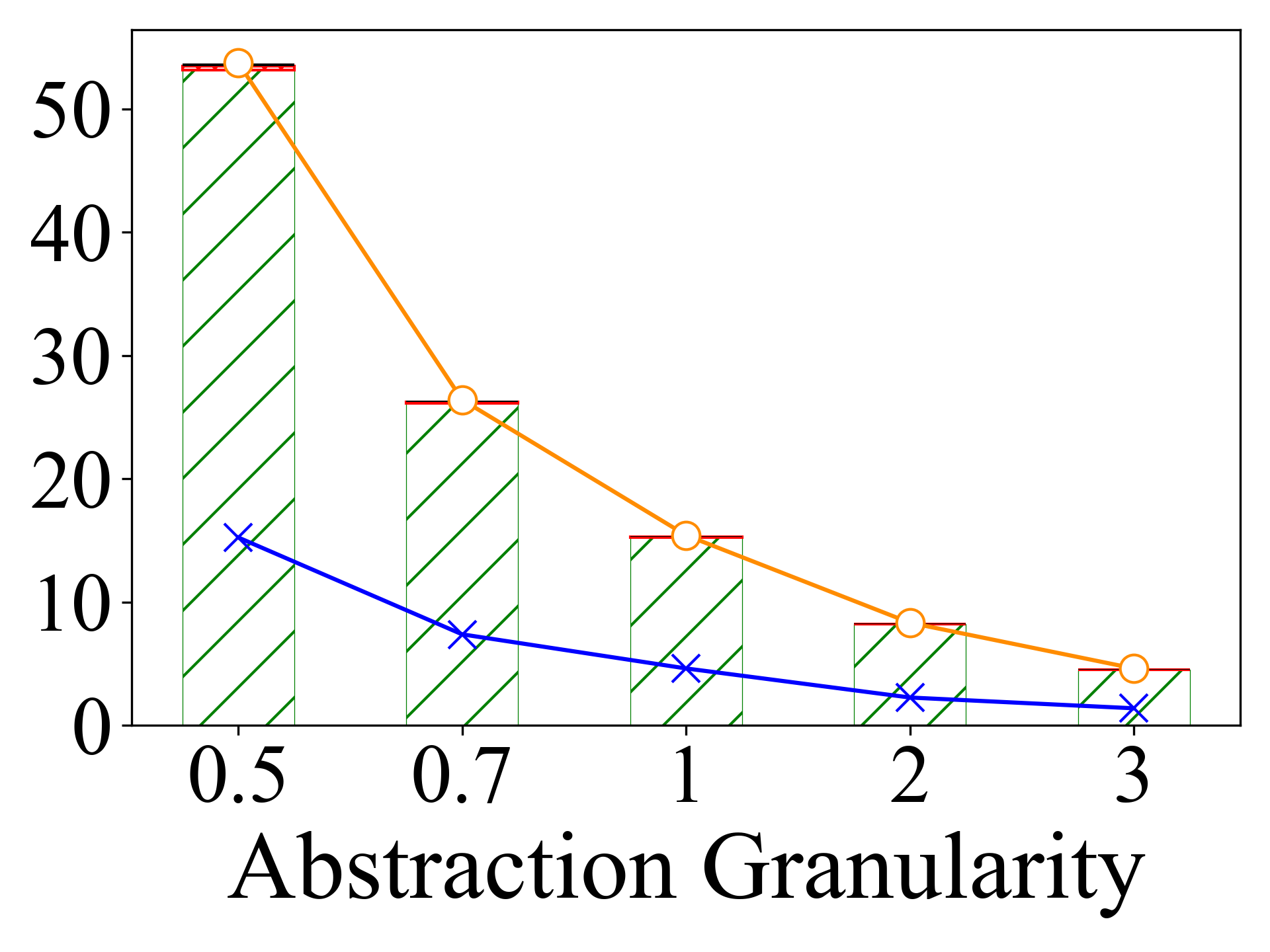}
			\caption{ACC (tanh)}
			\label{fig:acc_dec_atnh}
		\end{subfigure}&
		\begin{subfigure}[b]{0.25\textwidth}
			\includegraphics[width=\textwidth]{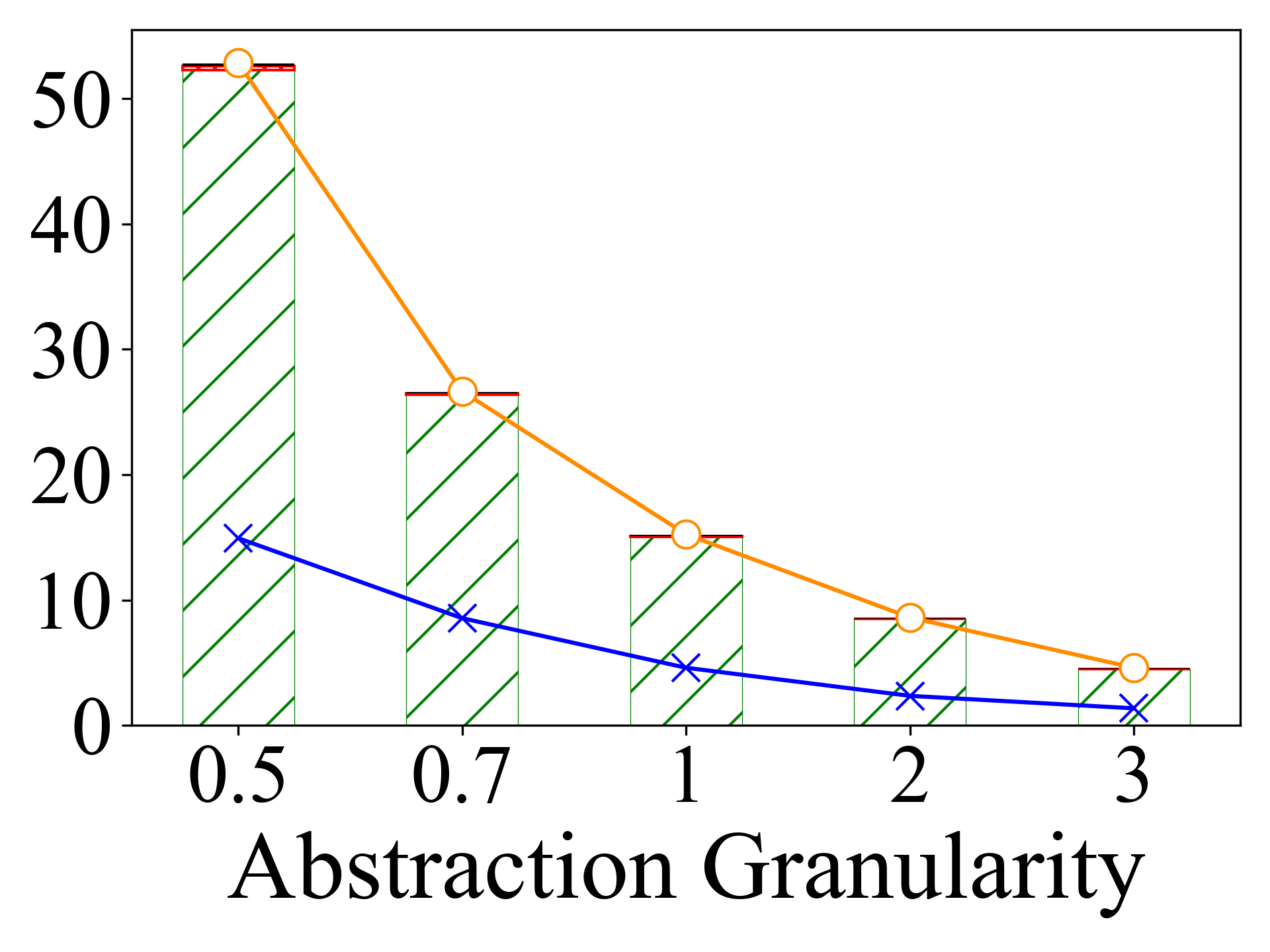}
			
			\caption{ACC (relu)}
			\label{fig:acc_dec_relu}
		\end{subfigure}
		&&\\
	\end{tabular}
\end{center}
\vspace{-3ex}
\caption{Decomposing analysis results. Due to the space reason, for B1-B5 and Tora, we use a scalar value $x_1$ to denote the $n$-dimensional abstraction granularity vector $\gamma = (x_1,...,x_1)$. For ACC, we use a 3-dimensional vector $(x_1, x_2, x_3)$ to denote the 6-dimensional abstraction granularity vector $\gamma = (x_1,x_2,x_3,x_1,x_2,x_3)$}
\label{fig:decompose_result}
\end{figure}

\clearpage 
\section{Comparison with ReachNN$^*$}
\label{app:vs-reachnn}

In the realm of advanced reachability analysis tools for systems controlled by deep neural networks (DNNs), ReachNN*~\cite{fan2020reachnn} stands out as a noteworthy contender. Distinguishing itself from methodologies like Verisig 2.0 and Polar, which adopt a layer-by-layer over-approximation strategy for DNNs, ReachNN* employs a distinctive \textit{sample-based} approach. This approach involves approximating entire neural networks through polynomial regression techniques.

Our investigation extends to a comparative analysis between \BBReach\ and ReachNN*. An added dimension to this comparison arises from both tools supporting parallelization, a feature that enhances efficiency. It is noteworthy, however, that our discussion concerning the performance of ReachNN* is reserved for this appendix. This decision is rooted in the fact that ReachNN*, despite its capabilities, has been eclipsed in evaluations by Polar~\cite{huang2022polar} and Verisig 2.0~\cite{ivanov2021verisig}.

Within this appendix, we present empirical evidence that underscores \BBReach's superiority over ReachNN*, with respect to both the precision of results and computational efficiency. Regarding the verification efficiency, despite ReachNN*'s acceleration with 32 cores, \BBReach\ achieves up to 2.5k times speedup with a single core; when the parallelization is enabled, \BBReach\ (with 20 cores) outperforms ReachNN$^*$ in all cases, with up to 22k times improvement.
 For the tightness of over-approximation results,  ReachNN* produces large over-approximation error in B2, Tora, and ACC (see Figure~\ref{fig:reachnn_tightness_result}) and fails to verify the reach-avoid properties in 15 out of 28 instances (see Table~\ref{tab:time_reachnn}).
 
\begin{table}[h!]
    \centering
    \footnotesize
    \renewcommand{\arraystretch}{0.95}
        \caption{Verification time (s) and result (verified or not) for the comparison between \BBReach~and  ReachNN$^*$. }
    \label{tab:time_reachnn}
    \begin{tabular}{|c|c|r|r|r|c|R{1.5cm}|R{1.5cm}|R{1.5cm}|C{1.5cm}|}
    	\hline
    	\multirow{2}{*}{\textbf{Task}} & \multirow{2}{*}{\textbf{Dim}} & \multicolumn{1}{c|}{\multirow{2}{*}{\textbf{Network}}} &  \multicolumn{3}{c|}{\textbf{\BBReach}}   &                                  \multicolumn{4}{c|}{\textbf{ReachNN$^{*}$}}                                  \\ \cline{4-10}
    	              ~                &               ~               &                                                      ~ & \textbf{1C} & \textbf{20Cs} & \textbf{VR} & \textbf{Default} &                      \textbf{Impr.} &                   \textbf{Impr.$^*$} &  \textbf{VR}  \\ \hline\hline
    	     \multirow{4}{*}{B1}       &      \multirow{4}{*}{2}       &                                   Tanh$_{2 \times 20}$ &        45.7 &          6.88 & \checkmark  &               60 &    \color{OliveGreen}{1.33$\times$} &     \color{OliveGreen}{8.72$\times$} & \ding{55}$^a$ \\ 
    	              ~                &               ~               &                                  Tanh$_{3 \times 100}$ &        42.8 &          5.53 & \checkmark  &              162 &    \color{OliveGreen}{3.79$\times$} &    \color{OliveGreen}{29.29$\times$} &  \checkmark   \\ \cline{3-10}
    	              ~                &               ~               &                                   ReLU$_{2 \times 20}$ &        42.9 &          6.44 & \checkmark  &               20 &      \color{Mahogany}{0.47$\times$} &     \color{OliveGreen}{3.11$\times$} &  \checkmark   \\ 
    	              ~                &               ~               &                                  ReLU$_{3 \times 100}$ &        52.5 &          8.65 & \checkmark  &              330 &    \color{OliveGreen}{6.29$\times$} &    \color{OliveGreen}{38.15$\times$} & \ding{55}$^a$ \\ \hline
    	     \multirow{4}{*}{B2}       &      \multirow{4}{*}{2}       &                                   Tanh$_{2 \times 20}$ &        10.0 &          1.19 & \checkmark  &               71 &    \color{OliveGreen}{7.10$\times$} &    \color{OliveGreen}{59.66$\times$} & \ding{55}$^a$ \\
    	              ~                &               ~               &                                  Tanh$_{3 \times 100}$ &        10.8 &          1.36 & \checkmark  &              172 &   \color{OliveGreen}{15.93$\times$} &   \color{OliveGreen}{126.47$\times$} & \ding{55}$^a$ \\ \cline{3-10}
    	              ~                &               ~               &                                   ReLU$_{2 \times 20}$ &         8.6 &          1.30 & \checkmark  &                4 &      \color{Mahogany}{0.47$\times$} &     \color{OliveGreen}{3.08$\times$} &  \checkmark   \\
    	              ~                &               ~               &                                  ReLU$_{3 \times 100}$ &        12.4 &          1.42 & \checkmark  &             6647 &  \color{OliveGreen}{536.05$\times$} &  \color{OliveGreen}{4680.99$\times$} & \ding{55}$^a$ \\ \hline
    	     \multirow{4}{*}{B3}       &      \multirow{4}{*}{2}       &                                   Tanh$_{2 \times 20}$ &         4.2 &          0.47 & \checkmark  &              115 &   \color{OliveGreen}{27.38$\times$} &   \color{OliveGreen}{244.68$\times$} &  \checkmark   \\ 
    	              ~                &               ~               &                                  Tanh$_{3 \times 100}$ &         4.3 &          0.50 & \checkmark  &               93 &   \color{OliveGreen}{21.63$\times$} &   \color{OliveGreen}{186.00$\times$} &  \checkmark   \\ \cline{3-10}
    	              ~                &               ~               &                                   ReLU$_{2 \times 20}$ &         4.1 &          0.47 & \checkmark  &               69 &   \color{OliveGreen}{16.83$\times$} &   \color{OliveGreen}{146.81$\times$} &  \checkmark   \\ 
    	              ~                &               ~               &                                  ReLU$_{3 \times 100}$ &         4.2 &          0.47 & \checkmark  &            10321 & \color{OliveGreen}{2457.38$\times$} & \color{OliveGreen}{21959.57$\times$} & \ding{55}$^a$ \\ \hline
    	     \multirow{4}{*}{B4}       &      \multirow{4}{*}{3}       &                                   Tanh$_{2 \times 20}$ &         1.3 &          0.32 & \checkmark  &               17 &   \color{OliveGreen}{13.08$\times$} &    \color{OliveGreen}{53.13$\times$} &  \checkmark   \\
    	              ~                &               ~               &                                  Tanh$_{3 \times 100}$ &         1.0 &          0.24 & \checkmark  &               20 &   \color{OliveGreen}{20.00$\times$} &    \color{OliveGreen}{83.33$\times$} &  \checkmark   \\ \cline{3-10}
    	              ~                &               ~               &                                   ReLU$_{2 \times 20}$ &         1.9 &          0.48 & \checkmark  &                7 &    \color{OliveGreen}{3.68$\times$} &    \color{OliveGreen}{14.58$\times$} &  \checkmark   \\
    	              ~                &               ~               &                                  ReLU$_{3 \times 100}$ &         1.8 &          0.43 & \checkmark  &               18 &   \color{OliveGreen}{10.00$\times$} &    \color{OliveGreen}{41.86$\times$} &  \checkmark   \\ \hline
    	     \multirow{4}{*}{B5}       &      \multirow{4}{*}{3}       &                                  Tanh$_{3 \times 100}$ &        13.3 &          2.48 & \checkmark  &               27 &    \color{OliveGreen}{2.03$\times$} &    \color{OliveGreen}{10.89$\times$} & \ding{55}$^a$ \\
    	              ~                &               ~               &                                  Tanh$_{4 \times 200}$ &         8.2 &          1.63 & \checkmark  &             2344 &  \color{OliveGreen}{285.85$\times$} &  \color{OliveGreen}{1438.04$\times$} & \ding{55}$^a$ \\ \cline{3-10}
    	              ~                &               ~               &                                  ReLU$_{3 \times 100}$ &         5.8 &          1.08 & \checkmark  &               90 &   \color{OliveGreen}{15.52$\times$} &    \color{OliveGreen}{83.33$\times$} &  \checkmark   \\
    	              ~                &               ~               &                                  ReLU$_{4 \times 200}$ &        13.5 &          2.50 & \checkmark  &             2845 &  \color{OliveGreen}{210.74$\times$} &  \color{OliveGreen}{1138.00$\times$} & \ding{55}$^a$ \\ \hline
    	    \multirow{4}{*}{Tora}      &      \multirow{4}{*}{4}       &                                   Tanh$_{3 \times 20}$ &       133.2 &          8.61 & \checkmark  &             1610 &   \color{OliveGreen}{12.09$\times$} &   \color{OliveGreen}{186.99$\times$} &  \checkmark   \\
    	              ~                &               ~               &                                  Tanh$_{4 \times 100}$ &       112.3 &          9.78 & \checkmark  &              --- &                                 --- &                                  --- & \ding{55}$^b$ \\ \cline{3-10}
    	              ~                &               ~               &                                   ReLU$_{3 \times 20}$ &       124.7 &          9.97 & \checkmark  &              778 &    \color{OliveGreen}{6.24$\times$} &    \color{OliveGreen}{78.03$\times$} &  \checkmark   \\
    	              ~                &               ~               &                                  ReLU$_{4 \times 100}$ &       128.1 &          7.54 & \checkmark  &              --- &                                 --- &                                  --- & \ding{55}$^b$ \\ \hline
    	     \multirow{4}{*}{ACC}      &      \multirow{4}{*}{6}       &                                   Tanh$_{3 \times 20}$ &        15.4 &          4.53 & \checkmark  &             5498 &  \color{OliveGreen}{357.01$\times$} &  \color{OliveGreen}{1213.69$\times$} & \ding{55}$^a$ \\
    	              ~                &               ~               &                                  Tanh$_{4 \times 100}$ &        15.2 &          4.51 & \checkmark  &              --- &                                 --- &                                  --- & \ding{55}$^b$ \\ \cline{3-10}
    	              ~                &               ~               &                                   ReLU$_{3 \times 20}$ &        15.2 &          4.45 & \checkmark  &             4633 &  \color{OliveGreen}{304.80$\times$} &  \color{OliveGreen}{1041.12$\times$} & \ding{55}$^a$ \\
    	              ~                &               ~               &                                  ReLU$_{4 \times 100}$ &        18.4 &          5.49 & \checkmark  &              --- &                                 --- &                                  --- & \ding{55}$^b$ \\ \hline
    \end{tabular}
    \begin{tablenotes}
			\small \item \textbf{Remarks.}
			 Impr.: time speedup of \BBReach~with one core compared to ReachNN$^*$ with 32 cores  (ReachNN$^*$/\BBReach). Impr.$^*$ denotes the comparison between BBReach with 20 cores and ReachNN$^*$ with 32 cores. 
		Tanh/ReLU($n \times k$): a neural network with the activation function Tanh/ReLU, $n$ hidden layers, and $k$ neurons per hidden layer.   
  VR:  verification result.
			$\checkmark$: the reach-avoid problem is successfully verified.
    \ding{55}$^{type}$: the reach-avoid problem cannot be verified due to $type$: (a) large over-approximation error, 
    (b) the calculation did not finish.  
    ---: no data available due to \ding{55}$^{b}$.
    \end{tablenotes}
\vspace{-3ex}
\end{table}
\clearpage 
\begin{figure}[htbp]
\begin{center}
		\begin{tabular}{ccc}
		\begin{subfigure}[b]{0.33\textwidth}
			\includegraphics[width=\textwidth]{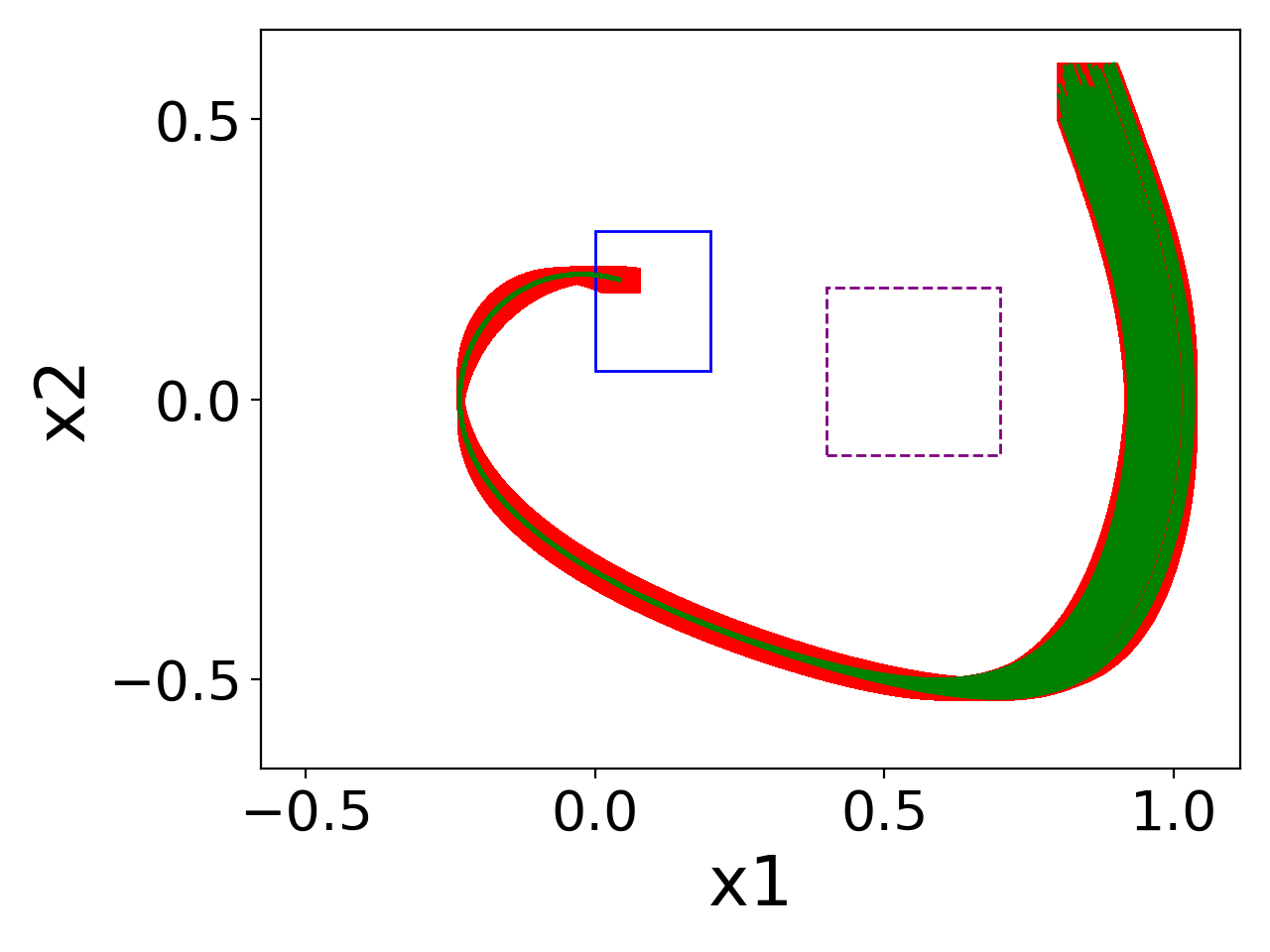}
			\caption{B1}
			\label{fig:reachnn_tightness_b1}
		\end{subfigure}&
		\begin{subfigure}[b]{0.33\textwidth}
			\includegraphics[width=\textwidth]{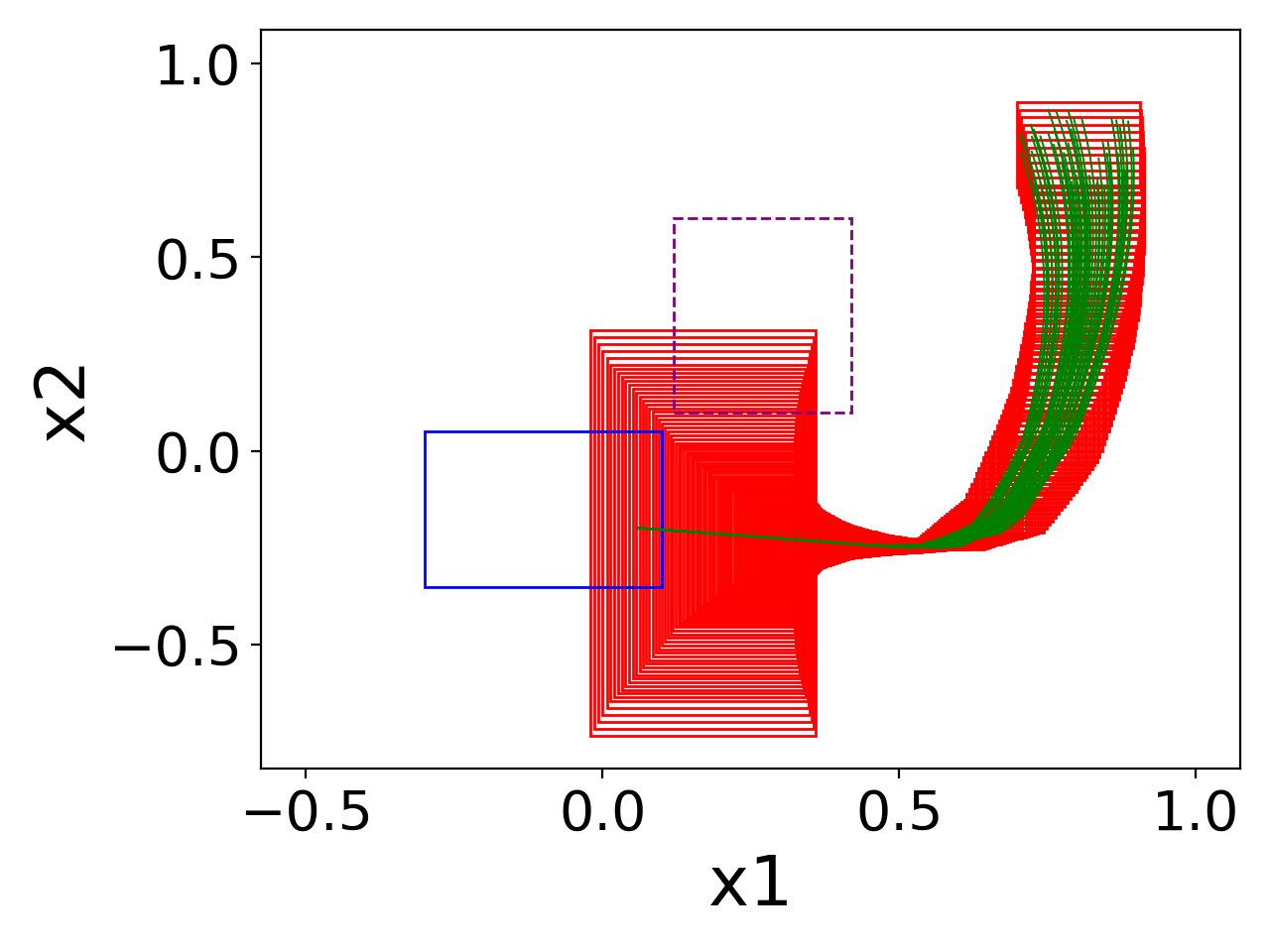}
			\caption{B2}
			\label{fig:reachnn_tightness_b2}
		\end{subfigure}&
		\begin{subfigure}[b]{0.33\textwidth}
			\includegraphics[width=\textwidth]{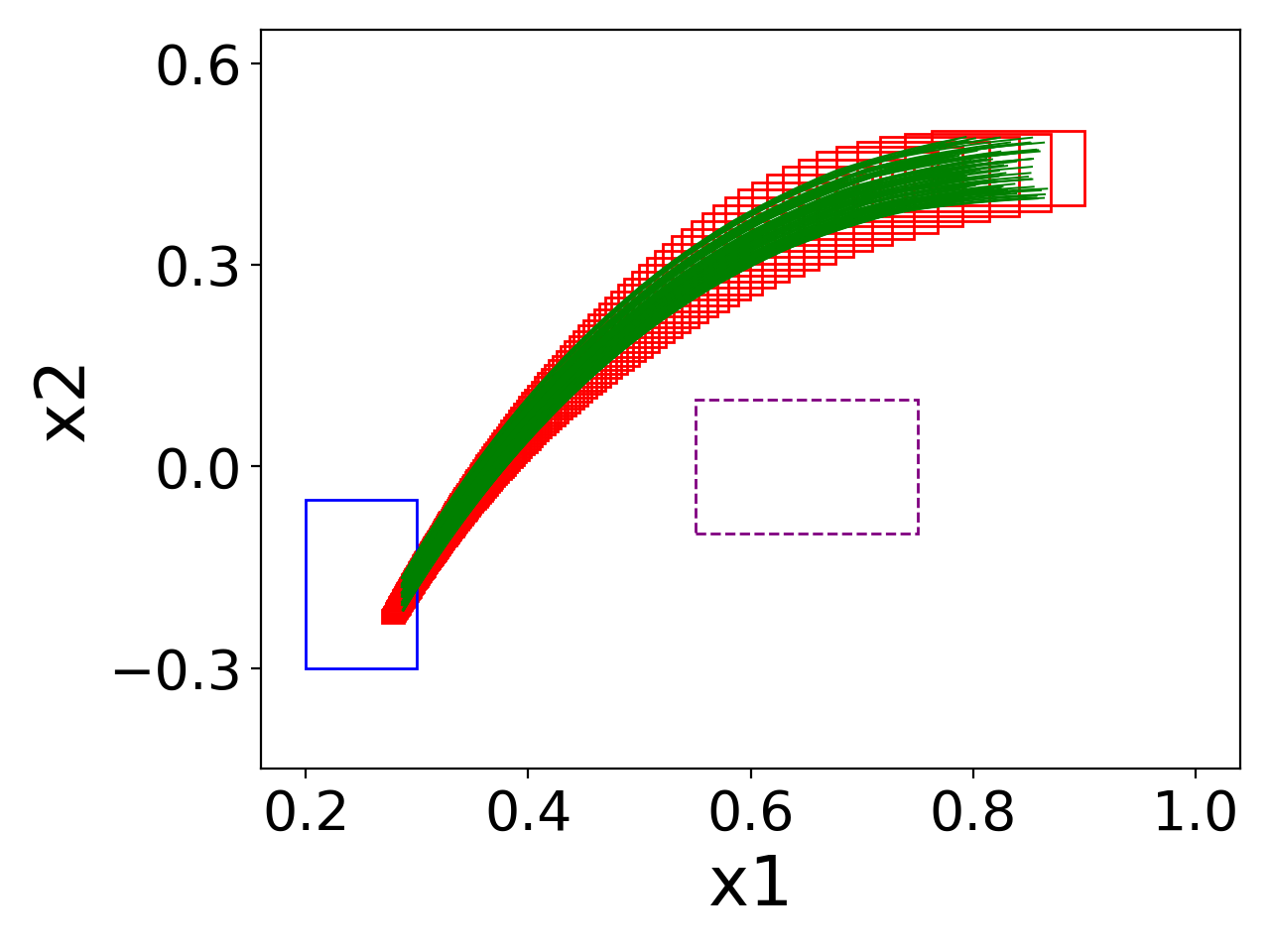}
			\caption{B3}
			\label{fig:reachnn_tightness_b3}
		\end{subfigure}
		\\
  \begin{subfigure}[b]{0.33\textwidth}
			\includegraphics[width=\textwidth]{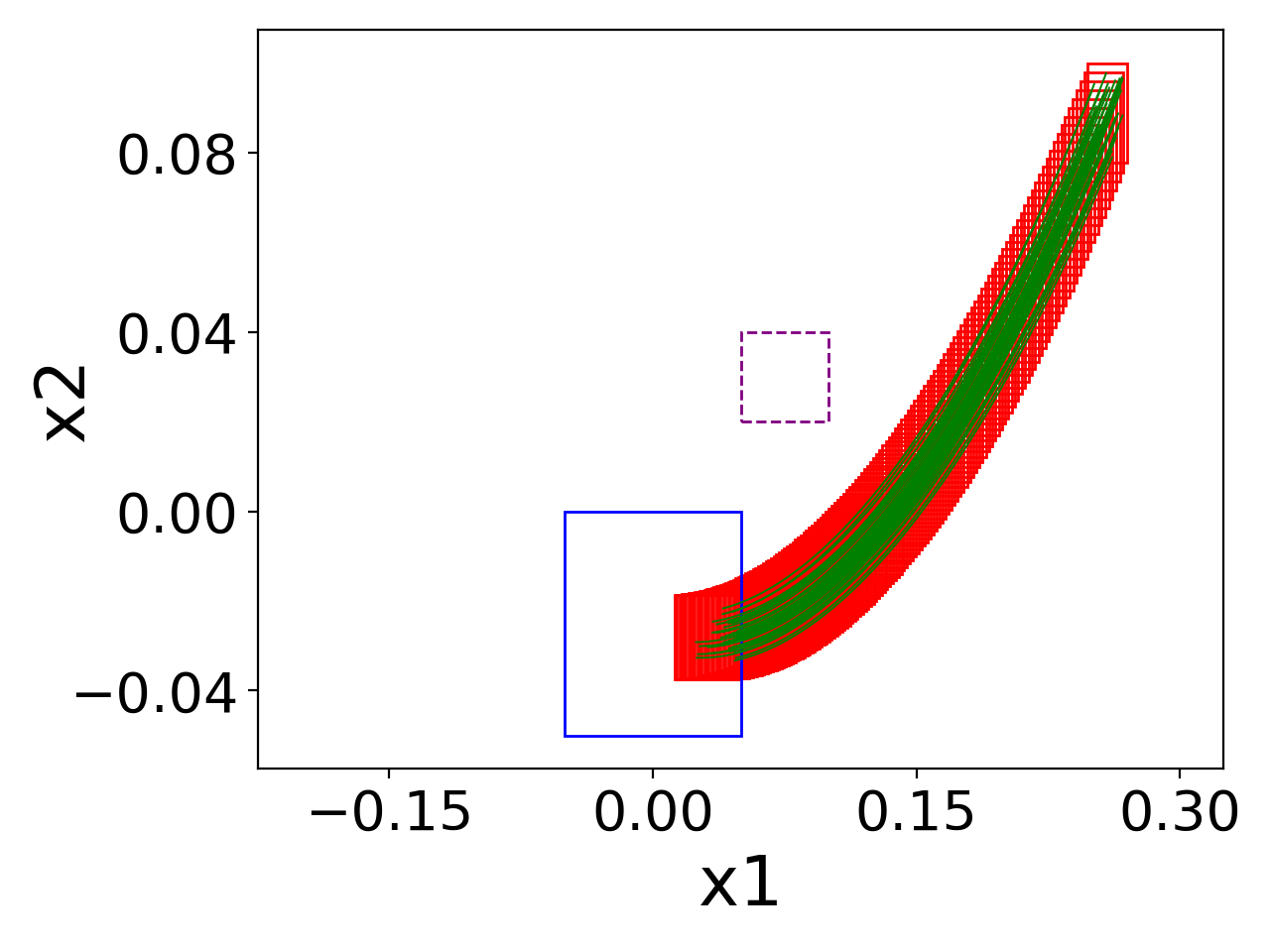}
			\caption{B4}
			\label{fig:reachnn_tightness_b4}
		\end{subfigure}&
		\begin{subfigure}[b]{0.33\textwidth}
			\includegraphics[width=\textwidth]{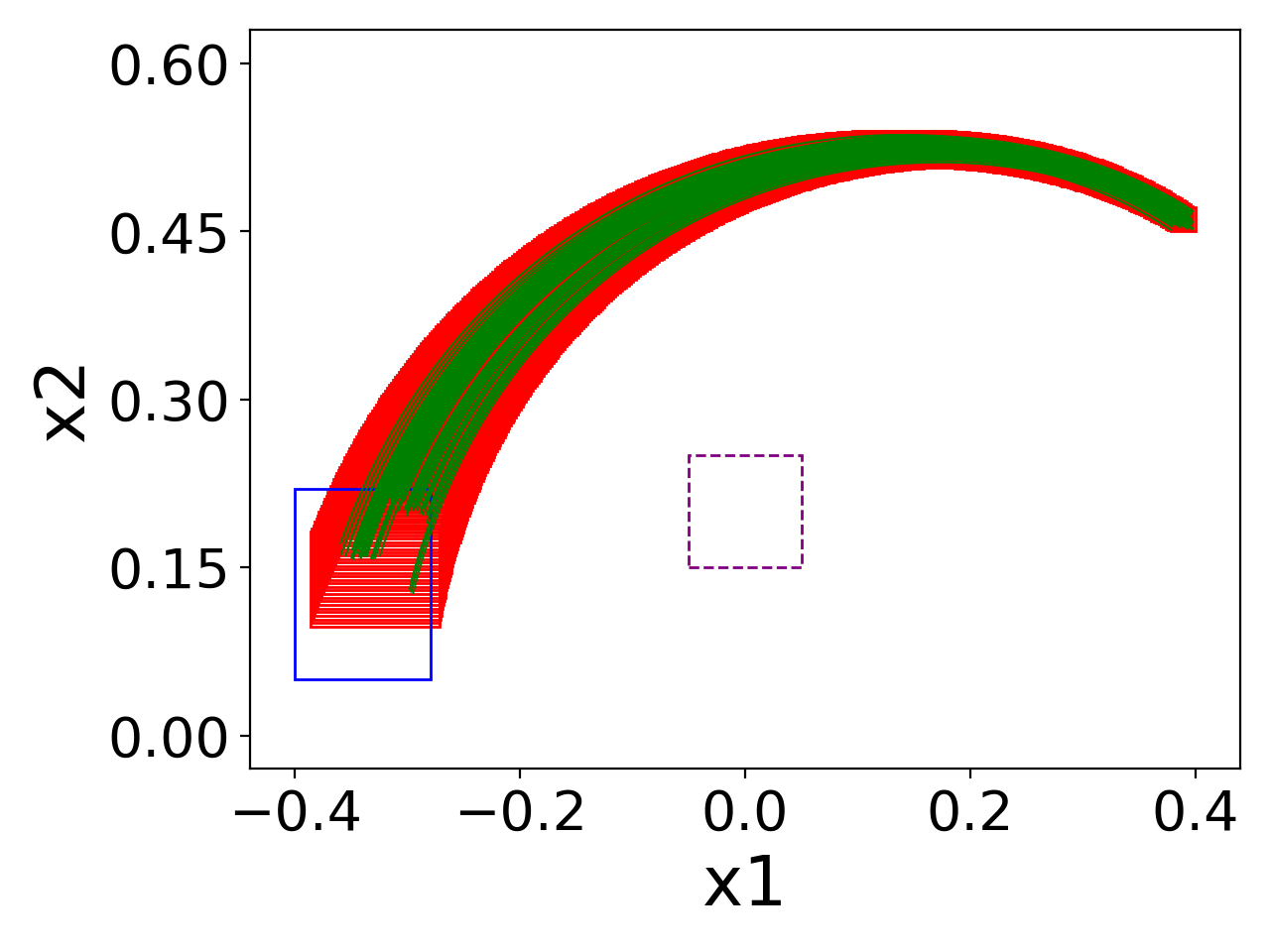}
			\caption{B5}
			\label{fig:reachnn_tightness_b5}
		\end{subfigure}&
		\begin{subfigure}[b]{0.33\textwidth}
			\includegraphics[width=\textwidth]{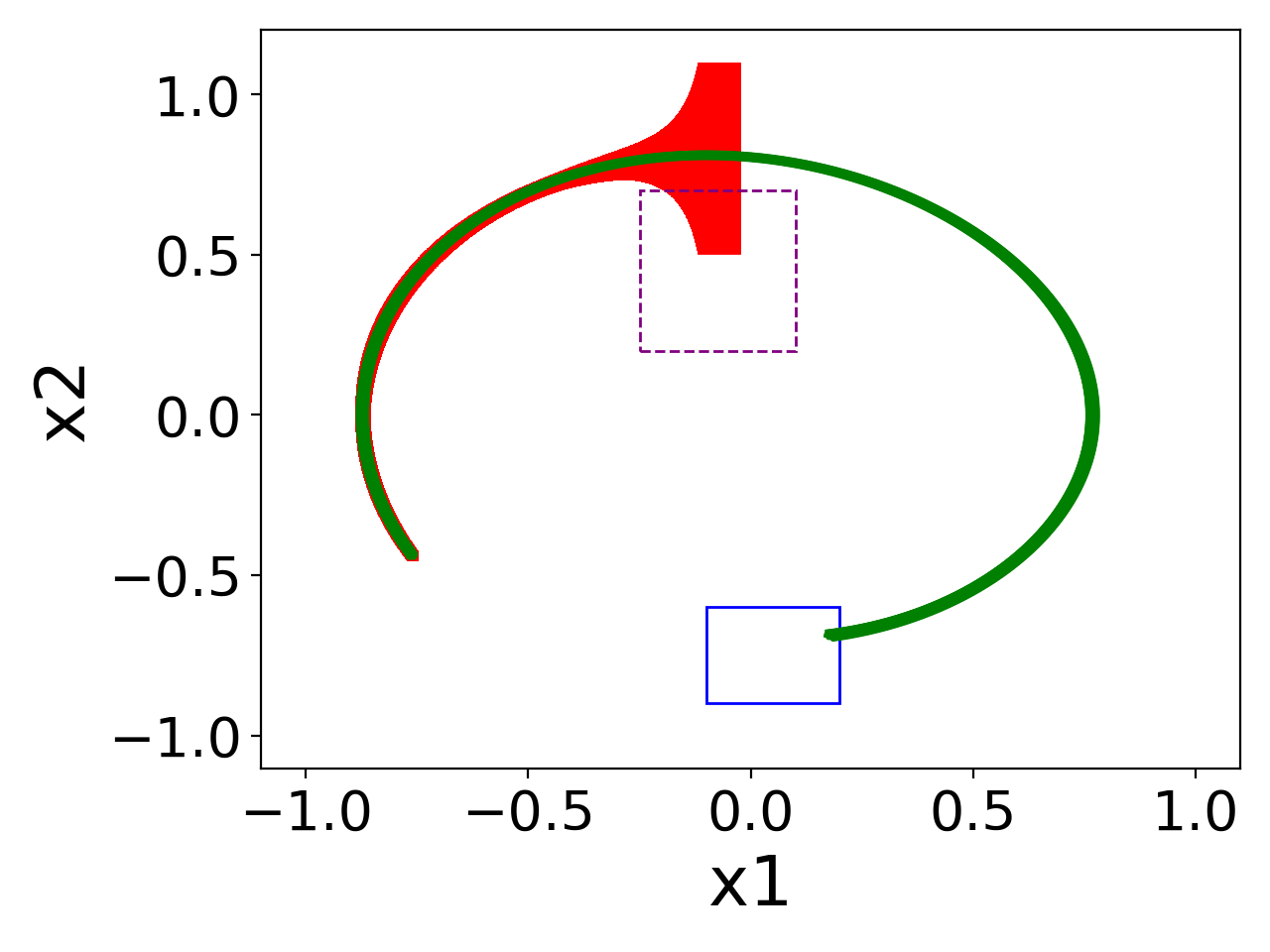}
			\caption{Tora}
			\label{fig:reachnn_tightness_tora}
		\end{subfigure}
		\\
		~~\begin{subfigure}[b]{0.33\textwidth}
			\includegraphics[width=0.95\textwidth]{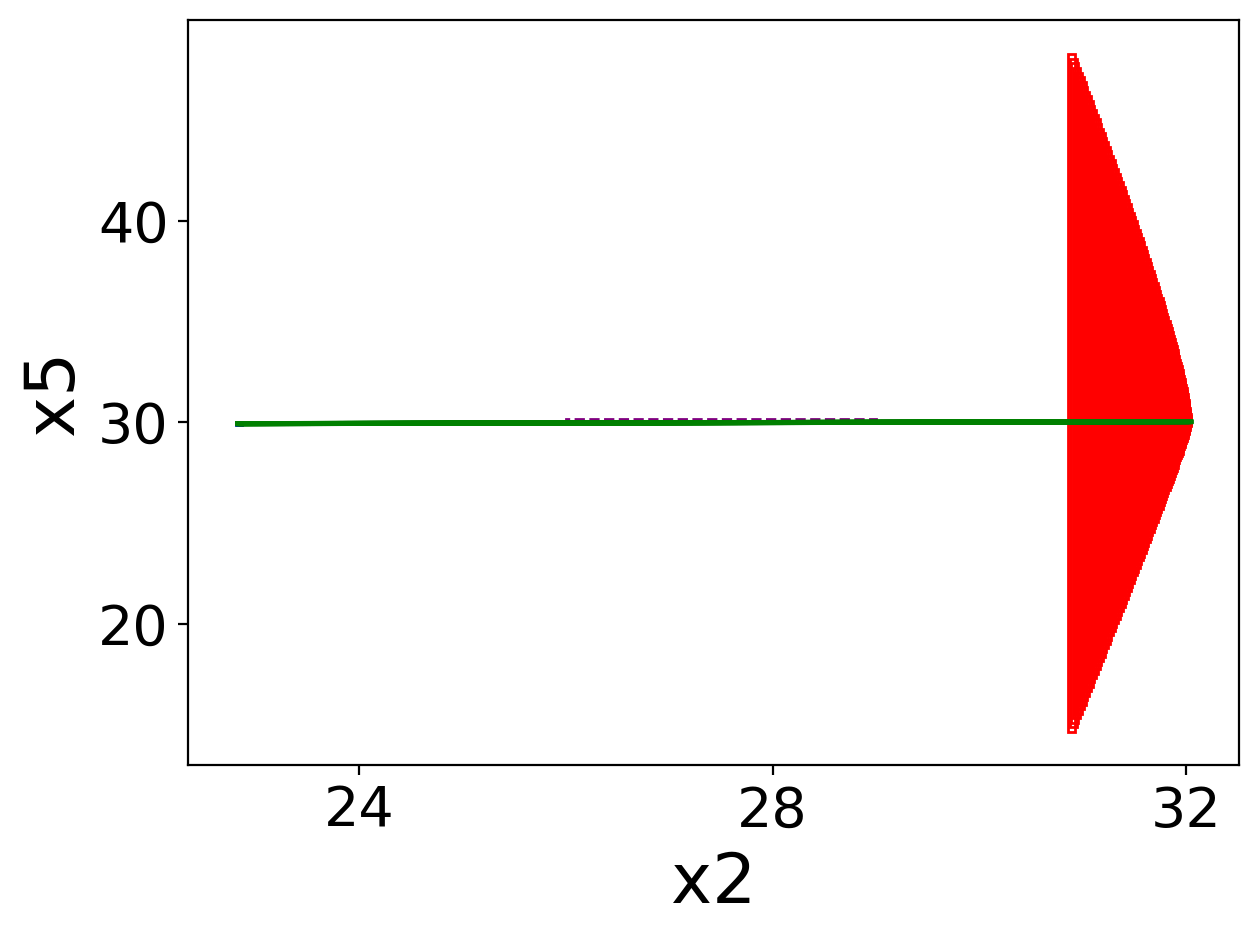}
			\caption{ACC}
			\label{fig:reachnn_tightness_acc}
		\end{subfigure}& &
		\\
	\end{tabular}
\end{center}
\vspace{-3ex}
\caption{Analysis results for ReachNN$^*$ on the larger networks with the Tanh activation function. red box: over-approximation set; green lines: simulation trajectories; blue box: goal region; 
purple dashed box: unsafe region.}
\label{fig:reachnn_tightness_result}
\end{figure}

\end{document}